\documentclass{article}

\usepackage[english]{babel}

\usepackage[letterpaper,top=2.5cm,bottom=2.5cm,left=3cm,right=3cm,marginparwidth=1.75cm]{geometry}

\usepackage[utf8]{inputenc}
\usepackage[T1]{fontenc}
\usepackage{microtype}
\usepackage[colorlinks=true, linkcolor=blue, citecolor=blue, urlcolor=blue]{hyperref}
\usepackage{url}            
\usepackage{booktabs}       
\usepackage{amsfonts}       
\usepackage{nicefrac}       
\usepackage{natbib}
\usepackage{tikz}
\usepackage[table]{xcolor}  
\usepackage{multirow} 	    
\usepackage{xspace}	        
\usepackage{soul}	        
\usepackage{colortbl}	    
\usepackage{enumitem}       
\usepackage{titletoc}       
\usepackage{titlesec}       
\usepackage{listings}       
\usepackage{hhline}
\usepackage{setspace}

\titlespacing{\paragraph}{0pt}{0.3em}{0.6em}

\usepackage{colors}
\usepackage{custom}
\usepackage{table_shortcuts}

\usepackage{amsmath,amsfonts,bm}

\def\1{\bm{1}}

\DeclareMathAlphabet{\mathsfit}{\encodingdefault}{\sfdefault}{m}{sl}
\SetMathAlphabet{\mathsfit}{bold}{\encodingdefault}{\sfdefault}{bx}{n}

\graphicspath{
	{../}
	{../figures}
}

\title{
Partition, Prompt, Aggregate:\\Statistical Self-Consistency in Language Models
}

\usepackage{authblk}
\stepcounter{footnote}
\addtocounter{footnote}{-1}
\author[1,2,3,4]{Patrik Wolf}
\author[2]{Thomas Kleine Buening}
\author[2]{Andreas Krause}
\author[1,3,4]{Celestine Mendler-D\"unner}

\affil[1]{Max Planck Institute for Intelligent Systems, T\"ubingen, Germany}
\affil[2]{ETH Z\"urich}
\affil[3]{ELLIS Institute, T\"ubingen}
\affil[4]{T\"ubingen AI Center}

\date{}
\begin{document}

\maketitle

\begin{abstract}
In-context learning is commonly interpreted as a form of conditional inference, in which the prompt specifies a context and the model's output is treated as an estimate of the corresponding conditional distribution. If this interpretation holds, then LLM estimates should satisfy basic probabilistic identities. In particular, the law of total probability asserts that prior-weighted conditional distributions aggregate into population-level marginals over any valid partition of the population. In this work, we investigate to what extent LLM estimates adhere to this self-consistency principle.
We use binary trees as an evaluation scaffold to recursively partition a population into increasingly fine-grained subpopulations. We then prompt LLMs with verbalized subpopulation descriptions in context, aggregate the resulting estimates back into population-level estimates, and compare them across partitions of varying granularity. 
Applying this protocol across problem domains and state-of-the-art frontier models, we show widespread violations of  basic consistency properties. An in-depth study of persona prompting reveals a pattern we call the \emph{macro fallacy}: estimates reconstructed from more fine-grained subpopulation responses are often better aligned with human reference data than direct population-level estimates. This effect persists across variations in tree structure and estimation task, and can be partially recovered through implicit prompting.
Together, these findings suggest that models possess relevant subpopulation knowledge but do not reliably propagate it into aggregate estimates. This gap establishes \emph{statistical self-consistency} as an unsaturated, reference-free criterion for evaluating LLMs.
\end{abstract}


\section{Introduction}\label{sec:introduction}

Theoretical arguments, social simulations, and many other use cases of LLMs often implicitly rely on the interpretation of in-context learning (ICL) as performing a form of conditional inference: the prompt specifies a context, and the model's output is taken either as a draw from or as a probabilistic estimate of the corresponding conditional distribution \citep{xie2022explanationincontextlearningimplicit, müller2023pfns4boincontextlearningbayesian, reuter2025transformerslearnbayesianinference}.
Yet the extent to which contemporary LLMs actually behave this way remains poorly understood.~\looseness=-1

Persona prompting offers one use case of in-context learning where the conditional-inference interpretation is particularly widespread. Here, a prompt is augmented with attributes of a person, such as age, occupation, or location, and the model is asked to respond from this perspective \citep{Argyle_2023}. For instance, we specify in the prompt that we refer to ``a person between 31 and 68 years old, living in the United States'' and ask about their opinion or perspective. Since any realistic set of attributes specifies a \emph{subpopulation} rather than a unique individual, responses to persona prompts are naturally interpreted in a distributional sense.
That is, they are expected to reflect the uncertainty induced by sampling an individual from the subpopulation described by the prompt, and therefore to approximate the corresponding conditional distribution over responses.
As a testament to this perspective, distributional alignment has become a popular evaluation metric~\citep{santurkar2023opinionslanguagemodelsreflect, meister2024benchmarkingdistributionalalignmentlarge}.~\looseness=-1

\begin{figure}
\centering
\input{teaser-figure}
\vspace{-1mm}
\caption{
\textbf{Reconstructing aggregate estimates from subpopulation estimates.}
A binary conditioning tree recursively partitions a base population into increasingly specific subpopulations.
For each subpopulation, we elicit an LLM estimate via in-context learning by providing a verbalized description of the split criteria as context.
The pink and violet branches provide examples of the conditioning attribute introduced at the corresponding split.
We then reconstruct population-level aggregate estimates at each layer from the subpopulation estimates using the law of total probability.
We compare the reconstructed aggregates against the model's direct marginal estimate to assess \emph{self-consistency}, and against ground-truth population data to assess \emph{alignment}.
}
\label{fig:teaser_figure}
\end{figure}
\looseness=-1

\newpage
Among the implications of the conditional-inference interpretation, one of the most fundamental is the law of total probability. For any partition of the conditioning event, the marginal distribution must equal the prior-weighted aggregate of the conditional distributions over that partition. 
In the persona-prompting setting, partitions arise naturally from conditions placed on individuals' attributes. For example, the population of all adults in the United States can be partitioned by age into a younger and an older subpopulation, each of which can be further split by employment status, and so on. The granularity of partitions can be controlled by the number of attributes used. More generally, partitions can be defined by any set of well-specified, complementary events. For instance, in a forecasting task, a future outcome can be estimated directly or by conditioning on mutually exclusive scenarios, such as whether inflation rises or falls, or whether a particular event occurs or not. Comparing the aggregated conditional forecasts over partitions against the direct estimate gives rise to natural self-consistency checks.~\looseness=-1

While self-consistency does not require any ground-truth reference data, persona prompting comes with the added advantage that conditioning attributes can be used both to extract human reference statistics and to construct natural-language descriptions for prompting the LLM~\citep{cruz2024evaluatinglanguagemodelsrisk}. The availability of ground-truth data offers a versatile testbed for ablations and for performing investigations into the patterns and origins of self-consistency violations.~\looseness=-1

Equipped with these tools, we stress-test the often \emph{implicit} assumption that in-context learning behaves as conditional inference. If LLMs indeed perform conditional inference, their estimates must be statistically self-consistent. In particular, direct estimates for a marginal must agree with estimates reconstructed by aggregating over any valid partition of that population according to the law of total probability. Systematic violations of these identities point to underexplored failure modes of LLMs.

\subsection{Our work}

We start with an in-depth study of persona prompting as an instance of in-context learning. Using \emph{binary trees}, we systematically construct personas that define subpopulations at increasing levels of specificity. Each node in the tree corresponds to a subpopulation, each split is defined by a binary attribute, and the collection of nodes at every level forms a valid partition of the base population.
\Cref{fig:teaser_figure} illustrates how these attribute splits give rise to our aggregation-based evaluation framework.
In particular, let $\mathcal{X}$ denote the base population and $\mathcal{P}$ a valid partition such that $\mathcal{X} = \cup_{\mathcal{S}\in\mathcal{P}} \;\mathcal{S}$. Then, for any event $E$ of interest, the \emph{law of total probability} asserts that~\looseness=-1 
\begin{equation}\label{eq:lotp_decomposition}
\underbrace{\mathbb{P}(E\given X \in\mathcal{X})}_{\text{aggregate}} = \sum_{\mathcal{S}\in\mathcal{P}} \underbrace{\gamma_\mathcal{S}}_{\text{prior}}\;\underbrace{\mathbb{P}(E\given X \in\mathcal{S})}_{\text{conditional}},
\end{equation}
where $\gamma_\mathcal{S}\in[0,1]$ denotes the relative size of partition element $\mathcal{S}$. Since every tree level induces such a partition, equation~\eqref{eq:lotp_decomposition} yields a family of equivalent estimators of the aggregate: the direct estimate at the root and, at each deeper level, a prior-weighted aggregate of conditional estimates. This equivalence is the core observation we build on. We replace the exact conditional probabilities with persona-prompted LLM estimates, combine them level-wise using the prior weights, and compare the resulting aggregate estimates across tree levels and against ground-truth data. From these simple comparisons, we make several intriguing observations:

\begin{enumerate}
\item We observe what we call the \emph{macro fallacy}. Directly retrieving an estimate of the aggregate $\mathbb{P}(E\given X \in\mathcal{X})$ from the LLM is systematically less accurate than reconstructing the aggregate from LLM estimates of more fine-grained conditionals and the corresponding priors via the law of total probability. The effect is robust across target quantities, models, and tree structures.
\item Through ablations with ground-truth survey data, we disentangle the quality of conditional estimates from the quality of subgroup priors. We find that increasing specificity improves subgroup-level conditional estimates. However, finer partitions also create smaller and more numerous subpopulations, making the corresponding priors harder to estimate accurately. 
\item Building on the success of explicit aggregation, we test whether models can recover part of this benefit within a single prompt. We introduce \emph{micro-to-macro prompting}, which asks the model to first reason about relevant subpopulations before returning an aggregate estimate. 
This lightweight intervention partially recovers the aggregation benefit with almost no additional overhead.
\end{enumerate}

Motivated by the inconsistency of aggregate LLM estimates across partitions, we introduce two complementary local self-consistency checks: \emph{split consistency} and \emph{order consistency}. Together, these checks assess statistical self-consistency across a broad class of aggregation procedures. We instantiate them across multiple problem domains and evaluate a wide range of models in settings spanning persona prompting and synthetic forecasting. We find widespread violations of self-consistency, even among state-of-the-art frontier models, and find no evidence that our consistency scores correlate with standard benchmark metrics.

Taken together, our work offers a structured framework for evaluating the statistical self-consistency of LLMs as a reference-free criterion complementary to task-specific metrics such as alignment. Our findings further suggest that explicit or implicit aggregation helps contemporary models consistently propagate subpopulation-level knowledge into aggregate estimates.~\looseness=-1


\section{Background and related work}\label{sec:related_work}

\paragraph{In-context learning as conditional inference.}
In-context learning refers to the ability of LLMs to learn how to perform a task from examples or instructions given in the prompt, without updating their internal parameters. A substantial body of work either implicitly or explicitly treats ICL as conditional inference \citep{xie2022explanationincontextlearningimplicit, müller2023pfns4boincontextlearningbayesian, reuter2025transformerslearnbayesianinference, decarvalho2025simplifyingbayesianoptimizationincontext, buening2026aligninglanguagemodelsuser}.
Many theoretical arguments build directly on this assumption. For example, finite-sample analyses formalize ICL as latent-task identification under a mixture distribution \citep{wies2023learnability}, while information-theoretic analyses decompose the error of a Bayes-optimal in-context predictor \citep{jeon2024information}. 
Mixed evidence exists on how closely LLMs satisfy this characterization, with martingale-based diagnostics revealing violations of Bayesian posterior prediction on synthetic tasks \citep{falck2024incontextlearninglargelanguage, chlon2025llmsbayesianexpectationrealization}. 

\paragraph{Persona conditioning and distributional alignment.}
A specific instantiation of ICL is persona conditioning. In this case, prompts are augmented with a possibly incomplete description of a person, and the model is asked to respond from that person’s perspective.
There are two main ways to operationalize this paradigm ~\citep{beck2024sensitivityperformancerobustnessdeconstructing,lutz2025promptmakespersonasystematic}. In \emph{persona prompting}~\citep{Argyle_2023}, the model is asked to role-play as the target person in the first person. 
In \emph{sociodemographic prompting}, the target is described in the third person as a structured list of attributes \citep{beck2024sensitivityperformancerobustnessdeconstructing}.
Large-scale persona libraries such as PersonaHub~\citep{ge2025scalingsyntheticdatacreation} offer examples of such personas and demonstrate the scalability of this paradigm.
A growing body of work uses persona conditioning to study LLMs' ability to express uncertainty. We refer to the review by \cite{meister2024benchmarkingdistributionalalignmentlarge} for different methods to elicit conditional probabilities.
Based on these probabilities, \citet{Argyle_2023} investigate algorithmic fidelity, while various benchmarking studies evaluate distributional alignment across diverse populations \citep[cf.][]{santurkar2023opinionslanguagemodelsreflect, durmus2024measuringrepresentationsubjectiveglobal, hu2026simbench}.
By comparing LLM outputs to statistics from different human populations, these works implicitly treat LLM outputs as equivalent to conditional distributions.
Beyond opinion distributions, \citet{marzoev2026openestimate} treat the LLM as a Bayesian expert, eliciting priors over unknown conditional statistics of filtered subpopulations and evaluating their accuracy and calibration against empirical ground truth.
\Citet{dominguez2024questioning} question whether this conditional inference interpretation is justified and demonstrate how it can lead to misguided claims about alignment.
Similarly, observations were made regarding the reliability of naive simulation \citep{Bisbee_Clinton_Dorff_Kenkel_Larson_2024, li2025llmgeneratedpersonapromise}, the sensitivity of model behavior to prompt formulation \citep{beck2024sensitivityperformancerobustnessdeconstructing, lutz2025promptmakespersonasystematic}, and degenerate output distributions, particularly after post-training \citep{cruz2024evaluatinglanguagemodelsrisk}. \citet{gu2026illusionstochasticityllms} aim to explain these failures by documenting a knowing--doing gap where models may represent or reason about target distributions yet fail to sample from them reliably, and \citet{barrie_cerina_2026} show that synthetic personas distort the joint structure of human belief systems even when univariate marginals look plausible. 
In our work, we propose a new test to systematically stress-test the distributional interpretation of ICL using basic axioms of probability theory. Inspired by prior work that uses survey data as a human baseline, we provide insights on how distributional alignment varies across statistically equivalent ways of expressing an estimator.~\looseness=-1

\paragraph{Self-consistency.}
In our work, we check whether LLMs satisfy basic probability axioms as a form of self-consistency. In the machine learning literature, the term self-consistency is used in several distinct ways. 
For example, \citet{wang2022self} use it to describe a sampling-and-voting procedure to stabilize responses. Other work studies consistency across semantically equivalent contexts or between generation and validation judgments \citep{elazar2021cons, li2024benchmarking}.
More closely related to our work is a strand that tests whether model outputs obey probabilistic coherence constraints under conjunction and Bayes’ rule \citep{zhu2024incoherent, paruchuri2024odds}.
In the context of language-model forecasting, \citet{paleka2025consistency} evaluate whether probabilistic forecasts satisfy logical relations such as negation, conjunction, conditional probability, and expected evidence, using consistency as an instantaneous proxy for forecasting quality.
More broadly, \citet{fluri2024superhuman} propose consistency checks as a way to reveal model failures when ground truth is unavailable or difficult to verify.
We share the view that cross-prompt consistency can expose failures without external labels and that LLM probability estimates should obey basic probabilistic identities if they are to be interpreted as coherent conditional distributions. 
Complementing prior work on consistency checks across related prompts, we construct a partition-based evaluation framework based on binary conditioning trees, whose levels define explicit partitions of a population or scenario space. This yields a structured family of constraints that vary in conditioning specificity and allows us to instantiate the law of total probability and the law of total variance.
Because these constraints follow directly from the conditional-inference interpretation of ICL, they apply to any setting in which split criteria can be verbalized as conditioning events and form purely statistical self-consistency checks without requiring external reference data.
\citet{pres2026consistency} take a complementary view, treating cross-prompt agreement as an explicit optimization target rather than a diagnostic.~\looseness=-1


\section{Expressing aggregates through partitions}\label{sec:census}

Our evaluation framework rests on the law of total probability \citep[cf.][]{bertsekas2008introduction}, which implies that an aggregate probability admits an equivalent expression for every valid partition of the population.
Let $\mathcal{X}$ denote a base population, and let $\mathcal{P}$ be a valid partition of $\mathcal{X}$ into mutually exclusive and jointly exhaustive subpopulations.
Then, for any event $E$ of interest, the law of total probability asserts that
\begin{equation}\label{eq:lotp_general_repeat}
\mathbb{P}(E\given X \in\mathcal{X}) = \sum_{\mathcal{S}\in\mathcal{P}} \gamma_\mathcal{S}\;\mathbb{P}(E\given X \in\mathcal{S}),
\end{equation}
where $\gamma_\mathcal{S}$ denotes the fraction of individuals from $\mathcal{X}$ belonging to subpopulation $\mathcal{S}$.
This equivalence relies crucially on partition validity: the subpopulations should not overlap or leave parts of the base population uncovered. To systematically construct such partitions, we use binary conditioning trees.

\subsection{Binary conditioning trees}

A binary conditioning tree (BCT) represents a recursive refinement of a base population into increasingly fine-grained subpopulations.\footnote{We refer to a population in a statistical sense as an arbitrary distribution over some instance space $\mathcal{X}$.}
The root node corresponds to the base population $\mathcal{X}$, and each descendant node corresponds to a subpopulation obtained by conditioning on additional binary attributes. Nodes at the same level of the tree form a valid partition of $\mathcal{X}$, while deeper levels represent increasingly fine-grained refinements of the population. 

\paragraph{Tree definition and split criteria.}
Formally, we define a BCT of depth $L$ as a tuple $\mathcal{T} = (\mathcal{X}, \{\phi_1,\dots,\phi_L\})$, where each level $\ell\in\{1,\dots,L\}$ is associated with a binary split function $\phi_\ell:\mathcal{X}\rightarrow\{0,1\}$. Unlike general decision trees \citep[cf.][]{decision_tree}, we use one shared split function per level.
This split function is applied uniformly to every node at level $\ell - 1$. Within each such node, data points that satisfy $\phi_\ell(x) = 0$ are assigned to the left child, while data points with $\phi_\ell(x) = 1$ are assigned to the right child. 
Thus, the tree induces a different partition of $\mathcal{X}$ at every level. 
Starting from the trivial partition $\mathcal{P}_0 = \{\mathcal{X}\}$ at the root, the partition at level $\ell\geq 1$ is recursively defined as $\mathcal{P}_\ell = \cup_{\mathcal{S}\in\mathcal{P}_{\ell - 1}} \{\mathcal{S}_L, \mathcal{S}_R\}$ with $\mathcal{S}_L\triangleq\{x\in\mathcal{S}:\phi_\ell(x) = 0\}$ and $\mathcal{S}_R\triangleq\{x\in\mathcal{S}:\phi_\ell(x) = 1\}$. Since every split assigns each element of a node to exactly one child, the subpopulations in $\mathcal{P}_\ell$ are pairwise disjoint and satisfy $\mathcal{X} = \cup_{\mathcal{S}\in\mathcal{P}_\ell}\,\mathcal{S}$ for any $\ell\geq 0$. Thus, the equivalence in \eqref{eq:lotp_general_repeat} can be evaluated for any partition $\mathcal P_\ell$. Beyond these level-wise partitions, the recursive structure allows us to evaluate LLM estimates across branches and on recursively defined subtrees. Thus, BCTs provide a structured way to compare direct aggregate estimates with estimates reconstructed from increasingly fine-grained conditional estimates.

\subsection{Human population as a running example}

\begin{figure}[t]
\centering
\begin{tikzpicture}[
  font=\fontsize{6}{7}\selectfont,
  level 1/.style={sibling distance=62mm, level distance=27mm},
  level 2/.style={sibling distance=31mm, level distance=27mm},
  edge from parent/.style={
    draw=black!55,
    line width=0.9pt,
    edge from parent path={
      (\tikzparentnode.south) -- ++(0,-3mm) -| (\tikzchildnode.north)
    }
  },
  edgelabel/.style={
    fill=white,
    inner sep=2pt,
    align=center,
  },
  bottom left edge/.style={
    edge from parent path={
      (\tikzparentnode.south) -- ++(0,-3.5mm)
      -| ([xshift=-3mm]\tikzchildnode.north)
    }
  },
  bottom right edge/.style={
    edge from parent path={
      (\tikzparentnode.south) -- ++(0,-3.5mm)
      -| ([xshift=3mm]\tikzchildnode.north)
    }
  },
  treetitle/.style={
    draw=none,
    fill=none,
    align=center,
  },
  treenode/.style={
    draw=black!65,
    rounded corners=3pt,
    line width=0.7pt,
    fill=white,
    align=center,
    inner xsep=3pt,
    inner ysep=3pt,
    text width=27mm,
  },
]
\begin{scope}


\node[treenode] {   
\centering
\begingroup
\tikzset{
  every path/.style={},
  every node/.style={},
  every picture/.style={},
}%
\pgfsetcornersarced{\pgfpointorigin}%
\resizebox{\linewidth}{!}{%
\input{figures/tree_green_no_aggr/shared_binned_hist_0.pgf}
}
\endgroup
}
   child {
      node[treenode] {         
\centering
\begingroup
\tikzset{
  every path/.style={},
  every node/.style={},
  every picture/.style={},
}%
\pgfsetcornersarced{\pgfpointorigin}%
\resizebox{\linewidth}{!}{%
\input{figures/tree_green_no_aggr/shared_binned_hist_1.pgf}
}
\endgroup
      }
         child {
            node[treenode] {               
\centering
\begingroup
\tikzset{
  every path/.style={},
  every node/.style={},
  every picture/.style={},
}%
\pgfsetcornersarced{\pgfpointorigin}%
\resizebox{\linewidth}{!}{%
\input{figures/tree_green_no_aggr/shared_binned_hist_3.pgf}
}
\endgroup
            }
            edge from parent[bottom left edge] node[edgelabel, pos=0.248] {\textbf{unemployed}}
         }
         child {
            node[treenode] {               
\centering
\begingroup
\tikzset{
  every path/.style={},
  every node/.style={},
  every picture/.style={},
}%
\pgfsetcornersarced{\pgfpointorigin}%
\resizebox{\linewidth}{!}{%
\input{figures/tree_green_no_aggr/shared_binned_hist_4.pgf}
}
\endgroup
            }
            edge from parent[bottom right edge] node[edgelabel, pos=0.248] {\textbf{employed}}
         }
      edge from parent node[edgelabel, pos=0.3] {\textbf{young \& old}}
   }
   child {
      node[treenode] {         
\centering
\begingroup
\tikzset{
  every path/.style={},
  every node/.style={},
  every picture/.style={},
}%
\pgfsetcornersarced{\pgfpointorigin}%
\resizebox{\linewidth}{!}{%
\input{figures/tree_green_no_aggr/shared_binned_hist_2.pgf}
}
\endgroup
      }
         child {
            node[treenode] {               
\centering
\begingroup
\tikzset{
  every path/.style={},
  every node/.style={},
  every picture/.style={},
}%
\pgfsetcornersarced{\pgfpointorigin}%
\resizebox{\linewidth}{!}{%
\input{figures/tree_green_no_aggr/shared_binned_hist_5.pgf}
}
\endgroup
            }
            edge from parent[bottom left edge] node[edgelabel, pos=0.248] {\textbf{unemployed}}
         }
         child {
            node[treenode] {               
\centering
\begingroup
\tikzset{
  every path/.style={},
  every node/.style={},
  every picture/.style={},
}%
\pgfsetcornersarced{\pgfpointorigin}%
\resizebox{\linewidth}{!}{%
\input{figures/tree_green_no_aggr/shared_binned_hist_6.pgf}
}
\endgroup
            }
            edge from parent[bottom right edge] node[edgelabel, pos=0.248] {\textbf{employed}}
         }
      edge from parent node[edgelabel, pos=0.3] {\textbf{working age}}
   }
;

\path (current bounding box.south east) coordinate (Lse);
\path (current bounding box.north east) coordinate (Lne);

\end{scope}


\begin{scope}[xshift=0mm, yshift=55mm]

\end{scope}


\definecolor{gtbin0}{RGB}{46,107,142}
\definecolor{gtbin1}{RGB}{30,152,138}
\definecolor{gtbin2}{RGB}{75,194,108}
\definecolor{gtbin3}{RGB}{189,222,38}


\node[anchor=north] at (-4.34, 1.24) {
\centering
\begingroup
\tikzset{
  every path/.style={},
  every node/.style={},
  every picture/.style={},
}%
\pgfsetcornersarced{\pgfpointorigin}%
\resizebox{38mm}{!}{%
\input{figures/tree_green_no_aggr/vertical_legend.pgf}
}
\endgroup
};


\end{tikzpicture}
\caption{
\textbf{Ground truth and LLM-elicited income distributions across partitions of the population.}
We construct a binary conditioning tree from the 2024 ACS population by recursively slicing the population with sociodemographic attributes. The first split criterion corresponds to age (\texttt{AGEP}) and the second to class of worker (\texttt{COW}).
Each node reports the distribution of yearly income over four bins.
For each bin, the solid bar shows the survey-based ground-truth distribution, while the hatched bar shows the LLM-estimated conditional distribution.
}
\label{fig:acs:income_distribution_tree:combined}
\end{figure}

As a running example, we construct a BCT over the {U.S.} population. We use the 2024 American Community Survey (ACS)\footnote{\url{https://www.census.gov/programs-surveys/acs}} as a human baseline. The survey data is public and contains responses from 3.8 million individuals in the {U.S.} and records demographic, employment, health, and related attributes. Due to its large sample size, rigorous sampling design, and population weighting, the ACS data offers one of the best representations of the {U.S.} population. We treat it as the ground-truth reference population for our study.
To construct partitions, we split the population along the ACS demographic attributes age (\texttt{AGEP}) and class of worker (\texttt{COW}). These splits are chosen to be informative about the target variable. In particular, our split-selection procedure is motivated by the regression-tree objective of reducing the prior-weighted within-subgroup variance of yearly income, while keeping subgroup priors reasonably balanced.
The first split is $\phi_1(x) = \mathbbold{1}(31\leq\text{age}(x)\leq 68)$, which separates individuals aged 31 to 68 from the rest of the population. The second split is $\phi_2(x) = \mathbbold{1}(x\text{ is employed or self-employed})$, which further refines each age group by employment status. 
Each split narrows the population to more specific subgroups. The solid bars in \cref{fig:acs:income_distribution_tree:combined} show the binned income distribution for each of the subgroups. These distributions vary substantially across subpopulations. For preprocessing details, including the ACS attribute mappings and income binning, we refer to \cref{subsec:app:experiment_details:bct}.

\subsection{Verbalizing split criteria for in-context learning}

The goal of this paper is to investigate to what extent subpopulation estimates elicited from LLMs through in-context learning adhere to basic probabilistic identities.
To elicit an LLM estimate for a node in the BCT, we derive a natural-language description of the corresponding subpopulation and provide it as context to the LLM. By construction, each subpopulation is uniquely characterized by the split criteria along the path from the root to that node, so conditioning on a node amounts to verbalizing this sequence of attribute restrictions. For the ACS experiments, we use \texttt{folktexts}~\citep{cruz2024evaluatinglanguagemodelsrisk} to implement this in a systematic way.

More specifically, the base population is specified by a string $C_0$, and each split appends an additional string describing the corresponding branch, as is common in persona prompting~\citep{Argyle_2023}. For example, the individuals in the lower-right leaf of \cref{fig:acs:income_distribution_tree:combined} are represented by the concatenated context:~\looseness=-1
\begin{align}
C = C_0 + C_1 + C_2\hspace*{7mm}\text{with}\hspace*{-3mm} &&
C_0 &= \text{``A person living in the U.S.''}\\
&&C_1 &= \text{``A person between 31 and 68 years old.''}\\
&&C_2 &= \text{``A person who is employed or self-employed.''}
\end{align}
Here, the displayed strings are shortened for readability. For the ACS data, we describe the exact prompt template in \cref{subsec:app:sec:prompt_template:socio}. If the context consists only of $C_0$, we refer to the resulting estimation method as \emph{direct prompting.} The base context $C_0$ may also be empty, in which case the base population is left implicit.

\subsection{Constructing LLM estimates}

For our ACS case study, we instantiate the event $E$ of interest as a threshold event of the income distribution. Specifically, for a threshold $\tau\in\mathbb{R}$, we take $E_\tau = \{Y > \tau\}$, where $Y$ denotes yearly income. For any subpopulation $\mathcal{S}\subseteq\mathcal{X}$, we define the corresponding tail probability as
\[T_\tau(\mathcal{S})\triangleq\mathbb{P}(Y > \tau\given X\in\mathcal{S}).\] We denote the LLM estimate of this quantity by $\hat{T}_\tau(\mathcal{S})$. Varying $\tau$ lets us probe binary target  distributions with different degrees of entropy.

\paragraph{Conditional estimates.}
For every node $\mathcal{S}$ of the tree, we elicit $\hat{T}_\tau(\mathcal{S})$ by prompting the model with the verbalized description of the corresponding subpopulation in the context. 
Concretely, for $\tau=50\,000$, we ask questions of the form ``What is the probability that the income of this person is above 50\,000 USD?’’ 
The full prompt template is provided in \cref{subsec:app:sec:prompt_template:socio}.

\paragraph{Priors.}
Aggregating conditional estimates according to the law of total probability also requires subgroup priors. To elicit them from the LLM, we use a level-wise elicitation protocol.
For each tree level, we query the model multiple times for the prior distribution over the corresponding partition, renormalize the raw outputs, and average the resulting estimates. We denote the resulting LLM-estimated prior of subpopulation $\mathcal{S}$ by $\hat{\gamma}_\mathcal{S}$. The detailed elicitation procedure is described in \cref{subsec:app:prior_elicitation}. 

\paragraph{Reconstructed aggregates.}
Combining the elicited conditionals and priors yields, for each level $\ell$, an LLM-reconstructed aggregate estimate
\begin{equation}\label{eq:llm_reconstructed_aggregate}
\hat{T}_{\tau, \text{agg}}^{(\ell)}(\mathcal{X})\triangleq\sum_{\mathcal S\in\mathcal{P}_\ell(\mathcal{X})} \hat{\gamma}_\mathcal{S}\,\hat{T}_\tau(\mathcal{S}).
\end{equation}
Under the conditional-inference interpretation of in-context learning, the elicited quantities $\hat{T}_\tau(\mathcal{S})$ and $\hat{\gamma}_\mathcal{S}$ should satisfy the same aggregation identity as their survey-based counterparts in equation~\eqref{eq:lotp_general_repeat}.
In particular, reconstructed aggregates should agree across levels of the tree and with the direct root-level estimate obtained at $\ell=0$. \Cref{fig:acs:income_distribution_tree:combined:aggr_only} shows population-level distributions reconstructed from the tree in~\cref{fig:acs:income_distribution_tree:combined} by aggregating subgroup estimates at each level according to the law of total probability. This provides a visual illustration of the consistency requirement in \eqref{eq:lotp_general_repeat}. 


\section{The macro fallacy}\label{sec:macro_fallacy}

We now use binary conditioning trees to construct partitions for comparing statistically equivalent ways of estimating the same population-level quantity. This allows us to scrutinize LLM estimates obtained at different levels of specificity.
In this section, we compare each of these estimators against the survey-based ground-truth aggregate statistic. Surprisingly, we find that direct root-level estimates are less aligned with our human baseline than reconstructed estimates at lower levels---we term this the \emph{macro fallacy.}

\begin{figure}[t]
\centering
\input{figures/3_aggregation_relative_bars.pgf}
\begin{subfigure}[t]{0.48\linewidth}
\vspace*{-4mm}
\caption{
GPT-5.4 with varying income threshold.
}
\label{subfig:aggregation_refinement:llm}
\end{subfigure}
\hfill
\begin{subfigure}[t]{0.48\linewidth}
\vspace*{-4mm}
\caption{
Threshold $\tau = 40\mathrm{k}$ USD for varying models. 
}
\label{subfig:aggregation_relative_models:llm}
\end{subfigure}
\caption{
\textbf{Relative gain of reconstructed aggregates over direct prompting.}
We report the relative alignment gain of the reconstructed estimate, defined as $1 - \mathrm{AErr}(\ell)/\mathrm{AErr}(0)$, as a function of tree depth $\ell$, where $\mathrm{AErr}(0)$ denotes the alignment error of direct aggregate prompting. Positive values indicate that reconstructing the population-level estimate from subgroup estimates improves alignment relative to direct prompting, while negative values indicate worse alignment.
The left panel fixes the model to GPT-5.4 and varies the income threshold $\tau$, and thus the entropy of the estimated distribution. The right panel fixes the threshold to $\tau = 40\mathrm{k}$ USD and varies the model. The black error bars indicate 90\,\% confidence intervals computed by bootstrapping with 1000 samples. The orange curve shows the average aggregation gain. Across tasks, models, and tree levels, reconstructed aggregates consistently better represent the human reference data compared to direct prompting.~\looseness=-1
}
\label{fig:aggregation_refinement:llm_prior}
\end{figure}

\subsection{Alignment of reconstructed aggregate estimates}\label{subsec:aggregated_alignment:llm_prior}

For each level of the tree, we compare the reconstructed population-level estimate $\hat{T}_{\tau,\text{agg}}^{(\ell)}(\mathcal{X})$ with the corresponding ACS ground-truth tail probability $T_\tau(\mathcal{X})$. For a population $\mathcal{X}$ and the partition $\mathcal{P}_\ell(\mathcal{X})$ induced by level $\ell$, we define the aggregate alignment error as
\begin{equation}\label{eq:agg_error}
\text{AErr}(\ell)\triangleq\big\vert\ssp T_\tau(\mathcal{X}) - \hat{T}_{\tau, \text{agg}}^{(\ell)}(\mathcal{X})\ssp\big\vert.
\end{equation}
At level $\ell=0$, the definition corresponds to a classical notion of alignment between the model and the data \citep{meister2024benchmarkingdistributionalalignmentlarge}. For deeper layers, the LLM estimate is instead obtained by first eliciting subgroup-level conditionals and then aggregating them back to the population level. Thus, $\text{AErr}(\ell)$ measures how well the same target quantity is recovered when the prompt is posed at the granularity induced by tree level $\ell$. For estimates adhering to probability axioms, the error should not depend on $\ell$. However, as we will show, this is generally not the case.~\looseness=-1 

\Cref{fig:aggregation_refinement:llm_prior} shows the empirically measured alignment error on our income prediction task for different levels of the tree across models and problems. The error is plotted relative to the error of direct prompting. We observe large variations across tree levels while the error of reconstructed aggregate estimates is consistently lower at deeper levels than at the root. This holds for target distributions of varying entropy. Notably, for these experiments both the subgroup priors and subgroup conditionals are elicited from the LLM and no external knowledge is injected. The gain is purely from choosing a different way of prompting for the same quantity.
We refer to this phenomenon as the \emph{macro fallacy}---the tendency for direct aggregate prompting to be less reliable than eliciting component estimates and explicitly recombining them. We report additional results across income thresholds and models in \cref{subsec:app:normalized_alignment_error}, with experimental details provided in \cref{subsec:app:exp_details:alignment_error}.~\looseness=-1

\subsection{Disentangling error sources in aggregate estimates}\label{subsec:disentangling_error_sources}

Deviations in the alignment error across levels in \cref{fig:aggregation_refinement:llm_prior} indicate a failure of the model estimates to compose consistently under the law of total probability. The average aggregation gain follows a concave trend. Thus, the ability of LLMs to represent the population grows with depth until it starts declining as the group descriptions become more complex.
In this section, we trace where these errors come from. The LLM estimate $\hat{T}_{\tau,\text{agg}}^{(\ell)}(\mathcal{X})$ in \eqref{eq:llm_reconstructed_aggregate} uses LLM-estimated subgroup priors $\hat{\gamma}_\mathcal{S}$ as well as LLM-estimated conditionals $\hat{T}_\tau(\mathcal{S})$. To disentangle these two sources of error, we examine the quality of the two components separately.~\looseness=-1

\paragraph{Conditional estimates.}
\Cref{subfig:disentangling:conditionals} shows that the node-wise alignment error improves substantially from the root to the first two refinement levels. At level 3, the prior-weighted node-wise alignment error remains lower than at the root, yet the improvement is not uniform across subgroups. In particular, two nodes with negligible prior mass exhibit large alignment errors, but contribute little to the prior-weighted average.
To better understand this effect, \cref{subsec:appendix:gt_prior_aggregation} provides a supplementary analysis based on an oracle-prior regime, in which the LLM-estimated priors are replaced by the ground-truth subgroup priors $\gamma_\mathcal{S}$ obtained from the survey data. This removes uncertainty about the population composition and isolates the error due to the conditional estimates. We find that the corresponding estimates closely mirror the results in \cref{fig:aggregation_refinement:llm_prior}. In both regimes, population-level estimates reconstructed from more specific subgroup conditionals improve over direct prompting, but the improvement is not monotone with increasing tree depth.
This suggests that finer decompositions can improve aggregate estimation, while also highlighting that arbitrarily fine partitions are not necessarily beneficial.~\looseness=-1

\paragraph{Priors.}
While deeper partitions can yield more informative conditional estimates, they also require estimating a larger number of increasingly specific subgroup priors.
The concave shape of the average aggregation gain in \cref{fig:aggregation_refinement:llm_prior} suggests that the quality of priors degrades with depth. 
\Cref{subfig:disentangling:priors}, together with the more detailed analysis in \cref{subsec:appendix:llm_prior_analysis}, confirms this by directly comparing LLM-estimated priors with ground-truth statistics.
Across models, the total variation distance between LLM-estimated and survey-based priors increases as the tree is refined to deeper levels.
In practice, aggregation therefore involves a trade-off between improved conditional specificity and more difficult population weighting.~\looseness=-1

\future{Can we quantify the contribution of the error from the conditionals and from the priors → does one contribution dominate? Maybe add details to \cref{sec:appendix:disentangling_cond_priors}.}

\future{It would be nice if one could use self-consistency checks to find the optimal tree depth (recall that macro fallacy improvements are non-monotonic).}

\begin{figure}[t]
\centering
\begin{subfigure}[t]{0.32\linewidth}
\begin{tikzpicture}[
  font=\fontsize{6}{7}\selectfont,
  level 1/.style={sibling distance=24mm, level distance=7mm},
  level 2/.style={sibling distance=12mm, level distance=7mm},
  level 3/.style={sibling distance=6mm, level distance=7mm},
  every label/.style={
    font=\scriptsize,
    inner sep=2pt,
    outer sep=1.5pt,
  },
  label distance=1.5pt,
  edge from parent/.style={
    draw=black!65,
    line width=1.2pt,
  },
  treetitle/.style={
    draw=none,
    fill=none,
    align=center,
  },
  treenode/.style={
    circle,
    draw=black!65,
    line width=0.7pt,
    fill=#1,
    align=center,
    inner sep=2pt,
    minimum size=4mm,
  },
  treenode/.default=white,
]
\begin{scope}

\node[treenode=yellow!88!orange] {}
   child {
      node[treenode=darkgreen!13!green] {}
         child {
            node[treenode=darkgreen!7!green] {}
               child {
                  node[treenode=green!44!yellow, label={below:\scriptsize 33\%}] {}
                  edge from parent node {}
               }
               child {
                  node[treenode=orange!27!red, label={below:\scriptsize 0\%}] {}
                  edge from parent node {}
               }
            edge from parent node {}
         }
         child {
            node[treenode=darkgreen!45!green] {}
               child {
                  node[treenode=green!82!yellow, label={below:\scriptsize 10\%}] {}
                  edge from parent node {}
               }
               child {
                  node[treenode=darkgreen!5!green, label={below:\scriptsize 9\%}] {}
                  edge from parent node {}
               }
            edge from parent node {}
         }
      edge from parent node {}
   }
   child {
      node[treenode=darkgreen!59!green] {}
         child {
            node[treenode=darkgreen!51!green] {}
               child {
                  node[treenode=green!17!yellow, label={below:\scriptsize 7\%}] {}
                  edge from parent node {}
               }
               child {
                  node[treenode=orange!30!red, label={below:\scriptsize 0\%}] {}
                  edge from parent node {}
               }
            edge from parent node {}
         }
         child {
            node[treenode=darkgreen!54!green] {}
               child {
                  node[treenode=yellow!90!orange, label={below:\scriptsize 12\%}] {}
                  edge from parent node {}
               }
               child {
                  node[treenode=darkgreen!47!green, label={below:\scriptsize 29\%}] {}
                  edge from parent node {}
               }
            edge from parent node {}
         }
      edge from parent node {}
   }
;

\end{scope}


\begin{scope}[xshift=-20mm, yshift=4mm]

\def\barwidth{4}
\def\barheight{0.32}
\def\nsteps{200}
\def\maxval{0.50}

\pgfmathtruncatemacro{\nstepsmone}{\nsteps-1}
\foreach \i in {0,...,\nstepsmone} {

    \pgfmathsetmacro{\x}{\i/\nsteps * \barwidth}
    \pgfmathsetmacro{\val}{\i/\nsteps * \maxval}

    \ifdim \val pt < 0.062501pt
        \pgfmathsetmacro{\rawmix}{(1 - \val/0.0625) * 100}
        \pgfmathtruncatemacro{\mix}{\rawmix}
        \colorlet{barcolor}{darkgreen!\mix!green}
    \else
        \ifdim \val pt < 0.125001pt
            \pgfmathsetmacro{\rawmix}{(1 - (\val - 0.0625)/0.0625) * 100}
            \pgfmathtruncatemacro{\mix}{\rawmix}
            \colorlet{barcolor}{green!\mix!yellow}
        \else
            \ifdim \val pt < 0.250001pt
                \pgfmathsetmacro{\rawmix}{(1 - (\val - 0.125)/0.125) * 100}
                \pgfmathtruncatemacro{\mix}{\rawmix}
                \colorlet{barcolor}{yellow!\mix!orange}
            \else
                \pgfmathsetmacro{\rawmix}{(1 - (\val - 0.25)/0.25) * 100}
                \pgfmathtruncatemacro{\mix}{\rawmix}
                \colorlet{barcolor}{orange!\mix!red}
            \fi
        \fi
    \fi

    \fill[barcolor]
        (\x, 0) rectangle
        (\x + \barwidth/\nsteps + 0.001, \barheight);
}

\draw[black!60, line width=0.9pt] (0,0) rectangle (\barwidth,\barheight);

\pgfmathsetmacro{\tickzero}{0.00/\maxval*\barwidth}
\pgfmathsetmacro{\tickmiddle}{0.25/\maxval*\barwidth}
\pgfmathsetmacro{\tickmax}{0.50/\maxval*\barwidth}


\draw[black!60, line width=0.9pt] (\tickzero, \barheight) -- (\tickzero, \barheight + 0.12);
\draw[black!60, line width=0.9pt] (\tickmiddle, \barheight) -- (\tickmiddle, \barheight + 0.12);
\draw[black!60, line width=0.9pt] (\tickmax, \barheight) -- (\tickmax, \barheight + 0.12);

\node[anchor=south, font=\scriptsize] at (\tickzero, \barheight + 0.1) {$0$};
\node[anchor=south, font=\scriptsize] at (\tickmiddle, \barheight + 0.1) {$0.25$};
\node[anchor=south, font=\scriptsize] at (\tickmax, \barheight + 0.1) {$0.50$};

\end{scope}


\path (current bounding box.south) ++(0,-2mm) coordinate (dummy-bottom);


\end{tikzpicture}
\caption{
Node-wise conditional alignment error. Green indicates better alignment. Numbers below the leaves show their survey-based priors.
}
\label{subfig:disentangling:conditionals}
\end{subfigure}
\hfill
\begin{subfigure}[t]{0.32\linewidth}
\input{figures/11b_disentangle_priors.pgf}
\caption{
Error of LLM-estimated priors at different levels, measured by TV distance to survey-based priors. Priors get harder to estimate with depth.
}
\label{subfig:disentangling:priors}
\end{subfigure}
\hfill
\begin{subfigure}[t]{0.32\linewidth}
\input{figures/10c_var_combined.pgf}
\caption{
As the population is refined, average within-subgroup variance decreases (green) while cross-subgroup variance (blue) increases.
}
\label{subfig:explaining_macro_fallacy:variance}
\end{subfigure}
\caption{
\textbf{Disentangling the sources of estimation error.}
The left and middle panels separate the subgroup conditional estimates and subgroup priors in the reconstructed aggregate estimate. Node-wise conditional errors tend to improve over the root node, yet they vary substantially across the tree. LLM-estimated priors become less aligned with survey-based priors at deeper levels.
The right panel illustrates the variance decomposition. As the population is refined, residual heterogeneity within conditioned subgroups decreases, while systematic differences between subgroups become more explicit.
}
\label{fig:macro_fallacy:variance_and_macro_micro}
\end{figure}

\subsection{Refinement reduces residual uncertainty}\label{subsec:macro_fallacy:explanation}

Explicit decomposition reduces the burden of representing heterogeneous populations. A partition separates variation in the target variable into two parts: \emph{within-group variation}, corresponding to residual heterogeneity among individuals that share the same subgroup description, and \emph{cross-group variation}, corresponding to systematic differences between the subgroup-level target distributions. At the root, both must be compressed into a single aggregate estimate. After conditioning on subgroup attributes, cross-group structure is exposed through the tree, leaving only the residual uncertainty within each subgroup to be estimated.  In the limiting case where each leaf corresponds to a single individual, the within-subgroup component would vanish entirely. 

This intuition is formalized by the law of total variance, which decomposes the variance $\mathbb{V}(Y)$ into residual variation within the groups induced by a split attribute $A$ and variation between those groups. More formally, $\mathbb{V}(Y)\equiv\mathbb{E}[\mathbb{V}(Y\given A)] + \mathbb{V}(\mathbb{E}[Y\given A])$, and thus the expected within-group variance is bounded by the total variance, $\mathbb{E}[\mathbb{V}(Y\given A)]\leq \mathbb{V}(Y)$. At the example of our ACS tree, 
\cref{subfig:explaining_macro_fallacy:variance} shows how the expected within-group variance $\mathbb{E}_{\mathcal{S}\in\mathcal{P}_\ell}[\mathbb{V}(Y\given X\in\mathcal{S})]$ decreases as the tree is refined to deeper levels $\ell$.
Conversely, the blue bar plot shows that the cross-group variance $\mathbb{V}_{\mathcal{S}\in\mathcal{P}_\ell}(\mathbb{E}[Y\given X\in\mathcal{S}])$ increases with depth. This trend is also visible by inspecting the distributions in \cref{fig:acs:income_distribution_tree:combined}. Together, this supports the view that refinement shifts variation from unresolved within-subgroup heterogeneity to explicit differences between subpopulations.

The fact that lower-level estimates outperform aggregate prompting suggests that relevant subgroup information is at least partially present in the LLM, but more reliably elicited through explicit decomposition and aggregation than through a single population-level prompt. In the ACS income task, for example, we find that direct aggregate prompting appears to underweight young individuals, causing income to be overestimated. When this subgroup is made explicit, the LLM produces an accurate estimate and the corresponding reconstructed aggregate improves.~\looseness=-1

\subsection{Can implicit prompting improve aggregate estimates?}

\begin{figure}[t]
\centering
\input{figures/8b_micro_macro_ACS.pgf}
\begin{subfigure}[t]{0.45\linewidth}
\vspace*{-4mm}
\caption{
\textbf{Average error gain.}
Average relative error gain of micro-to-macro prompting over direct prompting across the five income thresholds. Positive values indicate that micro-to-macro prompting reduces absolute error relative to direct prompting.
}
\label{subfig:app:micro_to_macro:aggregate:ACS_thresholded}
\end{subfigure}
\hspace*{0.03\linewidth}
\begin{subfigure}[t]{0.4\linewidth}
\vspace*{-4mm}
\caption{
\textbf{Win matrix.}
Each tile corresponds to one model--threshold pair. Green indicates that micro-to-macro prompting yields lower absolute error than direct prompting; red indicates the complement.
}
\label{subfig:app:micro_to_macro:aggregate:ACS_win_rate}
\end{subfigure}
\caption{
\textbf{Benefit of implicit micro-to-macro prompting on ACS income estimation.}
We compare direct aggregate prompting with micro-to-macro prompting, which asks the model to first reason over relevant subpopulations before returning an aggregate prediction, while leaving the subpopulations implicit. Unlike explicit tree-based aggregation, the decomposition is not fixed in advance but chosen implicitly by the model.
For each model and income threshold $\tau\in\{100,20\mathrm{k},40\mathrm{k},60\mathrm{k},80\mathrm{k}\}$ USD, we evaluate the absolute error between the predicted population-level probability $\mathbb{P}(Y > \tau\given X\in\mathcal{X})$ and the ACS survey-based ground truth.
}
\label{fig:macro_fallacy:macro_micro}
\end{figure}

The success of explicit aggregation raises a natural question: can the same benefit be recovered by prompting the model to reason from \emph{micro to macro} within a single query?  
We construct a prompt that instructs the model to first identify relevant subpopulations and then combine them into a population-level estimate. Unlike explicit tree-based aggregation, this strategy does not require an externally specified partition. The exact prompt is provided in \cref{subsec:app:sec:prompt_template:mtm_prompt}.
We find that this implicit strategy can partially recover the aggregation benefit. \Cref{fig:macro_fallacy:macro_micro} shows the relative error gain as well as the win matrix of micro-to-macro prompting over direct aggregate prompting on the ACS income estimation task. Together with the more detailed analysis in \cref{subsec:app:micro_to_macro}, these results show that micro-to-macro prompting often improves over direct aggregate prompting. However, the effect is more model-dependent and less systematic than explicit tree-based aggregation. This suggests that micro-to-macro prompting provides a lightweight first step toward improving statistical inference in current models, without requiring an externally specified partition or explicit subgroup elicitation.

The results also show that the aggregation gains cannot be attributed to increased test-time compute alone. First, micro-to-macro prompting recovers part of the benefit within a single query. Second, the aggregation gain is non-monotone in tree depth: deeper partitions require more model calls yet eventually yield worse aggregates. Together, these observations suggest that the improvement stems from making the subpopulation structure explicit rather than from the number of elicitation calls.


\section{Statistical self-consistency}\label{sec:self_consistency_checks}

A model that behaves as a consistent conditional estimator should satisfy the aggregation identities implied by the law of total probability over any valid partition, across trees and problem domains. 
In this section, we turn these probabilistic identities into self-consistency checks for LLMs.
Since this universal requirement cannot be verified exhaustively, we instantiate it through two finite collections of local checks. 
\emph{Split consistency} tests whether estimates elicited in two adjacent layers of a conditioning tree aggregate according to the law of total probability. 
\emph{Order consistency} tests whether direct estimates are invariant under permutations of the order in which the conditioning constraints are verbalized.
Together, these checks assess whether model estimates are stable under refinement and equivalent representations of the same subpopulation. They require no reference outcome data and can therefore be applied whenever a valid partition of an underlying (potentially implicit) base population can be constructed.

\paragraph{General target functionals.}
For a feature vector $X\in\mathcal X$, a subpopulation $\mathcal{S}\subseteq\mathcal{X}$, and a measurable function $h$ of the outcome $Y$, we define the target functional as
\begin{equation}
T_h(\mathcal{S})\triangleq\mathbb{E}\bigs[h(Y)\given X\in\mathcal{S}\bigs].
\end{equation}
Tail probabilities, as discussed in previous sections, are recovered as the special case $h: y\mapsto\mathbbold{1}\{y > \tau\}$. Other choices of $h$ yield conditional means, moments, or even full conditional distributions. 
Crucially, this definition preserves the aggregation identity induced by the law of total expectation, of which the law of total probability is a special case. In particular, for any partition $\mathcal{P}(\mathcal{S})$ of a subpopulation $\mathcal{S}$, we have $T_h(\mathcal{S}) = \sum_{\mathcal{R}\in\mathcal{P}(\mathcal{S})}\gamma_{\mathcal{R}}\,T_h(\mathcal{R})$, where $\gamma_{\mathcal{R}}$ denotes the relative size of subgroup $\mathcal{R}$ within $\mathcal{S}$. Since $T_h(\mathcal{S})$ depends only on the subpopulation $\mathcal{S}$, rather than the order in which its attribute constraints are listed, it also preserves permutation invariance. Thus, the self-consistency checks below apply to any elicited statistic that can be expressed as such a conditional expectation.

\subsection{Split consistency}\label{subsec:split_consistency}

We first consider the local aggregation identities obtained by refining a conditioning event along a single binary attribute. Let $\mathcal{A} = \{A_1,\dots,A_d\}$ be a collection of binary attributes, with $A_j\in\{0,1\}$ for $j\in[d]$. For an index set $\mathcal{J}\subseteq[d]$ and an assignment $\boldsymbol{a}_{\mathcal{J}}\in\{0,1\}^{\vert\mathcal{J}\vert}$, define the corresponding conditioning event
\begin{equation}
\mathcal{S}(\mathcal{J},\boldsymbol{a}_{\mathcal{J}})\triangleq{\bigcap}_{j\in\mathcal{J}} \bigs\{A_j = a_j\bigs\}.
\end{equation}
This event represents the subpopulation satisfying the specified partial assignment, with $\mathcal{J}=\emptyset$ corresponding to the full population.\footnote{The conditioning event itself is invariant to the order of its constraints. To elicit an LLM estimate, however, the constraints must be verbalized sequentially. Throughout this subsection, we use a fixed canonical ordering.} Now consider a conditioning event $\mathcal{S} = \mathcal{S}(\mathcal{J},\boldsymbol{a}_{\mathcal{J}})$ and an additional split attribute $A_k$, for some $k\notin\mathcal{J}$. Splitting $\mathcal{S}$ according to the two possible values of $A_k$ yields the refined subpopulations $\mathcal{S}_{k,0}\triangleq\mathcal{S}\cap\{A_k = 0\}$ and $\mathcal{S}_{k,1}\triangleq\mathcal{S}\cap\{A_k = 1\}$, which form a partition of $\mathcal{S}$. Therefore, the law of total probability requires the estimate elicited directly for $\mathcal{S}$ to agree with the prior-weighted aggregate of the estimates elicited for its two refinements. Writing $\hat{\gamma}_{k,b\given\mathcal{S}}$ for the LLM-estimated relative size of $\mathcal{S}_{k,b}$ within $\mathcal{S}$, we compare 
\begin{equation}\label{eq:consistency:aggregation}
\underset{\text{direct estimate}}{\hat{T}_h(\mathcal{S})} \quad\stackrel{?}{=}\quad \underset{\text{reconstructed estimate}}{\hat{T}_{h,\text{agg}}(\mathcal{S};A_k)}
\qquad\text{with}\qquad\hat{T}_{h,\text{agg}}(\mathcal{S};A_k)\triangleq\sum_{b\in\{0,1\}}\hat{\gamma}_{k,b\given\mathcal{S}}\,\hat{T}_h(\mathcal{S}_{k,b}).
\end{equation}
To quantify deviations from this identity, let $\dist(\bigcdot,\bigcdot)$ denote a metric on the space of elicited estimates that satisfies joint convexity under mixtures.\footnote{A metric $\dist(\bigcdot,\bigcdot)$ is jointly convex under mixtures if, for any weights $\lambda_1,\dots,\lambda_n\geq 0$ satisfying $\sum_{i=1}^n\lambda_i=1$, it holds that $\dist\bigs({\sum}_i \lambda_i P_i, {\sum}_i \lambda_i Q_i\bigs)\leq\sum_i \lambda_i \,\dist(P_i, Q_i).$} The choice of $\dist(\bigcdot,\bigcdot)$ depends on the type of outcome distribution elicited from the model. Each admissible pair $(\mathcal{S},A_k)$ then induces a local split consistency check.

\begin{definition}[Split consistency]\label{def:split_consistency}
Let $\mathcal{A} = \{A_1,\dots,A_d\}$ be a set of binary attributes. For a conditioning event $\mathcal{S} = \mathcal{S}(\mathcal{J},\boldsymbol{a}_{\mathcal{J}})$ and an attribute $A_k\in\mathcal{A}$ with $k\notin\mathcal{J}$, the pair $(\mathcal{S},A_k)$ is $\varepsilon$-split consistent if
\begin{equation}
\dist\big(\hat{T}_h(\mathcal{S}),\hat{T}_{h,\text{agg}}(\mathcal{S};A_k)\big) < \varepsilon.
\end{equation}
\end{definition}

Although split consistency is a local, single-step requirement, it controls aggregation across multiple levels of a conditioning tree. Consider a binary conditioning tree $\mathcal{T}$ rooted at $\mathcal{S}$. Let $\hat{T}^{(\ell)}_{h,\mathrm{agg}}(\mathcal{S})$ denote the estimate obtained by recursively aggregating the model estimates at level $\ell$ using the elicited prior weights along the tree. 

\begin{proposition}[Multi-step aggregation]\label{proposition:multi_step_split}
If every internal node at levels $0, \dots, \ell-1$ of the BCT $\mathcal{T}$, paired with its split attribute, is $\varepsilon$-split consistent, then
\begin{equation}
\dist\big(\hat{T}_h(\mathcal{S}),\, \hat{T}^{(\ell)}_{h,\text{agg}}(\mathcal{S})\big) < \ell\ssp\varepsilon.
\end{equation}
\end{proposition}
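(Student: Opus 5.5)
We argue by induction on the depth $\ell$, combining the triangle inequality for $\dist$ with its joint convexity under mixtures. First we make the recursive aggregate precise. For a node $\mathcal{R}$ of $\mathcal{T}$ at level $m$, set $\hat{T}^{(0)}_{h,\mathrm{agg}}(\mathcal{R}) \triangleq \hat{T}_h(\mathcal{R})$. For $j\geq 1$, define
\[
\hat{T}^{(j)}_{h,\mathrm{agg}}(\mathcal{R}) \triangleq \sum_{b\in\{0,1\}} \hat{\gamma}_{k,b\given\mathcal{R}}\,\hat{T}^{(j-1)}_{h,\mathrm{agg}}(\mathcal{R}_{k,b}),
\]
where $A_k$ is the split attribute used at level $m+1$. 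Unrolling this recursion gives exactly the level-$\ell$ estimate in the statement, with leaf weights equal to products of elicited priors along each path. Because it is defined recursively, it decomposes as a two-component mixture at the root.

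We prove the stronger claim that, for every node $\mathcal{R}$ at level $m$ and every $j$ with $m+j\leq \ell$, one has $\dist(\hat{T}_h(\mathcal{R}),\hat{T}^{(j)}_{h,\mathrm{agg}}(\mathcal{R}))<j\varepsilon$ whenever $j\geq1$. The base case $j=1$ is precisely Definition~\ref{def:split_consistency} applied to $(\mathcal{R},A_k)$, since $\hat{T}^{(1)}_{h,\mathrm{agg}}(\mathcal{R})=\hat{T}_{h,\mathrm{agg}}(\mathcal{R};A_k)$. The case $\ell=0$ is trivial, with distance $0$, although the strict inequality should then be read as $\leq$.

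For the inductive step, the triangle inequality gives
\[
\dist\big(\hat{T}_h(\mathcal{R}),\hat{T}^{(j)}_{h,\mathrm{agg}}(\mathcal{R})\big)\leq \dist\big(\hat{T}_h(\mathcal{R}),\hat{T}_{h,\mathrm{agg}}(\mathcal{R};A_k)\big)+\dist\big(\hat{T}_{h,\mathrm{agg}}(\mathcal{R};A_k),\hat{T}^{(j)}_{h,\mathrm{agg}}(\mathcal{R})\big).
\]
The first term is $<\varepsilon$ by split consistency of $\mathcal{R}$. The second term compares two mixtures with the same weights $\hat{\gamma}_{k,b\given\mathcal{R}}$. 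Their components are $\hat{T}_h(\mathcal{R}_{k,b})$ and $\hat{T}^{(j-1)}_{h,\mathrm{agg}}(\mathcal{R}_{k,b})$ respectively. Joint convexity bounds it by $\sum_b \hat{\gamma}_{k,b\given\mathcal{R}}\,\dist(\hat{T}_h(\mathcal{R}_{k,b}),\hat{T}^{(j-1)}_{h,\mathrm{agg}}(\mathcal{R}_{k,b}))$. The induction hypothesis applies to the children at level $m+1$, so each summand is $<(j-1)\varepsilon$. Since the weights sum to one, the second term is $<(j-1)\varepsilon$, and the total is $<j\varepsilon$. Taking $\mathcal{R}=\mathcal{S}$ and $j=\ell$ gives the proposition.

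The main obstacle is bookkeeping rather than analysis. We must verify that the recursive definition of $\hat{T}^{(\ell)}_{h,\mathrm{agg}}$ agrees with the paper's level-wise aggregate; this relies on the elicited conditional priors within each parent being normalized to sum to one. We must also check that only internal nodes at levels $0,\dots,\ell-1$ are ever invoked, which the induction range $m+j\leq\ell$ ensures. Finally, $j=1$ must serve as the genuine base case so that strictness is preserved.
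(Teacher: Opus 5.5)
Your proof is correct, but it organizes the argument differently from the paper. The paper telescopes across levels at the root, $\dist(\hat{T}_h(\mathcal{S}),\hat{T}^{(\ell)}_{h,\mathrm{agg}}(\mathcal{S}))\leq\sum_{j=0}^{\ell-1}\dist(\hat{T}^{(j)}_{h,\mathrm{agg}}(\mathcal{S}),\hat{T}^{(j+1)}_{h,\mathrm{agg}}(\mathcal{S}))$. It then regroups each level-$(j{+}1)$ reconstruction by parents as $\sum_{v\in\mathcal{L}_j(\mathcal{T})}\hat{\omega}_v\,\hat{T}_{h,\mathrm{agg}}(\mathcal{S}_v;A_v)$ and applies joint convexity once over all level-$j$ nodes with the path weights $\hat{\omega}_v$. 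You instead peel off the root split and recurse into subtrees. You use the triangle inequality at every node and joint convexity only for two-component mixtures. Fully unrolled, both give the same intermediate bound $\sum_{j=0}^{\ell-1}\sum_{v\in\mathcal{L}_j(\mathcal{T})}\hat{\omega}_v\,\dist(\hat{T}_h(\mathcal{S}_v),\hat{T}_{h,\mathrm{agg}}(\mathcal{S}_v;A_v))$, so neither is sharper. The paper's route has the advantage that its intermediate objects are exactly the level-wise reconstructions studied empirically. A bound of $(j'-j)\varepsilon$ between the level-$j$ and level-$j'$ reconstructions therefore falls out directly. Your route proves the bound for every subtree at once. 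It also never needs the derived fact that the path weights across a whole level sum to one, which makes it the more natural form for trees whose branches have different depths. Two small remarks. First, normalization of the elicited weights is not what makes your recursive definition coincide with the path-product one; that is just distributivity. Normalization is what licenses your convexity and averaging steps. Second, you are right that the strict inequality fails at $\ell=0$, a case the paper's proof tacitly excludes.
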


Thus, local split consistency controls the discrepancy between the direct root estimate and its level-$\ell$ reconstruction. The linear dependence on $\ell$ is a worst-case bound, attained only if the maximal local error accumulates in the same direction at every level. In our experiments, the observed error grows considerably more slowly (\cref{table:self_consistency:acs_levelwise}). The formal definition of the level-$\ell$ reconstruction and the proof are given in \cref{subsec:proofs:split_consistency}.

Taking into account variations in the tree structure, we define the split consistency score evaluated across all conditioning trees induced by an attribute set $\mathcal{A}$. Any one BCT fixes one split attribute at every level, whereas each of the remaining attributes would define an equally valid refinement that does not appear in that tree. We therefore collect all pairs of conditioning events and admissible next splits into the set
\begin{equation}
\mathcal{C}_{\mathrm{SC}}(\mathcal{A})\triangleq\left\{\big(\mathcal{S}(\mathcal{J}, \boldsymbol{a}_\mathcal{J}), A_k\big) : \mathcal{J}\subseteq[d],\;\boldsymbol{a}_\mathcal{J}\in\{0,1\}^{\vert\mathcal{J}\vert},\;k\notin\mathcal{J}\right\}
\end{equation}
and define the split consistency score as the fraction of satisfied checks.

\begin{definition}[Split consistency score]\label{def:split_consistency_score}
The split consistency score is the fraction of satisfied checks,
\begin{equation}
\mathrm{SC}_\varepsilon(\mathcal{A})\triangleq\frac{1}{\vert\mathcal{C}_{\mathrm{SC}}(\mathcal{A})\vert} \sum_{(\mathcal{S},A_k)\in\mathcal{C}_{\mathrm{SC}}(\mathcal{A})} \mathbbold{1}\big\{\!\dist\big(\hat{T}_h(\mathcal{S}), \hat{T}_{h,\text{agg}}(\mathcal{S};A_k)\big) < \varepsilon\big\}.
\end{equation}
\end{definition}

The score $\mathrm{SC}_\varepsilon(\mathcal{A}) = 1$ corresponds to the requirement that every admissible split consistency check is satisfied, while intermediate values quantify partial consistency. This makes the criterion useful as a benchmark score even when no model satisfies all checks. By construction, the score covers the local aggregation identities across all orderings of the tree levels. This removes dependence on a particular tree structure, but not on the order in which the constraints defining $\mathcal{S}$ are verbalized in the prompt. Throughout the split consistency evaluation, we use a fixed canonical ordering of the attributes.

\subsection{Order consistency}\label{subsec:order_consistency}

Split consistency verifies the law of total probability across nested conditioning events using a canonical ordering of the constraints in each verbalized prompt. A complementary requirement is that the estimate assigned to a conditioning event should not depend on the order in which those constraints are presented. This order is a property of the verbalization, not of the conditioning event itself. For example, ``young and employed'' describes the same subpopulation as ``employed and young,'' so a self-consistent conditional estimator should assign the same estimate to both descriptions.
In practice, however, LLMs are sensitive to the ordering of semantically equivalent reformulations of the same input \citep{chen2024ordering,guan2025ordereffectinvestigatingprompt}. Whereas these works measure order effects through their impact on task accuracy, we formalize order invariance as a reference-free self-consistency requirement on elicited conditional estimates. Unlike split consistency, order consistency compares estimates of the same event under distinct orderings and therefore requires neither aggregation nor prior estimates.

\begin{definition}[Order consistency]\label{def:order_consistency}
Let $\mathcal{S} = \mathcal{S}(\mathcal{J},\boldsymbol{a}_{\mathcal{J}})$ be a conditioning event with $\vert\mathcal{J}\vert\geq 2$. Let $\Pi(\mathcal{J})$ denote the set of orderings of its defining constraints, and write $\hat{T}_h(\mathcal{S};\pi)$ for the estimate elicited under ordering $\pi\in\Pi(\mathcal{J})$. We say that $\mathcal{S}$ is $\varepsilon$-order consistent if
\begin{equation}
\dist\big(\hat{T}_h(\mathcal{S};\pi), \hat{T}_h(\mathcal{S};\pi')\big) < \varepsilon
\end{equation}
for every pair of distinct orderings $\pi,\pi'\in\Pi(\mathcal{J})$.
\end{definition}

Verifying \cref{def:order_consistency} for all $\vert\mathcal{J}\vert!$ orderings of the constraints defining a conditioning event quickly becomes infeasible at deeper tree levels.  We therefore restrict the evaluation to order checks on conditioning events involving exactly two attributes. Specifically, we denote the collection of all admissible pairwise order consistency checks induced by $\mathcal{A}$ as
\begin{equation}
\mathcal{C}_{\mathrm{OC}}(\mathcal{A})\triangleq\big\{\mathcal{S}\bigs(\{i,j\}, (a_i,a_j)\bigs) : 1\leq i < j\leq d,\;(a_i,a_j)\in\{0,1\}^2\big\}.
\end{equation}

\begin{definition}[Order consistency score]\label{def:order_consistency_score}
The order consistency score is the fraction of checks satisfied at tolerance $\varepsilon$,
\begin{equation}
\mathrm{OC}_\varepsilon(\mathcal{A})\triangleq\frac{1}{\left\vert\mathcal{C}_{\mathrm{OC}}(\mathcal{A})\right\vert}\sum_{\mathcal{S}\in\mathcal{C}_{\mathrm{OC}}(\mathcal{A})}
\mathbbold{1}\!\left\{\dist\big(\hat{T}_h\bigs(\mathcal{S};\pi_1(\mathcal{S})\bigs),\hat{T}_h\bigs(\mathcal{S};\pi_2(\mathcal{S})\bigs)\big) < \varepsilon\right\},
\end{equation}
where $\pi_1(\mathcal{S})$ and $\pi_2(\mathcal{S})$ denote the two possible constraint orderings of the two-attribute event $\mathcal{S}$.
\end{definition}

This reduction from all orderings of all conditioning events to $\vert\mathcal{C}_{\mathrm{OC}}(\mathcal{A})\vert = 4\,\binom{d}{2}$ pairwise checks is justified under the assumption that the effect of swapping two adjacent constraints is a local property of the pair itself, independent of the surrounding prompt context. This is a natural approximation when the prompt template presents all attribute constraints symmetrically.

\begin{assumption}[Context-independent pairwise order discrepancy]\label{ass:context_independence}
For any pair of attributes $A_i, A_j$, any assignments $a_i, a_j$, and any (possibly empty) prefix and suffix conditioning contexts $P$ and $Q$, the discrepancy
\begin{equation}
\dist\big(\hat{T}_h(P, A_i = a_i, A_j = a_j, Q), \hat{T}_h(P, A_j = a_j, A_i = a_i, Q)\big)
\end{equation}
depends only on the constraints $A_i = a_i$ and $A_j = a_j$, and is therefore independent of the surrounding contexts $P$ and $Q$.
\end{assumption}

Under this assumption, the pairwise checks control order consistency for arbitrary conditioning events.

\begin{proposition}[Sufficiency of pairwise checks]\label{proposition:pairwise_sufficiency}
Suppose \cref{ass:context_independence} holds. Let $\mathcal{S}=\mathcal{S}(\mathcal{J},\boldsymbol{a}_{\mathcal{J}})$ be a conditioning event with $\vert\mathcal{J}\vert\geq 2$, and suppose that every pairwise check $\mathcal{S}\bigs(\{i,j\}, (a_i,a_j)\bigs)\in\mathcal{C}_{\mathrm{OC}}(\mathcal{A})$ with $i,j\in\mathcal{J}$ is satisfied at tolerance $\varepsilon$. Then $\mathcal{S}$ is $\varepsilon'$-order consistent with $\varepsilon' = \binom{\vert\mathcal{J}\vert}{2}\,\varepsilon$.
\end{proposition}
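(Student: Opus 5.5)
Fix two distinct orderings $\pi,\pi'\in\Pi(\mathcal{J})$ and connect them by adjacent transpositions, bounding each step with the triangle inequality for $\dist$. Write $n=\vert\mathcal{J}\vert$ and view orderings as permutations of the $n$ constraints $\{A_j=a_j\}_{j\in\mathcal{J}}$. We will show $\dist\big(\hat{T}_h(\mathcal{S};\pi),\hat{T}_h(\mathcal{S};\pi')\big)<\binom{n}{2}\varepsilon$ for every such pair, which is the definition of $\varepsilon'$-order consistency.

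\textbf{Step 1 (sorting path).} By bubble sort, $\pi'$ is obtained from $\pi$ by a sequence $\pi=\sigma_0,\sigma_1,\dots,\sigma_m=\pi'$. Each $\sigma_{t}$ arises from $\sigma_{t-1}$ by swapping two adjacent constraints. Here $m$ equals the number of inversions of $\pi'$ relative to $\pi$, so $1\le m\le\binom{n}{2}$. Moreover, along a bubble-sort path every unordered pair $\{i,j\}\subseteq\mathcal{J}$ is swapped at most once.

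\textbf{Step 2 (each swap is a pairwise check).} Consider one step $\sigma_{t-1}\to\sigma_t$. Write the verbalized context of $\sigma_{t-1}$ as $(P, A_i=a_i, A_j=a_j, Q)$, where $P$ and $Q$ are the prefix and suffix of constraints. Then $\sigma_t$ is $(P, A_j=a_j, A_i=a_i, Q)$. By \cref{ass:context_independence}, the discrepancy $\dist\big(\hat{T}_h(\mathcal{S};\sigma_{t-1}),\hat{T}_h(\mathcal{S};\sigma_t)\big)$ equals the discrepancy with empty $P$ and $Q$. That quantity is exactly the pairwise check on $\mathcal{S}(\{i,j\},(a_i,a_j))\in\mathcal{C}_{\mathrm{OC}}(\mathcal{A})$, comparing its two orderings $\pi_1$ and $\pi_2$. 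By hypothesis it is $<\varepsilon$. The metric is symmetric, so the direction of the swap does not matter.

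\textbf{Step 3 (triangle inequality).} Chaining the steps gives
\[
\dist\big(\hat{T}_h(\mathcal{S};\pi),\hat{T}_h(\mathcal{S};\pi')\big)\le\sum_{t=1}^m \dist\big(\hat{T}_h(\mathcal{S};\sigma_{t-1}),\hat{T}_h(\mathcal{S};\sigma_t)\big)<m\,\varepsilon\le\binom{n}{2}\varepsilon .
\]
The strict inequality holds because $m\ge1$, since $\pi\neq\pi'$. Only the metric properties of $\dist$ are used here; joint convexity is not needed.

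\textbf{Main obstacle.} The delicate point is Step 2, specifically reading \cref{ass:context_independence} correctly. Its phrase ``depends only on the constraints'' must be interpreted as saying that the discrepancy equals its value with empty prefix and suffix, which is the two-attribute prompt actually evaluated in $\mathcal{C}_{\mathrm{OC}}(\mathcal{A})$. A related subtlety is that the pairwise check must be the one for the specific assignment $(a_i,a_j)$ inherited from $\boldsymbol{a}_{\mathcal{J}}$. It must also be indexed with $i<j$ regardless of which attribute comes first in $\sigma_{t-1}$; symmetry of $\dist$ handles this. Once that identification is made, the rest is the standard inversion-count bound on the number of adjacent transpositions, $\binom{n}{2}$.
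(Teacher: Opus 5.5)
Your proposal is correct and follows essentially the same argument as the paper: connect the two orderings by at most $\binom{\vert\mathcal{J}\vert}{2}$ adjacent transpositions (the inversion count), use the context-independence assumption to identify each swap's discrepancy with the corresponding two-attribute pairwise check, and sum via the triangle inequality. Your explicit remark that $m\geq 1$ because $\pi\neq\pi'$ is what justifies the final strict inequality, a point the paper leaves implicit.
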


The pairwise checks in $\mathcal{C}_{\mathrm{OC}}(\mathcal{A})$ therefore certify order consistency for all conditioning events induced by $\mathcal{A}$, up to a factor quadratic in the number of constraints. The proof is provided in \cref{subsec:proofs:order_consistency}.

\subsection{Interpreting the scores}

The split and order consistency checks provide a necessary criterion for evaluating whether LLM estimates can be understood and treated as conditional distributions. Importantly, our self-consistency checks are reference-free. They assess the internal consistency of a model’s estimates by comparing them only against other estimates produced by the same model, rather than to external ground-truth labels. 
This makes the checks broadly applicable in domains where reliable labels are expensive, unavailable, or ill-defined.
At the same time, they capture different failure modes: split consistency tests probabilistic composition, whereas order consistency tests invariance to the order in which conditioning constraints are presented. This separation is useful because layer permutations of a tree combine both effects as both the sequence of refinements and the verbalization order of the corresponding conditioning events change. The following proposition shows that a split consistency check verified under the canonical ordering extends to arbitrary constraint orderings at the cost of two order consistency terms.

\begin{proposition}\label{prop:permuted_split}
Suppose the pair $(\mathcal{S}, A_k)$ is $\varepsilon_{\mathrm{SC}}$-split consistent and the events $\mathcal{S}$, $\mathcal{S}_{k,0}$, and $\mathcal{S}_{k,1}$ are each $\varepsilon_{\mathrm{OC}}$-order consistent. Assume further that the relative-size estimates $\hat{\gamma}_{k,b\vert\mathcal{S}}$ are invariant to the ordering of the constraints defining $\mathcal{S}$.\footnote{Concretely, we assume that the relative-size estimates $\hat{\gamma}_{k,b\,|\,\mathcal{S}}$ are elicited once under the canonical ordering of the constraints defining $\mathcal{S}$ and reused for all orderings $\pi$. If the weights are instead re-elicited under each ordering, their variation enters the bound as an additional term.} Then, for any orderings $\pi$ of the constraints defining $\mathcal{S}$ and $\pi_0, \pi_1$ of those defining $\mathcal{S}_{k,0}$ and $\mathcal{S}_{k,1}$, we have
\begin{equation}\label{eq:prop:split_permuted_tree}
\dist\Big(\hat{T}_h(\mathcal{S};\pi),\,{\sum}_{b\in\{0,1\}}\hat{\gamma}_{k,b\,|\,\mathcal{S}}\,\hat{T}_h(\mathcal{S}_{k,b};\pi_b)\Big) < \varepsilon_{\mathrm{SC}} + 2\ssp\varepsilon_{\mathrm{OC}}.
\end{equation}
\end{proposition}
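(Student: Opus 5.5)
The plan is a triangle-inequality chain that starts from the canonical-order split consistency check and pays one order-consistency term at the parent and one at the children. Let $\pi^\ast$, $\pi_0^\ast$, $\pi_1^\ast$ denote the canonical orderings of the constraints defining $\mathcal{S}$, $\mathcal{S}_{k,0}$, and $\mathcal{S}_{k,1}$. The $\varepsilon_{\mathrm{SC}}$-split consistency of $(\mathcal{S},A_k)$ in \cref{def:split_consistency} refers to these orderings. Hence it reads
\[
\dist\bigl(\hat{T}_h(\mathcal{S};\pi^\ast),\,\hat{T}_{h,\text{agg}}(\mathcal{S};A_k)\bigr) < \varepsilon_{\mathrm{SC}}.
\]
By the footnoted assumption, the weights $\hat{\gamma}_{k,b\,|\,\mathcal{S}}$ are the same for every ordering $\pi$, so the reconstructed estimate under $(\pi_0,\pi_1)$ differs from the canonical one only through the conditional estimates of the children.

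The triangle inequality for the metric $\dist$ gives
\[
\dist\Bigl(\hat{T}_h(\mathcal{S};\pi),\,\sum_b \hat{\gamma}_{k,b\,|\,\mathcal{S}}\hat{T}_h(\mathcal{S}_{k,b};\pi_b)\Bigr)
\le \dist\bigl(\hat{T}_h(\mathcal{S};\pi),\hat{T}_h(\mathcal{S};\pi^\ast)\bigr)
+ \dist\bigl(\hat{T}_h(\mathcal{S};\pi^\ast),\hat{T}_{h,\text{agg}}(\mathcal{S};A_k)\bigr)
+ \dist\Bigl(\sum_b \hat{\gamma}_{k,b\,|\,\mathcal{S}}\hat{T}_h(\mathcal{S}_{k,b};\pi_b^\ast),\,\sum_b \hat{\gamma}_{k,b\,|\,\mathcal{S}}\hat{T}_h(\mathcal{S}_{k,b};\pi_b)\Bigr).
\]
The three terms are bounded as follows.

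\textbf{First term.} If $\pi=\pi^\ast$ it is zero. Otherwise it is below $\varepsilon_{\mathrm{OC}}$ by \cref{def:order_consistency} applied to $\mathcal{S}$. If $|\mathcal{J}|<2$ there is only one ordering, so the term vanishes trivially.

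\textbf{Second term.} It is below $\varepsilon_{\mathrm{SC}}$ by the split consistency hypothesis.

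\textbf{Third term.} I would use joint convexity of $\dist$ under mixtures with weights $\lambda_b=\hat{\gamma}_{k,b\,|\,\mathcal{S}}$. These weights are nonnegative and sum to one, since the elicited priors are renormalized. Joint convexity bounds the third term by $\sum_b \lambda_b\,\dist\bigl(\hat{T}_h(\mathcal{S}_{k,b};\pi_b^\ast),\hat{T}_h(\mathcal{S}_{k,b};\pi_b)\bigr)$. Each summand is below $\varepsilon_{\mathrm{OC}}$ by order consistency of $\mathcal{S}_{k,b}$, or equals zero when $\pi_b=\pi_b^\ast$. The convex combination is therefore below $\varepsilon_{\mathrm{OC}}$.

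Summing the three bounds gives $\varepsilon_{\mathrm{SC}}+2\varepsilon_{\mathrm{OC}}$. The strict inequality survives because the second term is strictly bounded and the other two are at most their bounds.

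The main obstacle is the third term. Its bound must be a convex combination of the two child discrepancies rather than their sum, which would cost $2\varepsilon_{\mathrm{OC}}$ on its own and give $3\varepsilon_{\mathrm{OC}}$ overall. This is exactly where joint convexity of $\dist$ and the ordering-invariance of the weights are needed. Without the invariance, the mixture weights would differ between the two sides and joint convexity would not apply directly.
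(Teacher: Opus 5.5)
Your proposal is correct and follows the paper's proof essentially step for step. Both use the same three-term triangle inequality through the canonical orderings: the parent term is bounded by order consistency of $\mathcal{S}$, the middle term by canonical split consistency, and the child term by joint convexity with the order-invariant weights summing to one. Your extra remarks on the degenerate cases ($\pi=\pi^\ast$, $|\mathcal{J}|<2$) and on why strictness survives are harmless refinements that the paper leaves implicit.
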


Thus, combining split consistency under a canonical constraint ordering with order consistency extends the aggregation guarantee to every conditioning tree induced by the attribute set $\{A_1,\dots,A_d\}$, including trees that introduce the split attributes in a different order and verbalize each node's constraints in their order of appearance along the corresponding path.
This layer-permutation perspective is illustrated in \cref{sec:app:sc:tree_permutation} and the proof is provided in \cref{subsec:proofs:order_consistency}.

\paragraph{Relation to distributional alignment.}
Importantly, self-consistency is only a necessary condition for faithful conditional inference. A model may be internally consistent while still being misaligned with the target distribution. Conversely, external alignment does not determine whether a model's estimates form a coherent system across levels of specificity unless the model matches all relevant ground-truth conditionals exactly. We therefore treat alignment and self-consistency as complementary diagnostics. 


\section{Evaluation}\label{sec:persona_dataset}

Next, we demonstrate the versatility of the self-consistency checks and evaluate them across different tasks. First, we return to the ACS income tree from \cref{sec:macro_fallacy}. We then extend the evaluation to opinion distributions from the World Values Survey (WVS), and finally illustrate how the same checks apply to synthetic forecasting problems in which the notion of an external reference distribution is not even well-defined.

\paragraph{Evaluation metric.}
For ordered categorical outcomes, such as multiple-choice answers on an ordinal response scale, we instantiate the discrepancy metric using the normalized Wasserstein-$1$ distance. For distributions $P$ and $Q$ on an ordered support $\{1,\dots,K\}$, we define the normalized Wasserstein-$1$ distance as
\begin{equation}\label{eq:wasserstein:ordinal:main}
\overline{\mathcal{W}}_1(P,Q)\triangleq \frac{1}{K-1} \sum_{k=1}^{K-1} \big\vert F_P(k)-F_Q(k)\big\vert, 
\end{equation}
where $F_P$ and $F_Q$ denote the corresponding cumulative distribution functions. The normalization maps the distance to $[0,1]$, making tolerances comparable across questions with different numbers of response options. Further details, including the treatment of nominal categorical outcomes, are provided in \cref{sub:appendix:sc:categorical}. 

\subsection{Self-consistency in persona prompting}

\begin{table}[t]
\centering
\renewcommand{\arraystretch}{1.4}
\setlength{\tabcolsep}{4pt}
\begin{subtable}[t]{0.48\linewidth}
\centering
\begin{tabular}{m{26mm} >{\centering\arraybackslash}m{20mm} >{\centering\arraybackslash}m{20mm}}
\toprule
Model & Split consistency & Order consistency\\[0.75mm]\hline
\openai~GPT-5.4 & \ccol{EEB0B0}0.33 & \ccol{FFFFFF}1.00 \\
\claude~Sonnet 4.6 & \ccol{E58484}0.00 & \ccol{EEB0B0}0.25 \\
\grok~Grok 4.3 & \ccol{E58484}0.00 & \ccol{F5D0D0}0.50 \\
\qwen~Qwen3.6 Plus & \ccol{E58484}0.17 & \ccol{F5D0D0}0.50 \\
\specialrule{\heavyrulewidth}{0pt}{0pt}
\end{tabular}
\caption{%
\textbf{ACS income prediction.}
The target variable $Y$ is the binned income distribution.
}
\label{table:self_consistency:acs_income}
\end{subtable}
\hspace{0.025\linewidth} 
\begin{subtable}[t]{0.48\linewidth}
\centering
\begin{tabular}{m{26mm} >{\centering\arraybackslash}m{20mm} >{\centering\arraybackslash}m{20mm}}
\toprule
Model & Split consistency & Order consistency\\[0.75mm]\hline
\openai~GPT-5.4 & \ccol{E58484}0.17 & \ccol{F5D0D0}0.50 \\
\claude~Sonnet 4.6 & \ccol{E58484}0.17 & \ccol{F5D0D0}0.50 \\
\grok~Grok 4.3 & \ccol{E58484}0.17 & \ccol{F5D0D0}0.50 \\
\qwen~Qwen3.6 Plus & \ccol{EEB0B0}0.33 & \ccol{EEB0B0}0.25 \\
\specialrule{\heavyrulewidth}{0pt}{0pt}
\end{tabular}
\caption{%
\textbf{ACS commute time prediction.}
The target variable $Y$ is the binned commute time distribution. 
}
\label{table:self_consistency:acs_commute}
\end{subtable}
\caption{%
\textbf{Self-consistency in ACS prediction tasks.}
We evaluate self-consistency for a fixed tree and two different prediction tasks. The attribute set $\mathcal{A}$ consists of age and employment status as used in \cref{fig:acs:income_distribution_tree:combined}. For each task, we report the split consistency score $\mathrm{SC}_\varepsilon(\mathcal{A})$ and the order consistency score $\mathrm{OC}_\varepsilon(\mathcal{A})$ at tolerance $\varepsilon = 0.02$. Higher values indicate stronger self-consistency and are shown in lighter red tones. We observe widespread violations of self-consistency across frontier models.
}
\label{table:self_consistency:acs_combined}
\end{table}

\paragraph{ACS income prediction.}
First, we evaluate self-consistency on the ACS setting discussed in prior sections (see \cref{fig:acs:income_distribution_tree:combined}). The ACS income prediction task corresponds to a widely used testbed for evaluating alignment and fairness of predictive models~\citep{ding2021retiring,cruz2024evaluatinglanguagemodelsrisk}. 
We take $\mathcal{A}$ to be the demographic attributes underlying the tree in \cref{fig:acs:income_distribution_tree:combined} and, for every conditioning event induced by $\mathcal{A}$, elicit the model's estimated binned income distribution. We then apply the self-consistency checks from \cref{sec:self_consistency_checks}. The two columns in \cref{table:self_consistency:acs_income} report the two complementary benchmark scores.
The first is the split consistency score $\mathrm{SC}_\varepsilon(\mathcal{A})$. For every admissible pair of a conditioning event and a next split attribute in $\mathcal{C}_{\mathrm{SC}}(\mathcal{A})$, we compare the direct estimate to the prior-weighted aggregate of its two refinements and report the fraction of checks satisfied at tolerance $\varepsilon = 0.02$.
The second is the order consistency score $\mathrm{OC}_\varepsilon(\mathcal{A})$. For every two-attribute conditioning event in $\mathcal{C}_{\mathrm{OC}}(\mathcal{A})$, we elicit direct estimates under both orderings of the attribute constraints and report the fraction of pairs whose predictions agree within tolerance $\varepsilon = 0.02$.
Across frontier models, we observe widespread violations of both criteria. This discrepancy points to an important weakness of model-induced estimates and helps explain the varying alignment scores of reconstructed estimates across layers.
Implementation details are deferred to \cref{subsec:app:experiment_details:acs}.

To examine whether self-consistency improves with general model capability, \cref{fig:self_consistency:model_comp} extends the evaluation to a substantially broader set of models and model families in the thresholded ACS income task with $\tau = 40\mathrm{k}$ USD. Despite large increases in the Artificial Analysis Intelligence Index \citep{artificialanalysis2026} within model families, neither split nor order consistency improves systematically, with order consistency remaining generally higher. 
Statistical self-consistency therefore does not appear to emerge naturally with stronger overall model performance and remains far from saturated among current models. Implementation details are deferred to \cref{subsec:app:experiment_details:acs_thresholded}.

We next hold the population structure fixed while changing the prediction target. In the binned ACS commute-time task reported in \cref{table:self_consistency:acs_commute}, we use the same conditioning attributes and subgroup priors as in the income experiment, but ask models to estimate a different target distribution. The resulting split and order consistency scores differ substantially from those obtained for income, showing that self-consistency is not solely a property of the model or conditioning tree, but can also vary across prediction tasks.
Implementation details are deferred to \cref{subsec:app:experiment_details:acs_commute}.

\begin{figure}[t]
\centering
\input{figures/12_consistency_model_comparison.pgf}
\vspace*{-4.2mm}
\caption{
\textbf{Model comparison.}
We report the split consistency score $\mathrm{SC}_{\varepsilon}(\mathcal{A})$ and order consistency score $\mathrm{OC}_{\varepsilon}(\mathcal{A})$ at tolerance $\varepsilon=0.02$ for the thresholded ACS income prediction task with $\tau=40\mathrm{k}$ USD.
The attribute set $\mathcal{A}$ consists of age and employment status as used in \cref{fig:acs:income_distribution_tree:combined}.
The grey bars show the Artificial Analysis Intelligence (AAI) Index as a proxy for general model capability. Models are grouped by family and ordered within each family by their AAI Index \citep{artificialanalysis2026}.
Despite substantial increases in capability within model families, neither split nor order consistency improves systematically, indicating that statistical self-consistency remains largely unsaturated among current models.
}
\label{fig:self_consistency:model_comp}
\end{figure}

\paragraph{Global opinion modeling.}\label{subsec:wvs_opinion_distributions}

The split and order consistency scores cover all conditioning trees induced by a fixed attribute set $\mathcal{A}$ (\cref{prop:permuted_split}). A consistent conditional estimator, however, must satisfy these identities not only across trees, but for every attribute set and every problem domain. To probe this problem axis, we construct binary conditioning trees from the World Values Survey (WVS), extending our evaluation to opinion distributions across countries beyond the United States. Building on the GlobalOpinionQA dataset~\citep{durmus2024measuringrepresentationsubjectiveglobal}, we use demographic attributes as conditioning variables and examine, for each country, how subpopulation-level estimates aggregate to the corresponding country-level estimate. Each survey question thereby defines a distinct prediction problem on which the consistency checks are instantiated separately.

\begin{table}[t]
\centering
\renewcommand{\arraystretch}{1.6}
\setlength{\tabcolsep}{4pt}
\definecolor{deltagrey}{HTML}{F0F0F0}
\resizebox{1.0\textwidth}{!}{%
\begin{tabular}{l || ccccc | ccccc || c || ccccc | ccccc || c}
\toprule
\multicolumn{1}{c}{} 
& \multicolumn{10}{c}{\bf Split consistency} 
& \multicolumn{1}{c}{}
& \multicolumn{10}{c}{\bf Order consistency}
& \multicolumn{1}{c}{}
\\[-0.4mm]
\cmidrule(l{-2.75pt}r{0pt}){2-11} 
\cmidrule(l{0pt}r{0pt}){12-23}
\noalign{\vskip -0.95mm}
&
& \multicolumn{3}{c}{\makebox[0pt][c]{\canada\;\;\canadaText}}
&
&
& \multicolumn{3}{c}{\makebox[0pt][c]{\indonesia\;\;\indonesiaText}}
&
&
&
& \multicolumn{3}{c}{\makebox[0pt][c]{\canada\;\;\canadaText}}
&
&
& \multicolumn{3}{c}{\makebox[0pt][c]{\indonesia\;\;\indonesiaText}}
&
& \\\hline
\large{Model} 
& $Q_a$ & $Q_b$ & $Q_c$ & $Q_d$ & $Q_e$ 
& $Q_a$ & $Q_b$ & $Q_c$ & $Q_d$ & $Q_e$ 
& Avg.
& $Q_a$ & $Q_b$ & $Q_c$ & $Q_d$ & $Q_e$ 
& $Q_a$ & $Q_b$ & $Q_c$ & $Q_d$ & $Q_e$
& Avg. \\\hline
\openai~GPT-5.4 & \ccol{FFFFFF}1.00 & \ccol{FFFFFF}1.00 & \ccol{EEB0B0}0.33 & \ccol{F5D0D0}0.50 & \ccol{EEB0B0}0.33 & \ccol{FCF2F2}0.67 & \ccol{F5D0D0}0.50 & \ccol{F5D0D0}0.50 & \ccol{EEB0B0}0.33 & \ccol{EEB0B0}0.33 & \ccol{F5D0D0}0.55 &\ccol{FFFFFF}1.00 & \ccol{FCF2F2}0.75 & \ccol{EEB0B0}0.25 & \ccol{FCF2F2}0.75 & \ccol{F5D0D0}0.50 & \ccol{FFFFFF}1.00 & \ccol{FFFFFF}1.00 & \ccol{FCF2F2}0.75 & \ccol{EEB0B0}0.25 & \ccol{FCF2F2}0.75 & \ccol{FCF2F2}0.70\\
\claude~Sonnet 4.6 & \ccol{FCF2F2}0.67 & \ccol{FFFFFF}1.00 & \ccol{F5D0D0}0.50 & \ccol{FFFFFF}1.00 & \ccol{FCF2F2}0.67 & \ccol{F5D0D0}0.50 & \ccol{FFFFFF}0.83 & \ccol{FFFFFF}1.00 & \ccol{F5D0D0}0.50 & \ccol{EEB0B0}0.33 & \ccol{FCF2F2}0.70 &\ccol{FFFFFF}1.00 & \ccol{FCF2F2}0.75 & \ccol{FFFFFF}1.00 & \ccol{FFFFFF}1.00 & \ccol{FFFFFF}1.00 & \ccol{F5D0D0}0.50 & \ccol{FFFFFF}1.00 & \ccol{FCF2F2}0.75 & \ccol{FFFFFF}1.00 & \ccol{FFFFFF}1.00 & \ccol{FFFFFF}0.90\\
\grok~Grok~4.1 Fast & \ccol{F5D0D0}0.50 & \ccol{F5D0D0}0.50 & \ccol{E58484}0.00 & \ccol{FFFFFF}0.83 & \ccol{E58484}0.17 & \ccol{FCF2F2}0.67 & \ccol{FFFFFF}1.00 & \ccol{FCF2F2}0.67 & \ccol{E58484}0.17 & \ccol{EEB0B0}0.33 & \ccol{F5D0D0}0.48 &\ccol{F5D0D0}0.50 & \ccol{FCF2F2}0.75 & \ccol{FCF2F2}0.75 & \ccol{FFFFFF}1.00 & \ccol{F5D0D0}0.50 & \ccol{EEB0B0}0.25 & \ccol{FCF2F2}0.75 & \ccol{FFFFFF}1.00 & \ccol{FFFFFF}1.00 & \ccol{EEB0B0}0.25 & \ccol{FCF2F2}0.68\\
\deepseek~DeepSeek~V-3.2 & \ccol{F5D0D0}0.50 & \ccol{E58484}0.17 & \ccol{E58484}0.17 & \ccol{E58484}0.17 & \ccol{E58484}0.17 & \ccol{E58484}0.17 & \ccol{FFFFFF}0.83 & \ccol{F5D0D0}0.50 & \ccol{E58484}0.17 & \ccol{EEB0B0}0.33 & \ccol{EEB0B0}0.32 &\ccol{FCF2F2}0.75 & \ccol{F5D0D0}0.50 & \ccol{EEB0B0}0.25 & \ccol{E58484}0.00 & \ccol{EEB0B0}0.25 & \ccol{F5D0D0}0.50 & \ccol{F5D0D0}0.50 & \ccol{EEB0B0}0.25 & \ccol{E58484}0.00 & \ccol{F5D0D0}0.50 & \ccol{EEB0B0}0.35\\
\qwen~Qwen3.6 Plus & \ccol{EEB0B0}0.33 & \ccol{FFFFFF}1.00 & \ccol{EEB0B0}0.33 & \ccol{F5D0D0}0.50 & \ccol{E58484}0.17 & \ccol{EEB0B0}0.33 & \ccol{FCF2F2}0.67 & \ccol{FCF2F2}0.67 & \ccol{FFFFFF}0.83 & \ccol{E58484}0.17 & \ccol{F5D0D0}0.50 &\ccol{FCF2F2}0.75 & \ccol{FCF2F2}0.75 & \ccol{FCF2F2}0.75 & \ccol{F5D0D0}0.50 & \ccol{F5D0D0}0.50 & \ccol{FCF2F2}0.75 & \ccol{F5D0D0}0.50 & \ccol{EEB0B0}0.25 & \ccol{F5D0D0}0.50 & \ccol{F5D0D0}0.50 & \ccol{F5D0D0}0.57\\
\qwen~Qwen3.5 27B & \ccol{FCF2F2}0.67 & \ccol{E58484}0.17 & \ccol{F5D0D0}0.50 & \ccol{F5D0D0}0.50 & \ccol{E58484}0.17 & \ccol{E58484}0.00 & \ccol{EEB0B0}0.33 & \ccol{F5D0D0}0.50 & \ccol{EEB0B0}0.33 & \ccol{E58484}0.00 & \ccol{EEB0B0}0.32 &\ccol{EEB0B0}0.25 & \ccol{FCF2F2}0.75 & \ccol{EEB0B0}0.25 & \ccol{F5D0D0}0.50 & \ccol{E58484}0.00 & \ccol{F5D0D0}0.50 & \ccol{EEB0B0}0.25 & \ccol{EEB0B0}0.25 & \ccol{F5D0D0}0.50 & \ccol{FCF2F2}0.75 & \ccol{F5D0D0}0.40\\
\qwen~Qwen3.5 35B & \ccol{F5D0D0}0.50 & \ccol{F5D0D0}0.50 & \ccol{EEB0B0}0.33 & \ccol{F5D0D0}0.50 & \ccol{E58484}0.00 & \ccol{E58484}0.17 & \ccol{FCF2F2}0.67 & \ccol{EEB0B0}0.33 & \ccol{FCF2F2}0.67 & \ccol{E58484}0.17 & \ccol{EEB0B0}0.38 &\ccol{EEB0B0}0.25 & \ccol{FCF2F2}0.75 & \ccol{EEB0B0}0.25 & \ccol{F5D0D0}0.50 & \ccol{F5D0D0}0.50 & \ccol{F5D0D0}0.50 & \ccol{EEB0B0}0.25 & \ccol{EEB0B0}0.25 & \ccol{EEB0B0}0.25 & \ccol{F5D0D0}0.50 & \ccol{F5D0D0}0.40\\
\qwen~Qwen3.5 122B & \ccol{F5D0D0}0.50 & \ccol{EEB0B0}0.33 & \ccol{F5D0D0}0.50 & \ccol{EEB0B0}0.33 & \ccol{E58484}0.17 & \ccol{EEB0B0}0.33 & \ccol{FCF2F2}0.67 & \ccol{F5D0D0}0.50 & \ccol{EEB0B0}0.33 & \ccol{EEB0B0}0.33 & \ccol{F5D0D0}0.40 &\ccol{E58484}0.00 & \ccol{FFFFFF}1.00 & \ccol{F5D0D0}0.50 & \ccol{FCF2F2}0.75 & \ccol{F5D0D0}0.50 & \ccol{F5D0D0}0.50 & \ccol{F5D0D0}0.50 & \ccol{F5D0D0}0.50 & \ccol{FCF2F2}0.75 & \ccol{F5D0D0}0.50 & \ccol{F5D0D0}0.55\\
\specialrule{\heavyrulewidth}{0pt}{0pt}
\end{tabular}
}
\vspace{-0.1ex}
\caption{%
\textbf{Self-consistency in global opinion prediction.}
For five survey questions sampled from the WVS $(Q_a, \dots, Q_e)$ and two countries, we report the split consistency score $\mathrm{SC}_\varepsilon(\mathcal{A})$ and the order consistency score $\mathrm{OC}_\varepsilon(\mathcal{A})$ at tolerance $\varepsilon = 0.02$, where $\mathcal{A}$ consists of age (younger than 45 years old vs.\ at least 45 years old) and income splits (low vs. high). 
Each survey question defines a distinct prediction problem. 
Higher values indicate stronger self-consistency and are shown in lighter red tones. Across models, countries, and questions, current models fail to satisfy the self-consistency checks reliably.
}
\label{table:wvs_eval:results}
\end{table}

We run our evaluation on data from Canada and Indonesia, the two countries with the largest sample sizes. For each country, we construct a BCT of depth two using age and income splits. We then elicit categorical response distributions at every node for five selected survey questions. The full question wording and answer scales are reported in \cref{tab:wvs_questions}.
Consistency is assessed using the reference-free checks from \cref{sec:self_consistency_checks}. 
For each question, this includes the split and order consistency checks. Results for tolerance $\varepsilon = 0.02$ are reported in \cref{table:wvs_eval:results}. Implementation details and the prompt template are deferred to \cref{subsec:app:experiment_details:wvs} and \cref{subsec:app:sec:prompt_template:wvs_distribution}, respectively.

The WVS results show that the self-consistency failures observed on ACS are not specific to income prediction or to the {U.S.} census setting. Across both countries, models satisfy only a fraction of the split and order consistency checks. No model is uniformly consistent across questions. Even the strongest models exhibit scores that vary substantially across survey questions and countries. Thus, WVS provides complementary evidence that current models do not elicit faithful probability estimates.

\subsection{Self-consistency in forecasting problems}\label{subsec:synthetic_examples}

Recent work on LLM forecasting, such as Prophet Arena \citep{yang2025llmasaprophetunderstandingpredictiveintelligence}, studies language models as probabilistic forecasters of real-world events and evaluates their predictions using proper scoring rules, calibration, and market-based baselines. 
Prior work has also evaluated whether LLM forecasts satisfy logical consistency relations, such as negation or conjunction, and has used consistency checks to reveal model failures when ground truth is unavailable or difficult to verify \citep{paleka2025consistency, fluri2024superhuman}. We take a complementary perspective: rather than checking isolated logical relations among related forecasts, we ask whether forecasts compose coherently across explicit partitions of the conditioning space. Forecasting is therefore a natural setting for our framework because predictions are explicitly conditioned on available information. As market odds, news sources, or partial event outcomes are added to the context, the model’s forecast should update in a statistically coherent way.
Motivated by this perspective, we instantiate our self-consistency checks in forecasting settings where the conditioning variables encode contextual event information rather than demographic attributes.
In particular, we consider two synthetic examples: a tennis forecasting task, where court surface and first-set outcome refine a Federer–Nadal match prediction, and a fictional fantasy-combat task, where time of day and wind conditions influence the probability of a successful action.
Both examples preserve the same binary conditioning tree structure used throughout the paper, allowing us to evaluate whether model forecasts compose coherently across increasingly specific event contexts. More details about the experiments are provided in \cref{subsec:app:experiment_details:tennis} and \cref{subsec:app:experiment_details:fantasy}.

\begin{table}[t]
\centering
\renewcommand{\arraystretch}{1.4}
\setlength{\tabcolsep}{4pt}
\begin{subtable}[t]{0.48\linewidth}
\centering
\begin{tabular}{m{26mm} >{\centering\arraybackslash}m{20mm} >{\centering\arraybackslash}m{20mm}}
\toprule
Model & Split consistency & Order consistency\\[0.75mm]\hline
\openai~GPT-5.4 & \ccol{EEB0B0}0.33 & \ccol{F5D0D0}0.50 \\
\claude~Sonnet 4.6 & \ccol{F5D0D0}0.50 & \ccol{FCF2F2}0.75 \\
\grok~Grok 4.1 Fast & \ccol{E58484}0.17 & \ccol{FCF2F2}0.75 \\
\qwen~Qwen3.6 Plus & \ccol{E58484}0.00 & \ccol{EEB0B0}0.25 \\
\specialrule{\heavyrulewidth}{0pt}{0pt}
\end{tabular}
\caption{
\textbf{Tennis forecasting.}
The attribute set $\mathcal{A}$ consists of the court surface (clay vs.\ non-clay) and the first-set outcome (Federer vs.\ Nadal wins the first set), refining a Federer--Nadal match prediction.
}
\label{table:self_consistency:tennis}
\end{subtable}
\hspace{0.025\linewidth} 
\begin{subtable}[t]{0.48\linewidth}
\centering
\begin{tabular}{m{26mm} >{\centering\arraybackslash}m{20mm} >{\centering\arraybackslash}m{20mm}}
\toprule
Model & Split consistency & Order consistency\\[0.75mm]\hline
\openai~GPT-5.4 & \ccol{FCF2F2}0.67 & \ccol{FCF2F2}0.75 \\
\claude~Sonnet 4.6 & \ccol{EEB0B0}0.33 & \ccol{FCF2F2}0.75 \\
\grok~Grok 4.1 Fast & \ccol{E58484}0.17 & \ccol{E58484}0.00 \\
\qwen~Qwen3.6 Plus & \ccol{FFFFFF}1.00 & \ccol{FCF2F2}0.75 \\
\specialrule{\heavyrulewidth}{0pt}{0pt}
\end{tabular}
\caption{%
\textbf{Fantasy combat.}
The attribute set $\mathcal{A}$ consists of the time of day (day vs.\ night) and the weather conditions (windy vs.\ calm), influencing the probability of a successful action.
}
\label{table:self_consistency:fantasy}
\end{subtable}
\caption{%
\textbf{Self-consistency in forecasting.}
We evaluate self-consistency on two synthetic forecasting tasks in which the conditioning attributes encode contextual event information. For each task, we report the split consistency score $\mathrm{SC}_\varepsilon(\mathcal{A})$ and the order consistency score $\mathrm{OC}_\varepsilon(\mathcal{A})$ at tolerance $\varepsilon = 0.02$. Higher values indicate stronger self-consistency and are shown in lighter red tones. Self-consistency violations are widespread even in these simple synthetic settings.
}
\label{table:self_consistency:synthetic_combined}
\end{table}

Irrespective of the prediction task or the actual event outcomes, we would expect complementary claims made by the same model to be internally consistent.
\Cref{table:self_consistency:tennis} and \cref{table:self_consistency:fantasy} show that self-consistency failures are widespread in these synthetic tasks. 
In the tennis example, no model satisfies more than half of the split consistency checks. This suggests that even intuitive forecasting conditions, such as surface type and first-set outcome, do not induce a coherent system of conditional forecasts. Order consistency scores are consistently higher, indicating that models are more robust to reorderings of the conditioning constraints than to their probabilistic composition.
The fantasy example exhibits a similar pattern, although with larger variation across models. Split consistency scores range from complete failure to a perfect score. Note that the fantasy task has no external ground truth. Still, self-consistency can be assessed in a principled way, whereas alignment cannot be evaluated.


\section{Discussion}\label{sec:discussion}

Large language models are commonly interpreted as performing conditional inference in context.
To stress-test this interpretation, we systematically construct complementary conditioning events and investigate how the corresponding LLM estimates aggregate into marginal estimates. 
Across ACS income prediction and WVS opinion modeling tasks, as well as synthetic forecasting examples, we find widespread violations of the aggregation identities implied by the law of total probability. 
Investigating the origins of these self-consistency failures in human prediction tasks, we find that direct aggregate prompting is systematically less reliable than reconstructing population-level estimates from subgroup-level estimates, a phenomenon we call the \emph{macro fallacy}.
Together, our findings suggest that even when models contain useful subgroup-level knowledge, direct prompting does not reliably elicit aggregate estimates that are consistent with the corresponding conditional estimates. Self-consistency thus emerges as a distinct axis of model quality, complementary to alignment, that remains underexplored and far from saturated as a benchmark score.

These consistency violations have practical and conceptual implications. For example, alignment at the population level does not imply accurate subpopulation estimates. Thus, downstream estimates can depend on arbitrary choices of prompt granularity, decomposition, or conditioning order. As a result, two statistically equivalent ways of specifying the same target quantity may lead to incompatible model estimates. This matters for applications such as in-context personalization, synthetic survey generation, and forecasting, where model outputs are routinely assumed to approximate context-dependent conditional probabilities. The same assumption also underlies methods that interpret model outputs as samples from learned posterior belief distributions. Our results show that such interpretations require more than locally plausible predictions, since the model’s estimates must also compose into a consistent system of conditionals. Until self-consistency is explicitly verified or enforced, methods that rely on the conditional-inference view remain only partially justified.

We hope that our work motivates efforts to explicitly improve self-consistency during model development.
The aggregation identities we propose generate verifiable constraints that require no external supervision and apply even when distributional alignment cannot be measured.
The macro fallacy sharpens this picture: it shows that self-consistency violations have a systematic direction, with reconstructed aggregates from lower-level conditionals outperforming direct estimates. 
This asymmetry matters for enforcement, because the consistency constraints themselves are directionless. A model can be made consistent by adjusting fine-grained estimates toward the aggregate or vice versa, with no difference in the resulting score. Anchoring the constraints at the better-aligned lower-level estimates would instead propagate fine-grained information upward and thereby improve aggregate estimates.

\paragraph{Limitations.}
Our empirical evaluation of distributional alignment and the observed macro fallacy relies primarily on ACS data, which provides natural population references but covers only a particular application domain. The resulting alignment errors also depend on design choices such as verbalized probability elicitation, post-hoc normalization, and the selected demographic splits. While our self-consistency evaluation extends beyond survey-based settings, self-consistency remains only a necessary condition for faithful conditional inference. A model can be internally consistent while still being misaligned with the target distribution. In particular, our synthetic examples demonstrate that the consistency perspective is not tied to survey-based evaluation, but they do not by themselves establish accuracy in real-world forecasting or decision-making settings. Future work should evaluate distributional alignment and self-consistency across broader domains, richer conditioning structures, and alternative elicitation methods. It would also be valuable to develop interventions that directly optimize for distributional alignment and statistical self-consistency.


\section*{Acknowledgments}



PW was supported by the Max Planck ETH Center for Learning Systems.
TKB was supported by an ETH AI Center Postdoctoral Fellowship.
CMD acknowledges the financial support of the Hector Foundation.
\looseness=-1


\bibliographystyle{plainnat}
\bibliography{references}


\newpage
\appendix

\crefalias{section}{appendix}
\crefalias{subsection}{appendix}
\crefalias{subsubsection}{appendix}


\section*{\LARGE Appendices}

\section*{Contents}

{
\startcontents
\printcontents{}{0}[2]{}
}


\section{Disentangling conditionals and priors}\label{sec:appendix:disentangling_cond_priors}

This appendix complements the aggregation results in \cref{sec:macro_fallacy} by separating the two ingredients that enter each reconstructed population estimate: the LLM-estimated subgroup conditionals and the subgroup priors used to weight them. The survey data makes this decomposition possible because it provides reference statistics for each subgroup induced by the binary conditioning tree. We can therefore go beyond evaluating whether reconstructed population-level estimates align with the survey-based ground truth, and directly analyze how well the LLM estimates the conditional quantities at different levels of specificity. 
In particular, we first study an oracle-prior regime in which subgroup estimates are aggregated using ground-truth survey priors, thereby isolating the contribution of the LLM-estimated conditional probabilities to the macro fallacy.
We then compare the LLM-estimated priors themselves against the survey-based subgroup proportions to assess how errors in population weighting evolve as the partition becomes finer.

\subsection{Conditional alignment across tree levels}\label{subsec:appendix:gt_prior_aggregation}

The aggregate alignment error in \eqref{eq:agg_error} is computed from both LLM-estimated subgroup conditionals $\hat{T}_\tau(\mathcal{S})$ and LLM-estimated subgroup priors $\hat{\gamma}_\mathcal{S}$, thereby conflating two sources of error. To isolate the error caused by the conditionals, we replace the LLM-estimated priors with ground-truth subgroup priors $\gamma_{\mathcal{S}}$ obtained from the survey data. This yields the conditional alignment error
\begin{equation}\label{eq:cond_agg_error}
\text{CAErr}(\ell)\triangleq\bigg\vert\,T_\tau(\mathcal{X}) - \sum_{S\in\mathcal{P}_\ell(\mathcal{X})} \gamma_\mathcal{S}\,\hat{T}_\tau(\mathcal{S})\,\bigg\vert.
\end{equation}
This oracle-prior regime removes uncertainty about the population composition and measures how accurately the LLM-estimated conditionals aggregate to the ground-truth population statistic.

\begin{figure}[t]
\centering
\input{figures/3f_aggregation_relative_bars_gt.pgf}
\caption{
\textbf{Conditional aggregation gain using ground-truth priors.}
Reconstructing population-level estimates with ground-truth subgroup priors from increasingly fine-grained subgroup estimates consistently improves alignment relative to direct aggregate prompting.
Error bars show 90\,\% confidence intervals computed by bootstrapping with 1000 samples.
}
\label{fig:aggregation_refinement:gt_prior}
\end{figure}

\Cref{fig:aggregation_refinement:gt_prior} reports the conditional aggregation gain $1 - \text{CAErr}(\ell)/\text{CAErr}(0)$ across income thresholds. Positive values indicate that aggregation from subgroup estimates improves over direct aggregate prompting. 
We observe that reconstructed estimates from deeper partitions are consistently better aligned with the ACS reference statistic than the root-level estimate obtained by direct prompting. This shows that aggregation gains are already present in the LLM’s conditional estimates, indicating that subgroup-conditioned estimates contain useful population-level information that is not reliably elicited by direct aggregate prompting.

\paragraph{Categorical income distributions.}
We next repeat the oracle-prior aggregation experiment for a categorical income target. Instead of estimating the probability of exceeding a fixed threshold, the model is asked to estimate the full distribution over income bins. To quantify alignment for such distribution-valued targets, we generalize the conditional alignment error in \eqref{eq:cond_agg_error} to
\begin{equation}\label{eq:cond_agg_error:wasserstein}
\text{CAErr}(\ell)\triangleq\overline{\mathcal{W}}_1\bigg(T_h(\mathcal{X}),\,{\sum}_{\mathcal{S}\in\mathcal{P}_\ell(\mathcal{X})} \gamma_\mathcal{S}\,\hat{T}_h(\mathcal{S})\bigg),
\end{equation}
where $\overline{\mathcal{W}}_1$ denotes the normalized Wasserstein-1 distance introduced in \eqref{eq:wasserstein:ordinal:main}. This definition strictly generalizes the thresholded case. A threshold event $\{Y > \tau\}$ induces Bernoulli distributions with atoms at unit distance, for which the Wasserstein-1 distance reduces to the absolute difference of tail probabilities, recovering \eqref{eq:cond_agg_error}.

\Cref{fig:app:error_aggregated_dist_binned_setting} shows the reconstructed population-level income distributions at different levels of the BCT, along with the corresponding errors \eqref{eq:cond_agg_error:wasserstein}. The root-level estimate corresponds to direct elicitation of the aggregate income distribution, whereas levels~1 to~3 reconstruct this estimate from increasingly refined LLM-estimated subgroup distributions, using ACS ground-truth subgroup priors as weights. The tree structure is detailed in \cref{subsec:app:exp_details:alignment_error}. 
Consistent with the thresholded results, the reconstructed distributions are better aligned with the ACS reference distribution than the direct root-level estimate. This shows that the benefit of subgroup aggregation is not limited to binary threshold events. Further, we observe that the level-3 reconstruction is slightly less accurate than the level-2 reconstruction, suggesting that the gains from refinement are eventually offset by noisier estimates for very fine-grained subgroups.

\begin{figure}[t]
\centering
\resizebox{1.02\textwidth}{!}{%
\begin{tikzpicture}[
  font=\fontsize{7}{8}\selectfont,
  grow=right,
  level distance=40.5mm,
  edge from parent/.style={
    draw=black!75,
    line width=1.6pt,
    -{Stealth[length=2mm, width=2mm]},
  },
  aggTreenode/.style={
    draw=black!65,
    rounded corners=6pt,
    line width=1.6pt,
    fill=white,
    align=center,
    inner xsep=6pt,
    inner ysep=6pt,
    text width=29mm
  },
]
\begin{scope}

\node[aggTreenode, draw={orange!90!red}, label={\bf Level 0 (Root)}] {$\mathrm{CAErr}(0) = 0.14$\vspace*{1mm}
   \centering
\begingroup
\tikzset{
  every path/.style={},
  every node/.style={},
  every picture/.style={},
}%
\pgfsetcornersarced{\pgfpointorigin}%
\resizebox{\linewidth}{!}{%
\input{figures/tree_agg_appendix/agg_binned_hist_0.pgf}
}
\endgroup
}
  child { node[aggTreenode, draw={green!45!yellow}, label={\bf Level 1}] {$\mathrm{CAErr}(1) = 0.04$\vspace*{1mm}
         \centering
\begingroup
\tikzset{
  every path/.style={},
  every node/.style={},
  every picture/.style={},
}%
\pgfsetcornersarced{\pgfpointorigin}%
\resizebox{\linewidth}{!}{%
\input{figures/tree_agg_appendix/agg_binned_hist_1.pgf}
}
\endgroup
      }
        child { node[aggTreenode, draw={green!87!yellow}, label={\bf Level 2}] {$\mathrm{CAErr}(2) = 0.03$\vspace*{1mm}
               \centering
\begingroup
\tikzset{
  every path/.style={},
  every node/.style={},
  every picture/.style={},
}%
\pgfsetcornersarced{\pgfpointorigin}%
\resizebox{\linewidth}{!}{%
\input{figures/tree_agg_appendix/agg_binned_hist_2.pgf}
}
\endgroup
            }
              child { node[aggTreenode, draw={green!76!yellow}, label={\bf Level 3}] {$\mathrm{CAErr}(3) = 0.03$\vspace*{1mm}
                     \centering
\begingroup
\tikzset{
  every path/.style={},
  every node/.style={},
  every picture/.style={},
}%
\pgfsetcornersarced{\pgfpointorigin}%
\resizebox{\linewidth}{!}{%
\input{figures/tree_agg_appendix/agg_binned_hist_3.pgf}
}
\endgroup
                  } } } };

\end{scope}


\definecolor{gtbin0}{RGB}{46,107,142}
\definecolor{gtbin1}{RGB}{30,152,138}
\definecolor{gtbin2}{RGB}{75,194,108}
\definecolor{gtbin3}{RGB}{189,222,38}

\node[anchor=north] at (6.45, 3.4) {
\centering
\begingroup
\tikzset{
  every path/.style={},
  every node/.style={},
  every picture/.style={},
}%
\pgfsetcornersarced{\pgfpointorigin}%
\resizebox{124mm}{!}{%
\input{figures/tree_agg_appendix/horizontal_legend.pgf}
}
\endgroup
};


\begin{scope}[xshift=140mm, yshift=-11.2mm]

\def\barwidth{0.45}
\def\barheight{2.25}
\def\nsteps{200}
\def\maxval{0.50}

\pgfmathtruncatemacro{\nstepsmone}{\nsteps-1}
\foreach \i in {0,...,\nstepsmone} {

    \pgfmathsetmacro{\y}{\i/\nsteps * \barheight}
    \pgfmathsetmacro{\val}{\i/\nsteps * \maxval}

    \ifdim \val pt < 0.025001pt
        \pgfmathsetmacro{\rawmix}{(1 - \val/0.025) * 100}
        \pgfmathtruncatemacro{\mix}{\rawmix}
        \colorlet{barcolor}{darkgreen!\mix!green}
    \else
        \ifdim \val pt < 0.050001pt
            \pgfmathsetmacro{\rawmix}{(1 - (\val - 0.025)/0.025) * 100}
            \pgfmathtruncatemacro{\mix}{\rawmix}
            \colorlet{barcolor}{green!\mix!yellow}
        \else
            \ifdim \val pt < 0.100001pt
                \pgfmathsetmacro{\rawmix}{(1 - (\val - 0.05)/0.05) * 100}
                \pgfmathtruncatemacro{\mix}{\rawmix}
                \colorlet{barcolor}{yellow!\mix!orange}
            \else
                \pgfmathsetmacro{\rawmix}{(1 - (\val - 0.10)/0.40) * 100}
                \pgfmathtruncatemacro{\mix}{\rawmix}
                \colorlet{barcolor}{orange!\mix!red}
            \fi
        \fi
    \fi

    \fill[barcolor]
        (0, \y) rectangle
        (\barwidth, \y + \barheight/\nsteps + 0.001);
}

\draw[black!60, line width=0.9pt] (0,0) rectangle (\barwidth,\barheight);

\pgfmathsetmacro{\tickzero}{0.00/\maxval*\barheight}
\pgfmathsetmacro{\tickone}{0.025/\maxval*\barheight}
\pgfmathsetmacro{\ticktwo}{0.05/\maxval*\barheight}
\pgfmathsetmacro{\tickthree}{0.10/\maxval*\barheight}
\pgfmathsetmacro{\tickfour}{0.50/\maxval*\barheight}

\draw[black!60, line width=0.9pt] (\barwidth, \tickzero) -- (\barwidth + 0.12, \tickzero);
\draw[black!60, line width=0.9pt] (\barwidth, \tickthree) -- (\barwidth + 0.12, \tickthree);
\draw[black!60, line width=0.9pt] (\barwidth, \tickfour) -- (\barwidth + 0.12, \tickfour);

\node[anchor=west, font=\scriptsize] at (\barwidth + 0.18, \tickzero) {$0$};
\node[anchor=west, font=\scriptsize] at (\barwidth + 0.18, \tickthree) {$0.10$};
\node[anchor=west, font=\scriptsize] at (\barwidth + 0.18, \tickfour) {$0.50$};

\end{scope}


\end{tikzpicture}
}
\vspace*{-4mm}
\caption{
\textbf{Reconstructed aggregate income distributions with ground-truth priors.}
We reconstruct the population-level income distribution at each tree level by aggregating LLM-estimated subgroup distributions using ACS subgroup priors. Solid bars show the survey-based aggregate distribution, while hatched bars show the aggregate reconstructed from LLM-estimated subgroup conditionals. The colored border indicates the conditional alignment error $\mathrm{CAErr}(\ell)$, with red denoting larger errors and green denoting smaller errors.
}
\label{fig:app:error_aggregated_dist_binned_setting}
\end{figure}


\subsection{LLM-estimated priors}\label{subsec:appendix:llm_prior_analysis}

Building on the error decomposition in \cref{subsec:disentangling_error_sources}, we now provide a more detailed analysis of the LLM-estimated subgroup priors. As described in the main text and detailed in \cref{subsec:app:prior_elicitation}, we elicit these priors through a level-wise protocol.
Let $\gamma_\ell$ denote the survey-based ground-truth distribution over the subgroups at tree level $\ell$, and let $\hat{\gamma}_\ell$ denote the corresponding LLM-estimated prior distribution. We measure their discrepancy using the total variation (TV) distance,
\begin{equation}
\operatorname{TV}(\hat{\gamma}_\ell, \gamma_\ell) = \frac{1}{2}\sum_{\mathcal{S}\in\mathcal{P}_\ell} \big\vert\hat{\gamma}_\ell(\mathcal{S}) - \gamma_\ell(\mathcal{S})\big\vert,
\end{equation}
where $\mathcal{P}_\ell$ denotes the partition induced by level $\ell$. Since total variation distance takes values in $[0,1]$, it provides a normalized measure of prior misalignment that remains directly comparable across tree levels, even as the number of subgroups increases with depth.

\begin{figure}[t]
\centering
\input{figures/7_llm_prior_tv_dist.pgf}
\begin{subfigure}[t]{0.48\linewidth}
\vspace*{-4mm}
\caption{
BCT with splits: \texttt{AGEP}, \texttt{COW}, \texttt{WKHP}, \texttt{OCCP}
}
\label{subfig:llm_prior_comparison:standard}
\end{subfigure}
\hfill
\begin{subfigure}[t]{0.48\linewidth}
\vspace*{-4mm}
\caption{
BCT with splits: \texttt{COW}, \texttt{MAR}, \texttt{ESR}, \texttt{AGEP}
}
\label{subfig:llm_prior_comparison:different_split}
\end{subfigure}
\caption{
\textbf{LLM-estimated priors become less aligned at deeper tree levels.}
For each tree level, we compare the LLM-estimated distribution over subgroups with the survey-based ground-truth prior distribution from the ACS data using total variation (TV) distance. Across models, prior alignment deteriorates as the tree becomes more fine-grained. Error bars show 90\,\% confidence intervals computed by bootstrapping with 1000 samples.
}
\label{fig:app:llm_prior_comparison}
\end{figure}

\Cref{fig:app:llm_prior_comparison} shows that LLM-estimated priors become less aligned with the survey-based ground truth as the tree becomes more fine-grained. This trend is consistent across the evaluated models. While coarse population splits are estimated relatively accurately, errors increase as the model is asked to assign probability mass across more specific subpopulations. This suggests that estimating the population composition of fine-grained demographic partitions remains challenging for current models.


\section{Extensions of self-consistency checks}
\label{sec:self-consistency}

We provide additional details for the self-consistency checks introduced in \cref{sec:self_consistency_checks}. 
We first discuss the choice of discrepancy metric for binary and categorical distributions. 
We then interpret self-consistency as operations on a binary conditioning tree.
Finally, we derive the expected variance contraction criterion from the law of total variance.  

\subsection{Discrepancy metrics for outcome distributions}\label{sub:appendix:sc:categorical}

The self-consistency checks in \cref{def:split_consistency,def:order_consistency} are stated for a general discrepancy metric. This allows the same aggregation principle to be applied to different types of elicited model outputs. In this subsection, we discuss how the distance metric can be defined in three canonical examples.

\paragraph{Binary distributions.}
In the ACS income setting of \cref{sec:census,sec:macro_fallacy}, the target event is $Y > \tau$, where $Y$ denotes yearly income in USD and $\tau$ is an income threshold. The corresponding tail probability can be written as a special case of our general target functional:
\begin{equation}
T_\tau(\mathcal{S})\triangleq\mathbb{P}(Y > \tau\given X\in\mathcal{S}) = \mathbb{E}\bigs[h(Y)\given X\in\mathcal{S}\bigs]\qquad\text{with}\qquad h(Y) = \mathbbold{1}\{Y > \tau\}.
\end{equation}
Thus, each estimate corresponds to the success probability of a Bernoulli distribution indicating whether an individual in subpopulation $\mathcal{S}$ has income above the threshold. For this binary target, the natural discrepancy between a direct estimate $\hat{T}_\tau(\mathcal{S})$ and the aggregated estimate $\hat{T}_{\tau,\text{agg}}(\mathcal{S})$ from \eqref{eq:consistency:aggregation} is the absolute difference
\begin{equation}
\dist\big(\hat{T}_\tau(\mathcal{S}), \hat{T}_{\tau,\text{agg}}(\mathcal{S})\big) = \big\vert\hat{T}_\tau(\mathcal{S}) - \hat{T}_{\tau,\text{agg}}(\mathcal{S})\big\vert.
\end{equation}
For Bernoulli distributions, the absolute difference between success probabilities coincides with the (normalized) Wasserstein-1 distance on the ordered support $\{0,1\}$.

\paragraph{Ordered categorical outcomes.}
More generally, the outcome of interest may be categorical, with support $\{1,\dots,K\}$. 
In this case, we use $\hat{T}_h(\mathcal{S})$ to denote the elicited conditional distribution over the $K$ categories conditional on membership in $\mathcal{S}$.
Similarly, $\hat{T}_{h,\text{agg}}(\mathcal{S})$ denotes the corresponding prior-weighted aggregate. 
The consistency error can then be defined by using an appropriate distance between probability distributions. For ordinal response options, we use the normalized Wasserstein-1 distance,
\begin{equation}
\dist\big(\hat{T}_h(\mathcal{S}), \hat{T}_{h,\text{agg}}(\mathcal{S})\big) = \overline{\mathcal{W}}_1\big(\hat{T}_h(\mathcal{S}), \hat{T}_{h,\text{agg}}(\mathcal{S})\big)\triangleq\frac{1}{K-1}\sum_{k=1}^{K-1} \big\vert F(k) - F_\text{agg}(k)\big\vert,
\end{equation}
where $F$ and $F_\text{agg}$ denote the corresponding cumulative distribution functions of $\hat{T}_h(\mathcal{S})$ and $\hat{T}_{h,\text{agg}}(\mathcal{S})$, respectively. 
This metric accounts for how far probability mass must be shifted along the response order. The normalization maps the distance to $[0,1]$, making discrepancies comparable across questions with different numbers of response options.

\paragraph{Nominal categorical outcomes.}
For nominal response sets, the category labels have no meaningful order. We therefore use a permutation-invariant discrepancy, such as total variation distance:
\begin{equation}
\dist\big(\hat{T}_h(\mathcal{S}), \hat{T}_{h,\text{agg}}(\mathcal{S})\big) = \operatorname{TV}\big(\hat{T}_h(\mathcal{S}), \hat{T}_{h,\text{agg}}(\mathcal{S})\big).
\end{equation}
If both distributions are represented as probability vectors $p, p_\text{agg}\in\Delta^{K-1}$, this is given by
\begin{equation}
\operatorname{TV}\big(\hat{T}_h(\mathcal{S}), \hat{T}_{h,\text{agg}}(\mathcal{S})\big) = \frac{1}{2}\sum_{k=1}^{K} \big\vert p[k] - p_\text{agg}[k]\big\vert.
\end{equation}
The factor $1/2$ normalizes the distance to lie in $[0,1]$.


\subsection{Wasserstein distance on ordered discrete outcomes}\label{subsec:appendix:wasserstein}

This subsection recalls the closed-form expression of the Wasserstein-$1$ distance on ordered discrete supports and explains the normalization used to compare distributions with different numbers of categories.

\begin{lemma}\label{lem:wasserstein_cdf}
Let $\mu$ and $\nu$ be discrete probability measures on the ordered support $\{1,\dots,K\}$, equipped with the metric $d(i, j) = \vert i - j\vert$, and let $F_\mu$ and $F_\nu$ denote their cumulative distribution functions. Then
\begin{equation}\label{eq:wasserstein_cdf}
\mathcal{W}_1(\mu,\nu) = \sum_{k=1}^{K-1} \big\vert F_\mu(k) - F_\nu(k)\big\vert.
\end{equation}
\end{lemma}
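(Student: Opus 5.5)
We will prove the identity by establishing two matching inequalities: an upper bound from an explicit transport plan and a lower bound from Kantorovich--Rubinstein duality. The organizing idea is to view transport on $\{1,\dots,K\}$ as moving mass across the $K-1$ unit gaps between consecutive points.

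\textbf{Flux identity.} For any coupling $\pi$ of $\mu$ and $\nu$ and any pair $(i,j)$, we have
\begin{equation*}
\vert i-j\vert=\sum_{k=1}^{K-1}\mathbbold{1}\{\min(i,j)\le k<\max(i,j)\}.
\end{equation*}
Hence the cost is
\begin{equation*}
\sum_{i,j}\pi(i,j)\vert i-j\vert=\sum_{k=1}^{K-1}\pi\big(\{(i,j): \text{the pair straddles the gap } (k,k+1)\}\big).
\end{equation*}
Fix a gap $k$. The mass moved from $\{1,\dots,k\}$ to $\{k+1,\dots,K\}$, minus the mass moved in the opposite direction, equals $\pi(i\le k)-\pi(j\le k)=F_\mu(k)-F_\nu(k)$. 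The total mass crossing gap $k$ is therefore at least $\vert F_\mu(k)-F_\nu(k)\vert$. Summing over $k$ and minimizing over $\pi$ gives $\mathcal{W}_1(\mu,\nu)\ge\sum_k\vert F_\mu(k)-F_\nu(k)\vert$.

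As a cross-check on this lower bound, duality gives the same result. Take the $1$-Lipschitz test function $f$ with increments $f(k+1)-f(k)=-\operatorname{sign}(F_\mu(k)-F_\nu(k))$. Summation by parts yields
\begin{equation*}
\sum_i f(i)(\mu_i-\nu_i)=-\sum_{k=1}^{K-1}\big(f(k+1)-f(k)\big)\big(F_\mu(k)-F_\nu(k)\big)=\sum_k\vert F_\mu(k)-F_\nu(k)\vert.
\end{equation*}

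\textbf{Matching upper bound.} Use the monotone (quantile) coupling $\pi^\ast$ defined by $(F_\mu^{-1}(U),F_\nu^{-1}(U))$ with $U\sim\mathrm{Unif}[0,1]$. Under $\pi^\ast$ no gap is crossed in both directions. To see this, note that the event $\{i\le k<j\}$ is exactly $\{F_\nu(k)<U\le F_\mu(k)\}$. By the same token, $\{j\le k<i\}$ is $\{F_\mu(k)<U\le F_\nu(k)\}$. At most one of these two events is nonempty, and its probability is $\vert F_\mu(k)-F_\nu(k)\vert$. Plugging this into the flux identity shows the cost of $\pi^\ast$ equals $\sum_k\vert F_\mu(k)-F_\nu(k)\vert$, which matches the lower bound.

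\textbf{Main obstacle.} The delicate step is the exact characterization of the quantile coupling events in terms of $U$. It requires care with the generalized inverse, using $F^{-1}(u)\le k\iff u\le F(k)$, and with ties at atoms. This is routine but must be stated precisely.
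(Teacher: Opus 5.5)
Your proof is correct, but it takes a different route from the paper. The paper never works with couplings. It quotes the classical real-line identity $\mathcal{W}_1(\mu,\nu)=\int_{\mathbb{R}}\vert F_\mu(t)-F_\nu(t)\vert\,\mathrm{d}t$ from a standard optimal-transport reference and regards $\mu,\nu$ as measures on $\mathbb{R}$. It then notes that the integrand vanishes outside $[1,K)$ and equals $\vert F_\mu(k)-F_\nu(k)\vert$ on each $[k,k+1)$, so the integral collapses to the sum.

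You instead prove the discrete identity from first principles. Decomposing $\vert i-j\vert$ into unit gaps turns the cost of any coupling into total flux across the $K-1$ gaps. The marginal constraints fix the net flux across gap $k$ at $F_\mu(k)-F_\nu(k)$, which gives the lower bound. The quantile coupling attains it because it never crosses a gap in both directions, which gives the upper bound. In effect, you re-derive the real-line formula specialized to the lattice.

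The paper's argument is a two-line reduction, but it depends on an external theorem. It also tacitly uses the harmless fact that $\mathcal{W}_1$ on $\{1,\dots,K\}$ with $d(i,j)=\vert i-j\vert$ coincides with $\mathcal{W}_1$ computed in $\mathbb{R}$. Your argument is elementary and self-contained, and it additionally exhibits an explicit optimal coupling and an explicit optimal dual potential.

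Two minor remarks. First, your duality cross-check only needs weak duality: for $1$-Lipschitz $f$, $\sum_i f(i)(\mu_i-\nu_i)=\sum_{i,j}\pi(i,j)\bigl(f(i)-f(j)\bigr)\le\sum_{i,j}\pi(i,j)\vert i-j\vert$. So it is by itself a complete proof of the lower bound, without invoking Kantorovich--Rubinstein. Second, for the upper bound it is cleanest to take $U$ uniform on $(0,1]$, or to define $F^{-1}(u)=\min\{x\in\{1,\dots,K\}: F(x)\ge u\}$. Then $F^{-1}(u)\le k\iff u\le F(k)$ is immediate, and atoms need no separate treatment.
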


\begin{proof}
For probability measures on the real line, the Wasserstein-$1$ distance admits the classical closed form
\begin{equation}\label{eq:wasserstein:real_line}
\mathcal{W}_1(\mu,\nu) = \int_{\mathbb{R}} \bigs\vert F_\mu(t) - F_\nu(t)\bigs\vert\;\mathrm{d}t,
\end{equation}
see \citep[Theorem 2.17]{opt_transport}. Since $\mu$ and $\nu$ are supported on $\{1,\dots,K\}$, the integrand in equation~\eqref{eq:wasserstein:real_line} vanishes outside $[1,K)$ and is constant on each interval $[k,k+1)$ for $k\in[K-1]$, where it equals $\vert F_\mu(k) - F_\nu(k)\vert$. Integrating over the intervals $[k,k+1)$ for all $k\in[K-1]$ yields the result in equation~\eqref{eq:wasserstein_cdf}.
\end{proof}

\paragraph{Normalization.}
The Wasserstein-$1$ distance on an ordered support, as introduced in \cref{lem:wasserstein_cdf}, depends on the size of the support. In particular, for probability measures $\mu,\nu$ supported on $\{1,\dots,K\}$ with metric $d(i,j)=|i-j|$, the maximum possible distance is $K-1$, attained when all mass is placed on opposite endpoints. Hence, raw Wasserstein distances are not directly comparable across multiple-choice questions with different numbers of answer options. 
We therefore use the normalized Wasserstein-$1$ distance
\begin{equation}
\overline{\mathcal{W}}_1(\mu, \nu)\triangleq\frac{1}{K-1}\,\mathcal{W}_1(\mu,\nu) = \frac{1}{K-1} \sum_{k=1}^{K-1} \big\vert F_\mu(k) - F_\nu(k)\big\vert.
\end{equation}
This normalization maps the distance to $[0,1]$ independently of the number of ordered categories, with $0$ indicating identical distributions and $1$ indicating maximal separation on the support.

\paragraph{Bernoulli distributions.}
For Bernoulli distributions $\mu = \operatorname{Ber}(p)$ and $\nu = \operatorname{Ber}(q)$ on the ordered support $\{0,1\}$, equation~\eqref{eq:wasserstein_cdf} contains a single cumulative difference and therefore yields
\begin{equation}\label{eq:bernoulli_wasserstein}
\mathcal{W}_1\bigs(\operatorname{Ber}(p), \operatorname{Ber}(q)\bigs) = \big\vert(1 - p) - (1 - q)\big\vert = \vert p - q\vert.
\end{equation}
Thus, the absolute error used in our binary experiments is exactly the Wasserstein-$1$ distance between the corresponding Bernoulli distributions.


\subsection{Tree operations}\label{sec:app:sc:tree_permutation}

The self-consistency checks introduced in \cref{sec:self_consistency_checks} admit a natural interpretation in terms of operations on binary conditioning trees. Split consistency concerns the aggregation relations between parent and child nodes, whereas order consistency compares equivalent conditioning events obtained under different orders of the tree layers. Interchanging two adjacent layers illustrates how these two requirements complement one another.

\paragraph{Split consistency.}
Consider a node representing a conditioning event $\mathcal{S}$ and an admissible splitting attribute $A_k$. Refining $\mathcal{S}$ according to the two values of $A_k$ yields the child events
\begin{equation}
\mathcal{S}_{k,0} = \mathcal{S}\cap\{A_k = 0\}\qquad\text{and}\qquad\mathcal{S}_{k,1} = \mathcal{S}\cap\{A_k = 1\},
\end{equation}
which form a partition of $\mathcal{S}$. Split consistency requires the estimate elicited directly at the parent node to agree, up to the prescribed tolerance, with the prior-weighted aggregate of the estimates at its children. In \cref{fig:tree_interchange}, this corresponds to reconstructing the orange parent node from the two red child nodes. Thus, every parent--children relation in a conditioning tree induces a local split consistency check.

\begin{figure}[t]
\centering
\input{tree_operation}
\caption{
\textbf{Tree interpretation of split and order consistency.}
Within each tree, a parent estimate should agree with the prior-weighted aggregate of its two child estimates. Interchanging two adjacent layers preserves the leaf events but changes both the order in which their constraints are presented and the intermediate conditioning events exposed by the tree.
}
\label{fig:tree_interchange}
\end{figure}

A fixed conditioning tree exposes only those refinements associated with its prescribed order of split attributes. Other attributes may nevertheless define equally valid next splits of the same conditioning event. For example, after interchanging two adjacent layers in \cref{fig:tree_interchange}, the pink node in the tree on the right is obtained by aggregating the two blue nodes shown in the tree on the left. Because these blue nodes belong to different subtrees of the original tree, this parent--children relation is not exposed there. This observation motivates defining the split consistency score in \cref{def:split_consistency_score} over all pairs of conditioning events and admissible next splits, rather than only over the parent–children relations of a single fixed tree.

\paragraph{Order consistency.}
Interchanging tree layers also changes the order in which the corresponding attribute constraints are presented to the model. For example, the tree on the left of \cref{fig:tree_interchange} first splits by sex and then by age, whereas the tree on the right first splits by age and then by sex. Nevertheless, corresponding leaves define the same conditioning events. Order consistency therefore requires the direct estimates elicited for corresponding leaves to agree, up to the prescribed tolerance.


\subsection{Variance contraction}\label{sub:appendix:sc:variance_contraction}

The self-consistency checks in \cref{sec:self_consistency_checks} constrain elicited estimates through first-moment identities. The law of total probability prescribes exactly how conditional estimates must aggregate across a partition. The same reasoning extends to higher-order moments of the outcome distribution. In this subsection, we illustrate this with an auxiliary check derived from the law of total variance. In particular, we reduce the law of total variance to a one-sided constraint. Refining a subpopulation can only reduce the outcome variance in expectation, since the residual variance of the conditional means across the children is non-negative. 

\begin{definition}[Expected variance contraction]\label{def:self_consistency_check:variance_contraction}
Given a tree $\mathcal{T}$, we say a model satisfies expected variance contraction if for every internal node $v$ it holds that
\begin{equation}
\hat{\mathbb{V}}(\mathcal{S}_v) \geq \hat{\gamma}_L\hat{\mathbb{V}}(\mathcal{S}_{v,L}) + \hat{\gamma}_R\hat{\mathbb{V}}(\mathcal{S}_{v,R}),
\end{equation}
where $\hat{\mathbb{V}}(\bigcdot)$ denotes the variance implied by the model’s estimated (conditional) distribution.
\end{definition}
 
We now derive the expected variance contraction condition used in \cref{def:self_consistency_check:variance_contraction} from the law of total variance. For two random variables $Y$ and $A$, the law of total variance states that
\begin{equation}
\mathbb{V}(Y) = \mathbb{E}\bigs[\mathbb{V}(Y\given A)\bigs] + \mathbb{V}\bigs(\mathbb{E}[Y\given A]\bigs).
\end{equation}
Since the second term is non-negative, conditioning on $A$ can only reduce variance in expectation:
\begin{equation}\label{eq:appendix:variance_contraction:generic}
\mathbb{V}(Y)\geq \mathbb{E}\bigs[\mathbb{V}(Y\given A)\bigs].
\end{equation}
Importantly, this is an averaged statement over the values of $A$; it does not imply that the conditional variance decreases for every realization of $A$ individually. We now instantiate \eqref{eq:appendix:variance_contraction:generic} at an internal node of a binary conditioning tree. Let $\mathcal{S}$ be a parent subpopulation with children $\mathcal{S}_L$ and $\mathcal{S}_R$, and let $A$ encode child membership within $\mathcal{S}$. That is, $A = L$ corresponds to $X\in\mathcal{S}_L$ and $A = R$ corresponds to $X\in\mathcal{S}_R$, conditional on $X\in\mathcal{S}$. Applying \eqref{eq:appendix:variance_contraction:generic} within the parent subpopulation $\mathcal{S}$ yields
\begin{equation}
\mathbb{V}(Y\given X\in S) \geq \gamma_L\,\mathbb{V}(Y\given X\in \mathcal{S}_L) + \gamma_R\,\mathbb{V}(Y\given X\in \mathcal{S}_R),
\end{equation}
where $\gamma_L = \mathbb{P}(X\in\mathcal{S}_L\given X\in\mathcal{S})$ and $\gamma_R = \mathbb{P}(X\in\mathcal{S}_R\given X\in\mathcal{S})$ are the child priors within the parent. Thus, if the LLM estimates form a self-consistent system of conditional distributions, the same inequality should hold for the corresponding LLM-estimated variances, denoted by $\hat{\mathbb{V}}$. In our check, we replace the population variances and child priors by the corresponding model-induced quantities, yielding the condition stated in \cref{def:self_consistency_check:variance_contraction}.


\section{Proofs}\label{sec:app:proofs}

\subsection{Split consistency}\label{subsec:proofs:split_consistency}

\begin{proof}[Proof of \cref{proposition:multi_step_split}]
For each level $\ell\geq 1$, let $\mathcal{L}_\ell(\mathcal{T})$  denote the set of nodes of $\mathcal{T}$ at level $\ell$, whose conditioning events $\mathcal{S}_v$ partition the base population $\mathcal{S}$. For each node $v\in\mathcal{L}_\ell(\mathcal{T})$, define its path weight $\hat{\omega}_v$ as the product of the elicited relative-size estimates along the unique path from the root to $v$, with the root assigned weight one. Further define the level-$\ell$ reconstruction by
\begin{equation}
\hat{T}^{(\ell)}_{h,\text{agg}}(\mathcal{S})\triangleq\sum_{v\in\mathcal{L}_\ell(\mathcal{T})} \hat{\omega}_v\,\hat{T}_h(\mathcal{S}_v).
\end{equation}
With the convention $\hat{T}^{(0)}_{h,\text{agg}}(\mathcal{S}) \triangleq \hat{T}_h(\mathcal{S})$, the triangle inequality yields
\begin{equation}
\dist\big(\hat{T}_h(\mathcal{S}), \hat{T}^{(\ell)}_{h,\text{agg}}(\mathcal{S})\big)\leq\sum_{j=0}^{\ell-1} \dist\big(\hat{T}^{(j)}_{h,\text{agg}}(\mathcal{S}),\hat{T}^{(j+1)}_{h,\text{agg}}(\mathcal{S})\big).
\end{equation}
Thus, it remains to show that each increment is strictly less than $\varepsilon$. Fix $0 \leq j < \ell$ and let $A_v$ denote the attribute splitting the node $v\in\mathcal{L}_j(\mathcal{T})$. Since the weight of each level-$(j{+}1)$ node factors as the weight of its parent times the elicited relative size of the child within the parent, grouping the level-$(j{+}1)$ nodes by their parents yields
\begin{equation}
\hat{T}^{(j+1)}_{h,\text{agg}}(\mathcal{S}) = \sum_{v \in \mathcal{L}_j(\mathcal{T})} \hat{\omega}_v\,\hat{T}_{h,\text{agg}}(S_v; A_v).
\end{equation}
Moreover, because the elicited weights at each node sum to one, joint convexity of $\dist(\bigcdot, \bigcdot)$ yields
\begin{align}
\dist\big(\hat{T}^{(j)}_{h,\text{agg}}(\mathcal{S}),\hat{T}^{(j+1)}_{h,\text{agg}}(\mathcal{S})\big) 
&= \dist\Big({\sum}_{v \in \mathcal{L}_j(\mathcal{T})} \hat{\omega}_v\,\hat{T}_h(S_v), {\sum}_{v \in \mathcal{L}_j(\mathcal{T})} \hat{\omega}_v\,\hat{T}_{h,\text{agg}}(S_v; A_v)\Big)\\
&\leq \sum_{v \in \mathcal{L}_j(\mathcal{T})} \hat{\omega}_v\ssp\dist\big(\hat{T}_h(S_v),\, \hat{T}_{h,\text{agg}}(S_v; A_v)\big)
< \varepsilon,
\end{align}
where the final inequality uses $\varepsilon$-split consistency at every node of level $j$. Summing over $j$ completes the proof.
\end{proof}

\subsection{Order consistency}\label{subsec:proofs:order_consistency}

\begin{proof}[Proof of \cref{proposition:pairwise_sufficiency}]
Let $\pi,\pi'$ be two orderings of the constraints defining $\mathcal{S}$. Any two orderings are connected by a sequence of adjacent transpositions whose length equals the number of inversions between $\pi$ and $\pi'$, which is at most $\binom{\vert\mathcal{J}\vert}{2}$. By the triangle inequality for $\dist(\bigcdot,\bigcdot)$, the discrepancy between $\hat{T}_h(\mathcal{S};\pi)$ and $\hat{T}_h(\mathcal{S};\pi')$ is bounded by the sum of the discrepancies induced by the individual swaps along this sequence. Under \cref{ass:context_independence}, each such swap induces the same discrepancy as in the corresponding two-attribute conditioning event. Each of these discrepancies is therefore strictly less than $\varepsilon$, and summing over at most $\binom{\vert\mathcal{J}\vert}{2}$ swaps yields 
\begin{equation}
\dist\big(\hat{T}_h(\mathcal{S};\pi),\, \hat{T}_h(\mathcal{S};\pi')\big) < \binom{\vert\mathcal{J}\vert}{2}\,\varepsilon.
\end{equation}
Since $\pi,\pi'$ were arbitrary, $\mathcal{S}$ is $\binom{\vert\mathcal{J}\vert}{2}\,\varepsilon$-order consistent.
\end{proof}

\begin{proof}[Proof of \cref{prop:permuted_split}]
Let $\bar\pi$ and $\bar\pi_b$ denote the canonical orderings used in the split consistency check for $\mathcal{S}$ and $\mathcal{S}_{k,b}$, respectively. By the triangle inequality,
\begin{align}
\dist\Big(\hat{T}_h(\mathcal{S};\pi), {\sum}_{b\in\{0,1\}}\hat{\gamma}_{k,b\,|\,\mathcal{S}}\,\hat{T}_h(\mathcal{S}_{k,b};\pi_b)\Big)&\\
&\hspace*{-20mm}\leq\dist\big(\hat{T}_h(\mathcal{S};\pi), \hat{T}_h(\mathcal{S};\bar\pi)\big)\\
&\hspace*{-20mm}+\dist\Big(\hat{T}_h(\mathcal{S};\bar\pi), {\sum}_{b\in\{0,1\}}\hat{\gamma}_{k,b\,|\,\mathcal{S}}\,\hat{T}_h(\mathcal{S}_{k,b};\bar\pi_b)\Big)\\
&\hspace*{-20mm}+\dist\Big({\sum}_{b\in\{0,1\}}\hat{\gamma}_{k,b\,|\,\mathcal{S}}\,\hat{T}_h(\mathcal{S}_{k,b};\bar\pi_b), {\sum}_{b\in\{0,1\}}\hat{\gamma}_{k,b\,|\,\mathcal{S}}\,\hat{T}_h(\mathcal{S}_{k,b};\pi_b)\Big).
\end{align}
The first two terms are strictly less than $\varepsilon_{\mathrm{OC}}$ and $\varepsilon_{\mathrm{SC}}$, respectively. Joint convexity of the distance metric and order consistency of the child events give
\begin{align}
\dist\Big({\sum}_{b\in\{0,1\}}\hat{\gamma}_{k,b\,|\,\mathcal{S}} \hat{T}_h(\mathcal{S}_{k,b};\bar\pi_b), {\sum}_{b\in\{0,1\}}\hat{\gamma}_{k,b\,|\,\mathcal{S}}\,\hat{T}_h(\mathcal{S}_{k,b};\pi_b)\Big)&\\
&\hspace*{-48mm}\leq {\sum}_{b\in\{0,1\}}\hat{\gamma}_{k,b\,|\,\mathcal{S}} \dist\big(\hat{T}_h(\mathcal{S}_{k,b};\bar\pi_b), \hat{T}_h(\mathcal{S}_{k,b};\pi_b)\big) < \varepsilon_{\mathrm{OC}},
\end{align}
where the relative-size estimates sum to one. Combining the three bounds yields the claim.
\end{proof}


\section{Additional experimental results}\label{app:additonal_results}

This appendix reports additional analyses that probe the robustness of our experimental findings across prompting and implementation choices.


\subsection{Micro-to-macro prompting}\label{subsec:app:micro_to_macro}

The aggregation experiments in \cref{sec:macro_fallacy} rely on an explicitly specified binary conditioning tree, where subgroup-level estimates from a fixed partition are recombined via the law of total probability. Here, we ask whether a simpler implicit alternative can recover some of the same benefit without requiring an externally specified partition. Specifically, we test whether the model can reason from micro to macro within a single query. It must identify relevant subpopulations, assess their relative prevalence (prior) and subgroup-level outcomes (conditional), and combine these considerations into an aggregate estimate. If successful, this would provide a lightweight and broadly applicable prompting strategy.

To evaluate this idea, we compare micro-to-macro prompting against a direct prompting baseline. In the direct baseline, the model is asked to estimate the aggregate target quantity directly. In the micro-to-macro prompt, we instead ask the model to first consider relevant subpopulations and then derive an aggregate estimate by taking these constituents into account. The exact prompt template is provided in \cref{subsec:app:sec:prompt_template:mtm_prompt}.
The results show that micro-to-macro prompting can partially recover the benefit of decomposition, but less reliably than explicit tree-based aggregation. 
On the ACS income task, \cref{subfig:app:micro_to_macro:aggregate:ACS_thresholded} shows that micro-to-macro prompting often improves over direct aggregate prompting when averaged across the five income thresholds. However, the gains are model-dependent and less systematic than those obtained with explicit tree-based aggregation. 
The GlobalOpinionQA \citep{durmus2024measuringrepresentationsubjectiveglobal} binary-response results in \cref{subfig:app:micro_to_macro:aggregate:GOQA:yes_no} exhibit a similar pattern. Here, the model predicts the probability that a randomly selected person in a given country would choose the first of two binary answer options. We average the prediction error across selected survey questions and country-level populations. For most models, micro-to-macro prompting reduces the average error relative to direct prompting, whereas for some models it increases the error. Thus, micro-to-macro prompting provides a lightweight implicit decomposition strategy, but it is not uniformly beneficial across tasks and models. A more detailed analysis of when model-chosen decompositions help is left for future work.

\begin{figure}[t]
\centering
\input{figures/8c_micro_macro_GOQA.pgf}
\begin{subfigure}[t]{0.52\linewidth}
\vspace*{-4mm}
\caption{
\textbf{Average error by model.}
Average absolute prediction error of direct and micro-to-macro prompting across seven selected binary GlobalOpinionQA questions and seven selected countries.
Percentages on the right report the relative reduction in average error under micro-to-macro prompting. Positive values, shown in green, indicate that micro-to-macro prompting reduces the average error relative to direct prompting.
}
\label{subfig:app:micro_to_macro:aggregate:GOQA:yes_no}
\end{subfigure}
\hspace*{0.05\linewidth}
\begin{subfigure}[t]{0.38\linewidth}
\vspace*{-4mm}
\caption{
\textbf{GPT-5.4 win matrix.}
Each tile corresponds to one question--country pair for GPT-5.4. Green indicates that micro-to-macro prompting yields lower absolute error than direct prompting; red indicates the complement. Win matrices for all evaluated models are reported in \cref{fig:app:micro_to_macro:GOQA:win_rates}.
}
\label{subfig:app:micro_to_macro:aggregate:GOQA_GPT_win_matrix}
\end{subfigure}
\caption{
\textbf{Micro-to-macro prompting on GlobalOpinionQA.}
We compare direct aggregate prompting with micro-to-macro prompting on selected binary-response questions from GlobalOpinionQA.
For each question--country pair, the model predicts the probability that a randomly sampled person from the specified country would choose the first of the two binary answer options.
We evaluate absolute error with respect to the survey-based ground-truth probability.
Because the population decomposition used during micro-to-macro reasoning is implicit rather than fixed in advance, improvements can vary across models, questions, and countries.
The selected questions are listed in \cref{sec:goqa_questions}.
}
\label{fig:app:micro_to_macro:GOQA:aggregate}
\end{figure}


\subsection{Level-wise self-consistency}\label{subsub:appendix:level_wise_self_consistency}

\begin{table}[t]
\centering
\renewcommand{\arraystretch}{1.6}
\setlength{\tabcolsep}{4pt}
\begin{tabular}{p{24mm} cccc}
\toprule
\large{Model} & $\ell=1$ & $\ell=2$ & $\ell=3$ & $\ell=4$\\[0.75mm]\hline
\openai~GPT-5.4 & \ccol{C3E3C9}0.084 & \ccol{FFFFFF}0.145 & \ccol{C3E3C9}0.070 & \ccol{8FC79E}0.050 \\ 
\claude~Sonnet~4.6 & \ccol{8FC79E}0.042 & \ccol{ECF7EE}0.108 & \ccol{8FC79E}0.064 & \ccol{8FC79E}0.053 \\ 
\grok~4.3 & \ccol{8FC79E}0.060 & \ccol{ECF7EE}0.116 & \ccol{C3E3C9}0.081 & \ccol{8FC79E}0.063 \\ 
\qwen~3.6 Plus & \ccol{8FC79E}0.045 & \ccol{FFFFFF}0.143 & \ccol{C3E3C9}0.079 & \ccol{ECF7EE}0.121 \\ 
\specialrule{\heavyrulewidth}{0pt}{0pt}
\end{tabular}
\caption{
\textbf{Level-wise root consistency.}
For each level $\ell$ of a depth-four tree, we reconstruct the aggregate income distribution from the corresponding level-$\ell$ partition and compare it to the model's direct root estimate. 
Entries report the normalized Wasserstein-1 distance. Lower values indicate stronger agreement with the root estimate and are highlighted using darker green tones.
}
\label{table:self_consistency:acs_levelwise}
\end{table}

To assess how self-consistency varies with the granularity of the partition, \cref{table:self_consistency:acs_levelwise} reports a level-wise root consistency evaluation. For this experiment, we use the depth-four binary conditioning tree described in \cref{subsec:app:exp_details:alignment_error}. For each level $\ell$, we reconstruct the aggregate income distribution from the partition induced at that level and compare it to the model’s direct root estimate using the normalized Wasserstein distance $\overline{\mathcal{W}}_1$. The table reports these discrepancies directly, so lower values indicate stronger agreement with the root estimate.


\subsection{Prompt sensitivity}\label{subsub:appendix:persona_vs_socio}

\begin{figure}[t]
\centering
\input{figures/5_prompt_sensitivity_arXiv.pgf}
\begin{subfigure}[t]{0.42\linewidth}
\vspace*{-4mm}
\caption{\textbf{Sociodemographic prompting.} The target subpopulation is described in the third person using structured attributes.}
\label{subfig:example_realiability_diag:prompting_scheme:sociodemographic}
\end{subfigure}
\hspace*{0.04\linewidth}
\begin{subfigure}[t]{0.42\linewidth}
\vspace*{-4mm}
\caption{\textbf{Persona prompting.} The model is instructed to role-play as a member of the target subpopulation in the first person.}
\label{subfig:example_realiability_diag:prompting_scheme:persona}
\end{subfigure}
\caption{\textbf{Prompt sensitivity of probability elicitation.}
Results are shown for GPT-5.4 without reasoning, using ground-truth priors and 100 repeated LLM samples. Direct estimates are obtained from population-level prompts, whereas reconstructed estimates are obtained by aggregating subgroup-level estimates via the law of total probability using ground-truth priors. The population is decomposed according to employment status; see \texttt{ESR} attribute in \cref{sec:acs_attributes}.
Error bars correspond to 90\,\% confidence intervals computed by bootstrapping with 1000 samples. They are nearly invisible because the intervals are very narrow.
}
\label{fig:example_realiability_diag:prompting_scheme}
\end{figure}

We compare two common ways of specifying persona information in the prompt. In the \emph{sociodemographic} formulation, the target subpopulation is described in the third person through a structured list of attributes; see prompt template in \cref{subsec:app:sec:prompt_template:socio}. In the \emph{persona prompting} formulation, the same information is instead presented as a first-person role-playing instruction; see \cref{subsec:app:sec:prompt_template:persona}. Both formulations elicit the same verbalized probability estimate for the same event and differ only in how the conditioning information is expressed.

For both formulations, we keep the model, sampling temperature, number of repeated generations, outcome event, thresholds, and aggregation weights fixed. The resulting estimation error curves are shown in \cref{fig:example_realiability_diag:prompting_scheme}. In both cases, estimates reconstructed through the law of total probability are closer to the diagonal than direct aggregate estimates for most thresholds. This suggests that the benefit of subgroup decomposition is not specific to a particular phrasing of the persona context. At the same time, persona prompting systematically overestimates the true probabilities from the survey data, whereas sociodemographic prompting yields more accurate estimates. We therefore use sociodemographic prompting throughout our experiments.


\section{Experiment details}\label{sec:app:experiment_details}

In this appendix, we describe the experimental setup\footnote{The code is available at: \url{https://github.com/patrikwolf/statistical_self_consistency}} used for the results reported throughout this paper.


\subsection{Binary conditioning tree}\label{subsec:app:experiment_details:bct}

\begin{figure}[t]
\centering
\input{figures/6_sorted_mean_income.pgf}
\begin{subfigure}[t]{0.48\linewidth}
\vspace*{-4mm}
\caption{Age attribute \texttt{AGEP}.}
\label{fig:bct:mean_incomes:age}
\end{subfigure}
\hfill
\begin{subfigure}[t]{0.48\linewidth}
\vspace*{-4mm}
\caption{Class-of-worker attribute \texttt{COW}.}
\label{fig:bct:mean_incomes:cow}
\end{subfigure}
\caption{
\textbf{Mean income across attribute-induced subgroups.}
We partition the ACS population by all attainable values of a single attribute and report the survey-weighted mean yearly income in USD for each resulting subgroup.
For the age attribute in \cref{fig:bct:mean_incomes:age}, subgroup values are displayed in their natural order, and the binary split is defined by thresholding the corresponding subgroup mean incomes (dashed line).
For the categorical class-of-worker attribute in \cref{fig:bct:mean_incomes:cow}, subgroup values are first sorted by mean income before defining the split.
Both rules yield splits that separate lower-income from higher-income attribute values while keeping the two branches approximately balanced in survey-weighted probability mass.
}
\label{fig:bct:mean_incomes}
\end{figure}

We construct the binary conditioning tree in \cref{fig:acs:income_distribution_tree:combined} using demographic attributes from the 2024 American Community Survey (ACS) 1-Year Public Use Microdata Sample \citep{uscensus_acs_pums_2024}. The root node corresponds to the {U.S.} population, which we recursively refine using two binary splits. To select these splits, we consider the following candidate attributes: \texttt{AGEP} (age), \texttt{CIT} (citizenship), \texttt{COW} (class of worker), \texttt{ESR} (employment status), \texttt{MAR} (marital status), \texttt{SCHL} (educational attainment), \texttt{OCCP} (occupation), \texttt{SEX} (sex), and \texttt{WKHP} (usual hours worked per week).
For each candidate attribute, we first partition the population by all attainable attribute values and compute the mean yearly income within each resulting subgroup, as exemplified in \cref{fig:bct:mean_incomes}. We then group the attribute values into a lower-income and a higher-income branch, choosing the grouping so that the two branches have approximately equal survey-weighted probability mass. Finally, we qualitatively select the two attributes whose induced binary splits yield particularly distinctive income distributions across the two branches. 
This procedure yields the two splits used in our main ACS experiments. The first split is induced by the ACS age attribute \texttt{AGEP}. Respondents younger than $31$ or older than $68$ are grouped into the ``young \& old'' subgroup, while respondents aged $31$ to $68$ are grouped into the ``working age'' subgroup. The second split uses the class-of-worker attribute \texttt{COW}. Values $1$--$7$ are labeled as ``employed,'' whereas values $8$, $9$, and missing values are mapped to ``unemployed.'' The corresponding ACS attribute descriptions are reported in \cref{tab:acs_attributes}. 

At each node in \cref{fig:acs:income_distribution_tree:combined}, we report the survey-weighted distribution of yearly income in USD. For the four-bin income setting, the bin edges are chosen so that the root distribution is approximately uniform, and the same binning is then used for all subpopulations in the tree. Since yearly income is unavailable for respondents below age $15$, we set their income to $0$ USD in our experiments, reflecting the assumption that individuals in this age group have no reported labor income. 


\subsection{Prior elicitation}\label{subsec:app:prior_elicitation}

To obtain fully model-induced aggregate quantities, both the subgroup-conditional estimates and the subgroup priors must be elicited from the LLM. In this subsection, we describe the level-wise procedure used to estimate these priors. For each tree level $\ell$ with induced partition $\mathcal{P}_\ell$, we prompt the model $n = 50$ times to estimate the probability mass that a randomly sampled individual from the base population belongs to each subgroup $\mathcal{S}\in\mathcal{P}_\ell$. The prompts use the template in \cref{subsec:app:sec:prompt_template:priors}, which lists the subgroups in the partition and asks the model to return a normalized probability distribution over them. Although the prompt instructs the model to output probabilities that sum to one, the raw responses may contain small normalization errors. We therefore post-process each response before averaging. The LLM-estimated prior at level $\ell$ is then obtained by averaging the renormalized estimates across repetitions. 


\subsection{Alignment error}\label{subsec:app:exp_details:alignment_error}

The alignment analysis in \cref{fig:aggregation_refinement:llm_prior} is based on a binary conditioning tree over ACS attributes. The tree is defined by the following sequence of splits:

\begin{itemize}[leftmargin=15.6mm]
\item[\bf Level 1:] \texttt{AGEP} is split into \texttt{\{0, \dots, 30, 69, \dots, 96, n/a\}} and \texttt{\{31, \dots, 68\}}.
\item[\bf Level 2:] \texttt{COW} is split into \texttt{\{1, \dots, 7\}} and \texttt{\{8, 9, n/a\}}.
\item[\bf Level 3:] \texttt{WKHP} is split into \texttt{\{1, \dots, 39\}} and \texttt{\{40, \dots, 99\}}.
\item[\bf Level 4:] \texttt{OCCP} is split into \texttt{\{2060, 2300, 2740, 3401, 3421, 3422, 3424, 3603, 3605, 3620, 3630, 3640, 3645, 3647, 3648, 3649, 3940, 3960, 2350, 2440, 2545, 2633, 2723, 3430, 3601, 3602, 3646, 3946, 4010, 4020, 4030, 4040, 4055, 4110, 4120, 4130, 4140, 4150, 4160, 4220, 4230, 4251, 4255, 4350, 4420, 4435, 4461, 4500, 4510, 4521, 4522, 4525, 4530, 4540, 4600, 4621, 4622, 4640, 4655, 4720, 4740, 4760, 4900, 4940, 5020, 5160, 5260, 5300, 5310, 5320, 5400, 5510, 5610, 5810, 5820, 5850, 5860, 5900, 6040, 6050, 6120, 6410, 6600, 7260, 7510, 7610, 7800, 7810, 7830, 7840, 7850, 8256, 8300, 8310, 8320, 8335, 8350, 8365, 8465, 8530, 8540, 8710, 8800, 8850, 8910, 8950, 9110, 9142, 9350, 9365, 9415, 9600, 9610, 9620, 9630, 9640, 9645, 9720, 9920, n/a\}} and \texttt{\{10, 20, 40, 51, 52, 60, 101, 102, 110, 120, 135, 136, 137, 140, 150, 160, 205, 220, 230, 300, 310, 335, 340, 350, 360, 410, 420, 425, 440, 500, 510, 520, 530, 540, 565, 600, 630, 640, 650, 700, 705, 710, 725, 726, 735, 750, 800, 810, 820, 830, 845, 850, 860, 900, 910, 930, 940, 960, 1005, 1006, 1007, 1010, 1021, 1022, 1031, 1032, 1050, 1065, 1105, 1106, 1108, 1200, 1220, 1240, 1305, 1306, 1310, 1320, 1340, 1350, 1360, 1400, 1410, 1420, 1430, 1440, 1450, 1460, 1520, 1530, 1541, 1545, 1551, 1555, 1560, 1600, 1610, 1640, 1650, 1700, 1710, 1720, 1745, 1750, 1760, 1800, 1821, 1822, 1825, 1840, 1860, 1900, 1910, 1920, 1935, 1970, 1980, 2001, 2002, 2003, 2004, 2005, 2006, 2011, 2012, 2013, 2014, 2015, 2016, 2025, 2040, 2050, 2100, 2105, 2145, 2170, 2180, 2205, 2310, 2320, 2330, 2360, 2400, 2435, 2555, 2600, 2631, 2632, 2634, 2635, 2636, 2640, 2700, 2710, 2721, 2722, 2751, 2752, 2755, 2770, 2805, 2810, 2825, 2830, 2840, 2850, 2861, 2862, 2865, 2905, 2910, 2920, 3000, 3010, 3030, 3040, 3050, 3090, 3100, 3110, 3120, 3140, 3150, 3160, 3200, 3210, 3220, 3230, 3245, 3250, 3255, 3256, 3258, 3261, 3270, 3300, 3310, 3321, 3322, 3323, 3324, 3330, 3402, 3423, 3500, 3515, 3520, 3545, 3550, 3610, 3655, 3700, 3710, 3720, 3725, 3740, 3750, 3801, 3802, 3820, 3840, 3870, 3900, 3910, 3930, 3945, 4000, 4200, 4210, 4240, 4252, 4330, 4340, 4400, 4465, 4700, 4710, 4750, 4800, 4810, 4820, 4830, 4840, 4850, 4920, 4930, 4950, 4965, 5000, 5010, 5040, 5100, 5110, 5120, 5140, 5150, 5165, 5220, 5230, 5240, 5250, 5330, 5340, 5350, 5360, 5410, 5420, 5500, 5521, 5522, 5530, 5540, 5550, 5560, 5600, 5630, 5710, 5720, 5730, 5740, 5840, 5910, 5920, 5940, 6005, 6010, 6115, 6130, 6200, 6210, 6220, 6230, 6240, 6250, 6260, 6305, 6330, 6355, 6360, 6400, 6441, 6442, 6460, 6515, 6520, 6530, 6540, 6660, 6700, 6710, 6720, 6730, 6740, 6765, 6800, 6825, 6835, 6850, 6950, 7000, 7010, 7020, 7030, 7040, 7100, 7120, 7130, 7140, 7150, 7160, 7200, 7210, 7220, 7240, 7300, 7315, 7320, 7330, 7340, 7350, 7360, 7410, 7420, 7430, 7540, 7560, 7640, 7700, 7720, 7730, 7740, 7750, 7855, 7905, 7925, 7950, 8000, 8025, 8030, 8040, 8100, 8130, 8140, 8225, 8250, 8255, 8450, 8500, 8510, 8555, 8600, 8610, 8620, 8630, 8640, 8650, 8720, 8730, 8740, 8750, 8760, 8810, 8830, 8920, 8930, 8940, 8990, 9005, 9030, 9040, 9050, 9121, 9122, 9130, 9141, 9150, 9210, 9240, 9265, 9300, 9310, 9410, 9430, 9510, 9570, 9650, 9760, 9800, 9810, 9825, 9830\}}.
\end{itemize}

For each level $\ell\in\{0,1,\dots,4\}$, we compute the alignment error $\mathrm{AErr}(\ell)$ of the aggregate estimate obtained from the nodes at that level. 
To compare errors across levels, we normalize by the root-level error and report $1 - \text{AErr}(\ell)/\text{AErr}(0)$.
Here, $\ell = 0$ corresponds to direct aggregate prompting, while larger values of $\ell$ correspond to estimates reconstructed from increasingly fine-grained subgroup predictions. Error bars indicate 90\,\% confidence intervals computed using 1000 bootstrap samples. In \cref{subfig:aggregation_refinement:llm}, we show the aggregation gain for GPT-5.4 across income thresholds. In \cref{subfig:aggregation_relative_models:llm}, we compare a selection of models at a fixed income threshold of $\tau = 40\mathrm{k}$ USD. Across both settings, the results show that aggregate estimates reconstructed from subgroup estimates consistently improve over direct root-level estimates.


\subsection{Disentangling the sources of aggregation error}

For the two subplots in \cref{subfig:disentangling:conditionals,subfig:disentangling:priors}, we use the depth-four binary conditioning tree described in \cref{subsec:app:exp_details:alignment_error}. For visual clarity, \cref{subfig:disentangling:conditionals} shows only the first three levels.

\paragraph{Node-wise conditional alignment.}
For \cref{subfig:disentangling:conditionals}, we evaluate the quality of the LLM-estimated conditional income distributions at each node of the tree. The outcome is the four-bin income distribution used in \cref{fig:acs:income_distribution_tree:combined}. For every node $\mathcal{S}$, we compare the LLM-estimated conditional distribution $\hat{T}_h(\mathcal{S})$ to the corresponding survey-based conditional distribution $T_h(\mathcal{S})$ using the normalized Wasserstein distance, $\overline{\mathcal{W}}_1\bigs(\hat{T}_h(\mathcal{S}), T_h(\mathcal{S})\bigs)$.
The node color encodes this error. The resulting plot illustrates that conditioning on more specific subgroups improves conditional alignment relative to the root estimate, but that the improvement is heterogeneous across the tree. Some fine-grained subgroups with negligible prior mass remain poorly aligned.

\paragraph{Prior alignment.}
For \cref{subfig:disentangling:priors}, we evaluate the quality of the subgroup priors. For each tree level $\ell$, let $\gamma_\ell$ denote the survey-based distribution over the partition $\mathcal{P}_\ell(\mathcal{X})$, and let $\hat{\gamma}_\ell$ denote the corresponding LLM-estimated prior distribution obtained with the elicitation procedure from \cref{subsec:app:prior_elicitation}. We measure prior alignment using total variation distance, $\operatorname{TV}(\hat{\gamma}_\ell,\gamma_\ell)$.
This yields one prior-alignment error per model and tree level. The results show that prior estimation becomes increasingly difficult as the tree is refined. Coarse partitions are estimated comparatively accurately, whereas deeper levels require the model to distribute probability mass across a larger number of more specific subpopulations. Thus, aggregation involves a trade-off. More specific conditioning can improve the subgroup outcome estimates, but the corresponding population weights become harder to estimate accurately.

\subsection{Variance decomposition across tree levels}

The variance decomposition in \cref{subfig:explaining_macro_fallacy:variance} is computed on the depth-four binary conditioning tree described in \cref{subsec:app:exp_details:alignment_error}. 
For each node, we compute the survey-weighted subgroup prior $\gamma_\mathcal{S}$, the conditional mean $\mu_\mathcal{S}\triangleq\mathbb{E}[Y\given X \in\mathcal{S}]$, and the conditional variance $\mathbb{V}(Y\given X \in\mathcal{S})$ from the ACS data. We then decompose the aggregate income variance into a within-subgroup and a cross-subgroup component. The within-subgroup term is the prior-weighted average conditional variance,
\begin{equation}
V_{\mathrm{within}}(\ell)\triangleq\sum_{\mathcal{S}\in\mathcal{P}_\ell(\mathcal{X})} \gamma_\mathcal{S}\ssp\mathbb{V}(Y\given X \in\mathcal{S}),
\end{equation}
which measures the residual heterogeneity that remains after conditioning on the attributes defining the level-$\ell$ subgroups. The cross-subgroup term is the prior-weighted variance of the subgroup means,
\begin{equation}
V_{\mathrm{between}}(\ell)\triangleq\sum_{\mathcal{S}\in\mathcal{P}_\ell(\mathcal{X})} \gamma_\mathcal{S}\,(\mu_\mathcal{S} - \mu)^2\qquad\text{with}\qquad\mu\triangleq\sum_{\mathcal{S}\in\mathcal{P}_\ell(\mathcal{X})} \gamma_\mathcal{S}\,\mu_\mathcal{S}.
\end{equation}
By the law of total variance, these quantities satisfy $\mathbb{V}(Y) = V_{\mathrm{within}}(\ell) + V_{\mathrm{between}}(\ell)$ for every level $\ell$, up to finite-sample estimation error.
Thus, the two terms quantify how refinement shifts variation from residual within-subgroup heterogeneity to explicit cross-subgroup heterogeneity.


\subsection{Self-consistency evaluation on the binned ACS income setting}
\label{subsec:app:experiment_details:acs}

For the self-consistency analysis in \cref{table:self_consistency:acs_income}, we use the binned ACS income prediction task from the running example in \cref{fig:acs:income_distribution_tree:combined}. The model is asked to estimate the income distribution over the following four bins:

\begin{itemize}[leftmargin=20mm]
\item[\bf Bin 1:] $[-11.5\mathrm{k}, 1)$ USD
\item[\bf Bin 2:] $[1, 25\mathrm{k})$ USD
\item[\bf Bin 3:] $[25\mathrm{k}, 60\mathrm{k})$ USD
\item[\bf Bin 4:] $[60\mathrm{k}, 1849\mathrm{k}]$ USD
\end{itemize}

These bins cover the full income range observed in the ACS sample: no individual in the ACS sample reports an income below $-11.5\mathrm{k}$ USD or above $1849\mathrm{k}$ USD.
The results in \cref{table:self_consistency:acs_income} report the two benchmark scores introduced in \cref{sec:self_consistency_checks}. The first is the split consistency score $\mathrm{SC}_\varepsilon(\mathcal{A})$ from \cref{def:split_consistency_score}. For every admissible pair of a conditioning event and a next split attribute in $\mathcal{C}_{\mathrm{SC}}(\mathcal{A})$, we compare the model's direct estimate at the conditioning event to the prior-weighted aggregate of its estimates at the two refinements induced by the split attribute. For the attribute set $\mathcal{A}$ consisting of age and employment status, this yields six node-level checks.
The second is the order consistency score $\mathrm{OC}_\varepsilon(\mathcal{A})$ from \cref{def:order_consistency_score}. For each of the four two-attribute conditioning events in $\mathcal{C}_{\mathrm{OC}}(\mathcal{A})$, we elicit direct estimates under both orderings of the attribute constraints and check whether the two predictions agree. 
Since the target is an ordered categorical distribution, discrepancies are measured using the normalized Wasserstein distance $\overline{\mathcal{W}}_1$. Each score reports the fraction of checks whose discrepancy is less than the tolerance $\varepsilon = 0.02$.


\subsection{Self-consistency evaluation on the thresholded ACS income setting}
\label{subsec:app:experiment_details:acs_thresholded}

For the model comparison in \cref{fig:self_consistency:model_comp}, we use the thresholded ACS income prediction task with threshold $\tau = 40\mathrm{k}$ USD. The setting mirrors the binned income experiment in \cref{subsec:app:experiment_details:acs}. We use the same conditioning tree over the attribute set $\mathcal{A}$ consisting of age and employment status. The only change is the target variable, which is now the binary indicator of whether an individual's income exceeds $\tau$.

For each model, we compute the split consistency score $\mathrm{SC}_\varepsilon(\mathcal{A})$ and the order consistency score $\mathrm{OC}_\varepsilon(\mathcal{A})$ at tolerance $\varepsilon = 0.02$. Since the target is binary, the normalized Wasserstein distance $\overline{\mathcal{W}}_1$ reduces to the absolute difference between the two estimated probabilities of exceeding the threshold.
As a proxy for general model capability, we report the Artificial Analysis Intelligence (AAI) Index~\citep{artificialanalysis2026}. Models are grouped by family and ordered within each family by their AAI Index.


\subsection{Self-consistency evaluation on the ACS commute time setting}
\label{subsec:app:experiment_details:acs_commute}

For the self-consistency analysis in \cref{table:self_consistency:acs_commute}, we use the binned ACS commute time prediction task. The setting mirrors the binned income experiment in \cref{subsec:app:experiment_details:acs}. We use the same conditioning tree over the attribute set $\mathcal{A}$ consisting of age and employment status, and the same subgroup priors. The only change is the target variable, which is now the commute time in minutes. The model is asked to estimate the commute time distribution over the following four bins:

\begin{itemize}[leftmargin=20mm]
\item[\bf Bin 1:] $[0, 1)$ minute
\item[\bf Bin 2:] $[1, 15)$ minutes
\item[\bf Bin 3:] $[15, 30)$ minutes
\item[\bf Bin 4:] $[30, 195 ]$ minutes
\end{itemize}

These bins cover the full range of commute times observed in the ACS sample. No individual reports a commute time longer than $195$ minutes.
We compute the split consistency score $\mathrm{SC}_\varepsilon(\mathcal{A})$ and the order consistency score $\mathrm{OC}_\varepsilon(\mathcal{A})$ at tolerance $\varepsilon = 0.02$, exactly as in \cref{subsec:app:experiment_details:acs}. Since the target is an ordered categorical distribution, discrepancies are measured using the normalized Wasserstein distance $\overline{\mathcal{W}}_1$.


\subsection{Self-consistency evaluation on WVS opinion distributions}\label{subsec:app:experiment_details:wvs} 

We evaluate self-consistency on opinion distributions derived from the World Values Survey (WVS). The experiment reported in \cref{table:wvs_eval:results} uses data from Canada and Indonesia. For both countries, the evaluation is based on the attribute set $\mathcal{A}$ consisting of the following two splits:

\begin{itemize}[leftmargin=20mm]
\item[\bf Level 1:] Age is split into ``16\,--\,44 years old'' and ``45\,--\,103 years old''.
\item[\bf Level 2:] Income is split into ``low'' and ``high''.
\end{itemize}

The interpretation of ``low'' and ``high'' in the income split is left to the model. This is sufficient for our self-consistency checks, provided that the two options are interpreted as complementary categories that form a valid partition of the relevant population.
For each country and each of the five selected WVS questions listed in \cref{tab:wvs_questions}, we elicit categorical response distributions at every conditioning event induced by $\mathcal{A}$. The corresponding prompt template is given in \cref{subsec:app:sec:prompt_template:wvs_distribution}. 
Each question then defines a distinct prediction problem, on which we compute the split consistency score $\mathrm{SC}_\varepsilon(\mathcal{A})$ and the order consistency score $\mathrm{OC}_\varepsilon(\mathcal{A})$ at tolerance $\varepsilon = 0.02$. 
For the split consistency score, we evaluate all admissible pairs of a conditioning event and a next split attribute in $\mathcal{C}_{\mathrm{SC}}(\mathcal{A})$, comparing the direct estimate at the conditioning event to the prior-weighted aggregate of its two refinements. 
For the order consistency score, we elicit direct estimates under both orderings of the attribute constraints for each of the four two-attribute conditioning events in $\mathcal{C}_{\mathrm{OC}}(\mathcal{A})$ and check whether the two predictions agree. 
Discrepancies are measured using the normalized Wasserstein distance $\overline{\mathcal{W}}_1$, and each score reports the fraction of checks whose discrepancy is less than $\varepsilon = 0.02$.


\subsection{Self-consistency evaluation of tennis forecasting example}\label{subsec:app:experiment_details:tennis}

Consider a forecasting task in which a model is asked to estimate the probability that Roger Federer wins a match against Rafael Nadal, conditional on contextual information available before or during the match. All estimates are elicited using the tennis forecasting prompt template in \cref{subsec:app:sec:prompt_template:tennis}. At the root node, the prompt provides only the matchup itself, corresponding to a blind or unconditional forecast without additional information. We then gradually reveal pieces of contextual information that may become available before or during the match, such as the court surface. These conditions induce a binary conditioning tree analogous to the BCTs used in the income prediction setting, but the conditioning variables now describe a sporting event rather than a demographic subpopulation.
Concretely, the attribute set $\mathcal{A}$ consists of two binary conditions: the court surface, distinguishing clay from non-clay courts, and the in-match event of whether Federer wins the first set. The former captures a pre-match contextual factor that is highly relevant to the Federer--Nadal rivalry, while the latter represents partial information that becomes available during the match. The same self-consistency checks from \cref{sec:self_consistency_checks} can then be applied, with the model's estimated prior probabilities serving as aggregation weights.

To generate the results in \cref{table:self_consistency:tennis}, we compute the split consistency score $\mathrm{SC}_\varepsilon(\mathcal{A})$ and the order consistency score $\mathrm{OC}_\varepsilon(\mathcal{A})$ at tolerance $\varepsilon = 0.02$. Since the target outcome is binary, discrepancies are measured by the absolute difference between the corresponding win probabilities. Each score reports the fraction of checks whose discrepancy is less than $\varepsilon = 0.02$.


\subsection{Self-consistency evaluation of fantasy example}\label{subsec:app:experiment_details:fantasy}

As a second synthetic example, we consider a reasoning task in a fictional fantasy role-playing game. The model is asked to estimate the probability that the main character wins an encounter against an opponent using the prompt template in \cref{subsec:app:sec:prompt_template:fantasy}.
Unlike the survey setting, and unlike the tennis example, this task need not correspond to any real-world data-generating process. The game world is imaginary, and there is no external ground-truth distribution against which the model's estimates can be evaluated. This makes the example useful for highlighting a key feature of our self-consistency checks. They only require the model's own direct and conditional estimates, and therefore remain meaningful even when no reference distribution exists.
At the root node, the prompt specifies only the basic combat situation.
We then refine the scenario by revealing additional contextual information about the game world. The attribute set $\mathcal{A}$ consists of two binary conditions: the time of day, distinguishing day and night, and the wind conditions, distinguishing calm from windy combat environments.

To generate the results in \cref{table:self_consistency:fantasy}, we proceed exactly as in the tennis example (\cref{subsec:app:experiment_details:tennis}). We compute the split consistency score $\mathrm{SC}_\varepsilon(\mathcal{A})$ over the admissible checks in $\mathcal{C}_{\mathrm{SC}}(\mathcal{A})$ and the order consistency score $\mathrm{OC}_\varepsilon(\mathcal{A})$ over the four two-attribute conditioning events in $\mathcal{C}_{\mathrm{OC}}(\mathcal{A})$. Consistency requires the compared estimates to agree, even though the underlying event is fictional and no ground-truth win probability is available. 
Since the target outcome is binary, discrepancies are measured by the absolute difference between the corresponding win probabilities, and each score reports the fraction of checks whose discrepancy is less than $\varepsilon = 0.02$.


\newpage
\section{Additional plots}\label{sec:app:additional_plots}

This appendix collects larger plots with more detailed breakdowns.

\subsection{Normalized alignment error}\label{subsec:app:normalized_alignment_error}

We provide additional results on normalized alignment error to complement the main findings in \cref{sec:macro_fallacy}. While \cref{fig:aggregation_refinement:llm_prior} highlights representative settings, \cref{fig:app:normalized_alignment:models_thresholds} reports the normalized alignment error across a broader range of fixed income thresholds. These results show that the qualitative pattern from the main text is robust. Reconstructed aggregate estimates from subgroup conditionals are often better aligned with the ACS reference statistic than direct aggregate estimates, although the gains vary across thresholds, models, and tree levels.

\begin{figure}[ht]
\centering
\input{figures/3g_aggregation_multi_bars_appendix_top.pgf}
\begin{subfigure}[t]{0.48\linewidth}
\vspace*{-4mm}
\caption{
Fixed income threshold $\tau = 1\mathrm{k}$ USD.
}
\label{subfig:aggregation_refinement:llm:1k}
\end{subfigure}
\hfill
\begin{subfigure}[t]{0.48\linewidth}
\vspace*{-4mm}
\caption{
Fixed income threshold $\tau = 10\mathrm{k}$ USD.
}
\label{subfig:aggregation_relative_models:llm:10k}
\end{subfigure}
\vspace*{2mm}

\input{figures/3g_aggregation_multi_bars_appendix_bottom.pgf}
\begin{subfigure}[t]{0.48\linewidth}
\vspace*{-4mm}
\caption{
Fixed income threshold $\tau = 50\mathrm{k}$ USD.
}
\label{subfig:aggregation_refinement:llm:50k}
\end{subfigure}
\hfill
\begin{subfigure}[t]{0.48\linewidth}
\vspace*{-4mm}
\caption{
Fixed income threshold $\tau = 80\mathrm{k}$ USD.
}
\label{subfig:aggregation_relative_models:llm:80k}
\end{subfigure}
\caption{
\textbf{Aggregation gain across models and tree levels.}
We report the aggregation gain $1 - \text{AErr}(\ell)/\text{AErr}(0)$ with $\mathrm{AErr}(\ell)$ defined in \eqref{eq:agg_error}. Positive values indicate that reconstructing the population-level estimate from subgroup estimates improves alignment relative to direct prompting, while negative values indicate worse alignment. Each curve corresponds to a different model. Error bars show 90\,\% confidence intervals computed by bootstrapping with 1000 samples. The orange curve shows the average aggregation gain.~\looseness=-1
}
\label{fig:app:normalized_alignment:models_thresholds}
\end{figure}

\subsection{Complete win-matrix analysis for micro-to-macro prompting}

\Cref{fig:app:micro_to_macro:GOQA:win_rates} reports the complete win-matrix analysis for micro-to-macro prompting on GlobalOpinionQA across all evaluated models.

\begin{figure}[ht]
\centering
\input{figures/9_win_rates.pgf}
\caption{
\textbf{Win matrices for micro-to-macro prompting on GlobalOpinionQA.}
Complete win-matrix analysis across all evaluated models, extending the GPT-5.4 example shown in \cref{subfig:app:micro_to_macro:aggregate:GOQA_GPT_win_matrix}.
Each subplot visualizes the win matrix for a fixed model.
Rows within the subplot correspond to selected binary GlobalOpinionQA questions, and columns correspond to selected countries.
Each cell is colored green when micro-to-macro prompting yields lower absolute error than direct prompting, and red otherwise.
The percentage above each matrix reports the corresponding win rate, defined as the fraction of question--country pairs for which micro-to-macro prompting improves over direct prompting.
Models with win rate above $50\,\%$ are shown in the top row, and models with win rate below $50\,\%$ are shown in the bottom row.
}
\label{fig:app:micro_to_macro:GOQA:win_rates}
\end{figure}

\newpage
\subsection{Reconstructed aggregate distributions across tree levels}

\Cref{fig:acs:income_distribution_tree:combined:aggr_only} visualizes the population-level ACS income distributions reconstructed from each tree layer using LLM-estimated subgroup priors.

\begin{figure}[ht]
\centering
\begin{tikzpicture}[
  font=\fontsize{7}{8}\selectfont,
  grow=right,
  level distance=40.5mm,
  edge from parent/.style={
    draw=black!75,
    line width=1.6pt,
    -{Stealth[length=2mm, width=2mm]},
  },
  aggTreenode/.style={
    draw=black!65,
    rounded corners=6pt,
    line width=1.6pt,
    fill=white,
    align=center,
    inner xsep=6pt,
    inner ysep=6pt,
    text width=29mm
  },
]
\begin{scope}


\node[aggTreenode, label={\bf Level 0}] {   \centering
\begingroup
\tikzset{
  every path/.style={},
  every node/.style={},
  every picture/.style={},
}%
\pgfsetcornersarced{\pgfpointorigin}%
\resizebox{\linewidth}{!}{%
\input{figures/tree_aggr_only/agg_binned_hist_0.pgf}
}
\endgroup
}
  child { node[aggTreenode, label={\bf Level 1}] {         \centering
\begingroup
\tikzset{
  every path/.style={},
  every node/.style={},
  every picture/.style={},
}%
\pgfsetcornersarced{\pgfpointorigin}%
\resizebox{\linewidth}{!}{%
\input{figures/tree_aggr_only/agg_binned_hist_1.pgf}
}
\endgroup
      }
        child { node[aggTreenode, label={\bf Level 2}] {               \centering
\begingroup
\tikzset{
  every path/.style={},
  every node/.style={},
  every picture/.style={},
}%
\pgfsetcornersarced{\pgfpointorigin}%
\resizebox{\linewidth}{!}{%
\input{figures/tree_aggr_only/agg_binned_hist_2.pgf}
}
\endgroup
            } } };

\end{scope}


\definecolor{gtbin0}{RGB}{46,107,142}
\definecolor{gtbin1}{RGB}{30,152,138}
\definecolor{gtbin2}{RGB}{75,194,108}
\definecolor{gtbin3}{RGB}{189,222,38}

\node[anchor=north] at (4.1, -1.1) {
\centering
\begingroup
\tikzset{
  every path/.style={},
  every node/.style={},
  every picture/.style={},
}%
\pgfsetcornersarced{\pgfpointorigin}%
\resizebox{124mm}{!}{%
\input{figures/tree_aggr_only/horizontal_legend.pgf}
}
\endgroup
};


\end{tikzpicture}
\vspace*{-3mm}
\caption{
\textbf{Reconstructed aggregate income distributions across tree levels.}
For each level of the binary conditioning tree in \cref{fig:acs:income_distribution_tree:combined}, we reconstruct the population-level ACS income distribution by aggregating the LLM-estimated subgroup distributions with LLM-estimated subgroup priors.
Solid bars show the same survey-based ground-truth distribution of the base population in all panels, while hatched bars show the corresponding LLM-induced reconstructed aggregate.
The left panel corresponds to the direct aggregate estimate at the root, whereas the middle and right panels show aggregates reconstructed from increasingly fine-grained conditional estimates.
The comparison illustrates that the reconstructed aggregate becomes better aligned with the ground-truth distribution as we move down the tree.
}
\label{fig:acs:income_distribution_tree:combined:aggr_only}
\end{figure}


\clearpage
\section{Prompt templates}\label{sec:prompt_templates}

This appendix reports the exact prompt templates used to elicit probability estimates from language models.


\subsection{Prior elicitation}\label{subsec:app:sec:prompt_template:priors}

\begin{promptbox}{Subgroup priors}
You are given a mutually exclusive and exhaustive partition of the U.S. population into groups defined by sociodemographic attributes.

For each group below, estimate the probability that a randomly sampled person from the U.S. population belongs to that group.

{groups}

Important:
- Treat the listed groups as the full population partition.
- Assign probability mass across all groups.
- The returned probabilities must sum to 1.
- Use your best demographic knowledge of the U.S. population.

Return only a valid JSON object of the following form:
{
  "Group 1": 0.35,
  "Group 2": 0.65
}

Strict output rules:
- Do not include any text outside the JSON object.
- Do not include explanations or uncertainty intervals.
- Each probability must be a single number in [0, 1].
- The JSON object must contain exactly one entry per group.
- Use the group names exactly as given: Group 1, Group 2.
\end{promptbox}

The \texttt{groups} placeholder is replaced by a list of groups, each specified by a structured list of sociodemographic attributes that define the corresponding subgroup in the partition. For example:
\begin{codebox}
Group 1
- sex: female

Group 2
- sex: male
\end{codebox}


\subsection{Sociodemographic prompting}\label{subsec:app:sec:prompt_template:socio}

\begin{promptbox}{Thresholded income task}
You are given a brief sociodemographic profile of an individual. Based on the information provided and your general knowledge of income distributions in the United States, estimate the probability of the specified outcome. The data provided is sufficient to make an approximate probabilistic estimate.

Individual attributes:
{attributes}

Question:
What is the estimated probability that this person's annual income exceeds {income_threshold} USD?

Output requirements (strict):
- Return a single real number between 0 and 1 (inclusive).
- Format the answer exactly as: [[p]] where p is your probability estimate (for example: [[0.5]]).
\end{promptbox}

In the sociodemographic formulation, the \texttt{attributes} placeholder is replaced by a structured list of demographic attributes describing the target individual. For example:
\begin{itemize}
\item Sex: Female
\item Employment status: Civilian employed, at work
\end{itemize}
This formulation describes the conditioning information in the third person and avoids asking the model to adopt a role-playing perspective.


\subsection{Persona prompting}\label{subsec:app:sec:prompt_template:persona}

\begin{promptbox}{Thresholded income task}
Please answer the following question from the perspective of the described person. Stay consistent with the profile, but do not mention that you are role-playing unless asked.

Persona:
{persona}

Question:
What is the estimated probability that this person's annual income exceeds {income_threshold} USD?

Output requirements (strict):
- Return a single real number between 0 and 1 (inclusive).
- Format the answer exactly as: [[p]] where p is your probability estimate (for example: [[0.5]]).
\end{promptbox}

In the persona formulation, the \texttt{persona} placeholder is replaced by a free-text description that conveys the same demographic information in a first-person role-playing style. For example, the profile may read: ``You are a person who lives in the United States. Your sex is Female.''


\subsection{Micro-to-macro prompting}\label{subsec:app:sec:prompt_template:mtm_prompt}

\begin{promptbox}{Thresholded income task}
Based on your general knowledge of income distributions in the United States, estimate the probability of the specified outcome. Please make an approximate probabilistic estimate.

Question:
What is the estimated probability that a randomly selected person's annual income exceeds {income_threshold} USD?

Before giving your final answer, reason from micro to macro:
1. Identify several relevant subpopulations within the U.S. population that may differ in income.
2. For each subpopulation, consider both its approximate prevalence in the U.S. population and its likely probability of exceeding the income threshold.
3. Combine these subgroup-level considerations into a single aggregate estimate for the U.S. population.

Output requirements (strict):
- Return a single real number between 0 and 1 (inclusive).
- Format the answer exactly as: [[p]] where p is your probability estimate (for example: [[0.5]]).
\end{promptbox}

This prompt asks the model to estimate a population-level income probability by explicitly reasoning through relevant {U.S.} subpopulations, their prevalence, and their subgroup-specific probabilities. Unlike our tree-based aggregation prompts, the relevant subpopulations are not specified in advance but are instead chosen by the model.


\subsection{Opinion distribution elicitation}\label{subsec:app:sec:prompt_template:wvs_distribution}

\begin{promptbox}{Survey-response distribution prompt}
You are estimating the distribution of survey responses for a demographic subpopulation. The subpopulation is characterized by the following attributes:

{attribute_list}

Survey question:
{question}

Answer options:
{answer_option_list}

Task:
Estimate the probability distribution over the answer options for a randomly selected individual from this subpopulation.

Strict output rules:
- Return only valid JSON.
- Use exactly the following keys: {answer_option_keys}.
- Each value must be a single number in [0, 1].
- The values must sum to 1.
- Use decimal probabilities, not percentages.
- Do not include explanations, comments, uncertainty intervals, or any text outside the JSON object.

Return only a valid JSON object of the following form. The numbers below are illustrative only and must be replaced by your estimated probabilities:
{example_output}
\end{promptbox}

For the WVS opinion-distribution experiments, the \texttt{attribute\_list} placeholder is replaced by the demographic attributes defining the target subpopulation, \texttt{question} is replaced by the survey question text, and \texttt{answer\_option\_list} lists the available response options. The model is instructed to return a complete categorical probability distribution over the answer options as a JSON object. The placeholders \texttt{answer\_option\_keys} and \texttt{example\_output} ensure that the returned object has a fixed schema across models and prompts.


\subsection{Tennis forecasting task}\label{subsec:app:sec:prompt_template:tennis}

\begin{promptbox}{Tennis match forecasting prompt}
You are estimating the outcome of a tennis match.

Match:
Roger Federer plays against Rafael Nadal.

The match context is characterized by the following attributes:
{attribute_list}

Forecasting question:
What is the probability that Roger Federer wins the match?

Task:
Estimate the probability that Roger Federer wins the match under the specified context.

Strict output rules:
- Return a single real number between 0 and 1, inclusive.
- Use decimal probabilities, not percentages.
- Format the answer exactly as: [[p]], where p is your probability estimate.
- Do not include explanations, comments, uncertainty intervals, or any text outside the required format.

Return only an answer of the following form. The number below is illustrative only and must be replaced by your estimated probability: [[0.5]]
\end{promptbox}


\subsection{Fictional fantasy combat task}\label{subsec:app:sec:prompt_template:fantasy}

\begin{promptbox}{Fantasy combat forecasting prompt}
You are estimating the outcome of a fantasy combat encounter.

A main character is engaged in combat against an enemy in an imaginary fantasy setting. The world has a day-night cycle and varying weather conditions.

The current encounter context is characterized by the following attributes:
{attribute_list}

Forecasting question:
What is the probability that the main character wins the encounter?

Task:
Estimate the probability that the main character wins the encounter under the specified context.

Strict output rules:
- Return a single real number between 0 and 1, inclusive.
- Use decimal probabilities, not percentages.
- Format the answer exactly as: [[p]], where p is your probability estimate.
- Do not include explanations, comments, uncertainty intervals, or any text outside the required format.

Return only an answer of the following form. The number below is illustrative only and must be replaced by your estimated probability: [[0.5]]
\end{promptbox}


\newpage
\section{ACS attributes}\label{sec:acs_attributes}

\Cref{tab:acs_attributes} summarizes the ACS attributes used to define the conditioning variables in our experiments.

\begin{table}[ht]
\centering
\renewcommand{\arraystretch}{1.4}
\setlength{\tabcolsep}{4pt}
\begin{tabular}{l p{0.21\linewidth} p{0.62\textwidth}}
\toprule
\textbf{Attribute} & \textbf{Description} & \textbf{Values}\\
\midrule
\texttt{AGEP} & age & 0 -- 96 years old\\
\texttt{COW} & class of worker &
1: ``Working for a for-profit private company or organization''\newline
2: ``Working for a non-profit organization''\newline
3: ``Working for the local government''\newline
4: ``Working for the state government''\newline
5: ``Working for the federal government''\newline
6: ``Owner of non-incorporated business, professional practice, or farm''\newline
7: ``Owner of incorporated business, professional practice, or farm''\newline
8: ``Working without pay in a for-profit family business or farm''\newline
9: ``Unemployed and last worked 5 years ago or earlier or never worked''\\
\texttt{ESR} & employment status & 
1: ``Civilian employed, at work''\newline
2: ``Civilian employed, with a job but not at work''\newline
3: ``Unemployed''\newline
4: ``Armed forces, at work''\newline
5: ``Armed forces, with a job but not at work''\newline
6: ``Not in labor force''\\
\texttt{MAR} & marital status & 
1: ``Married''\newline
2: ``Widowed''\newline
3: ``Divorced''\newline
4: ``Separated''\newline
5: ``Never married''\\
\texttt{OCCP} & occupation & list of 530 occupations\\
\texttt{WKHP} & usual number of hours worked per week & 1 -- 99 hours\\
\bottomrule
\end{tabular}
\vspace*{2mm}
\caption{
\textbf{ACS attributes used for demographic conditioning.} 
We list the ACS variables used to construct the conditioning trees in our experiments, together with their descriptions and possible values. Continuous variables are reported by their observed range, while categorical variables are reported using their ACS response codes. For the demographic splits, we augment each value list with a \texttt{n/a} class to represent missing values in the ACS survey data.}
\label{tab:acs_attributes}
\end{table}


\newpage
\section{WVS questions}\label{sec:wvs_questions}

For completeness, \cref{tab:wvs_questions} reports the World Values Survey questions used in our experiments, including their original WVS identifiers, question wording, and ordinal answer scales.

\begin{table}[ht]
\centering
\renewcommand{\arraystretch}{1.4}
\setlength{\tabcolsep}{4pt}
\begin{tabular}{c c p{0.4\textwidth} p{0.4\textwidth}}
\toprule
\textbf{ID} & \textbf{WVS-ID} & \textbf{Question} & \textbf{Answer options}\\
\midrule
$Q_a$ &
Q162 &
Now, I would like to read some statements and ask how much you agree or disagree with each of these statements. For these questions, a 1 means that you ``completely disagree'' and a 10 means that you ``completely agree'': It is not important for me to know about science in my daily life &
[1: Completely disagree, 2, 3, 4, 5, 6, 7, 8, 9, 10: Completely agree]\\
$Q_b$ &
Q177 &
Please tell me for each of the following statements whether you think it can always be justified, never be justified, or something in between, using this card. - Claiming government benefits to which you are not entitled &
[1: Never justifiable, 2, 3, 4, 5, 6, 7, 8, 9, 10: Always justifiable]\\
$Q_c$ &
Q186 &
Please tell me for each of the following statements whether you think it can always be justified, never be justified, or something in between, using this card. Sex before marriage &
[1: Never justifiable, 2, 3, 4, 5, 6, 7, 8, 9, 10: Always justifiable]\\
$Q_d$ &
Q245 &
Please tell me for each of the following things how essential you think it is as a characteristic of democracy. Use this scale where 1 means ``not at all an essential characteristic of democracy'' and 10 means it definitely is ``an essential characteristic of democracy.'' - The army takes over when government is incompetent &
[1: Not an essential characteristic of democracy, 2, 3, 4, 5, 6, 7, 8, 9, 10: An essential characteristic of democracy]\\
$Q_e$ &
Q252 &
How satisfied are you with how the political system is functioning in your country these days? On the scale from 1 to 10 below, 1 means not satisfied at all and 10 means completely satisfied. &
[1: Not satisfied at all, 2, 3, 4, 5, 6, 7, 8, 9, 10: Completely satisfied]\\
\bottomrule
\end{tabular}
\vspace*{2mm}
\caption{
\textbf{Selected World Values Survey questions.}
We report the five WVS questions used in our experiments, together with their original WVS identifiers, question text, and ordinal answer scales.
}
\label{tab:wvs_questions}
\end{table}


\newpage
\section{GlobalOpinionQA questions}\label{sec:goqa_questions}

\Cref{tab:goqa_table} lists the binary GlobalOpinionQA \citep{durmus2024measuringrepresentationsubjectiveglobal} questions used in some of the micro-to-macro experiments.

\begin{table}[ht]
\centering
\renewcommand{\arraystretch}{1.4}
\setlength{\tabcolsep}{6pt}
\begin{tabular}{l p{0.52\linewidth} p{0.34\textwidth}}
\toprule
\textbf{ID} & \textbf{Question} & \textbf{Answer options}\\
\midrule
Q55 & Please tell me whether you think the following statements apply to the United Nations or not. The United Nations promotes economic development. & [Yes, No, DK/Refused]\\
Q136 & Thinking about the public as a whole, do you think this country is now more UNITED or more DIVIDED than before the coronavirus outbreak? & [More united, More divided,\newline DK/Refused]\\
Q573 & Please tell me whether you think the following statements apply to the United Nations or not. The United Nations deals effectively with international problems. & [Yes, No, DK/Refused]\\
Q839 & Please tell me whether you think the following statements apply to the United Nations or not. The United Nations promotes human rights. & [Yes, No, DK/Refused]\\
Q1108 & Do you think the government of China respects the personal freedoms of its people or don’t you think so? & [Yes, No, DK/Refused]\\
Q1445 & Please tell me whether you think the following statements apply to the United Nations or not. The United Nations promotes action on infectious diseases, like coronavirus. & [Yes, No, DK/Refused]\\
Q1512 & Please tell me whether you think the following statements apply to the United Nations or not. The United Nations promotes peace. & [Yes, No, DK/Refused]\\
\bottomrule
\end{tabular}
\vspace*{2mm}
\caption{
\textbf{Selected GlobalOpinionQA questions.}
We report the GlobalOpinionQA questions used in our experiments, together with their answer options. The question IDs are non-standard identifiers generated by indexing rows in the GlobalOpinionQA dataset, rather than original survey question IDs.
}
\label{tab:goqa_table}
\end{table}


\end{document}